\documentclass[table]{ai2style/ai2}

\usepackage{microtype}
\usepackage{hyperref}
\usepackage{url}
\usepackage{booktabs, tabularx} 
\usepackage{graphicx}
\usepackage{lineno}
\usepackage{enumitem}
\usepackage{listings} 
\usepackage{svg}
\usepackage{float}

\usepackage{tikz}
\usetikzlibrary{decorations.pathreplacing}
\usetikzlibrary{calc}

\definecolor{ darkblue}{rgb}{0, 0, 0.5}
\hypersetup{colorlinks=true, citecolor=darkblue, linkcolor=darkblue, urlcolor=darkblue}

\usepackage{amssymb}
\usepackage{multirow}
\usepackage{bigdelim}
\usepackage{longtable}
\usepackage{tabularray}
\usepackage{wrapfig}
\usepackage[most]{tcolorbox}
\usepackage{url}
\usepackage{xspace}
\usepackage{svg}
\usepackage[absolute]{textpos} 

\usepackage{array}
\usepackage{ragged2e} 

\newcolumntype{L}[1]{>{\RaggedRight\arraybackslash}p{#1}}
\newcolumntype{C}[1]{>{\Centering\arraybackslash}p{#1}}

\usepackage{fdsymbol}   

\usepackage[utf8]{inputenc} 
\usepackage[T1]{fontenc}    
\usepackage{url}            
\usepackage{booktabs}       
\usepackage{amsfonts}       
\usepackage{nicefrac}       
\usepackage{microtype}      
\usepackage[table,xcdraw]{xcolor}         
\usepackage{amsmath}
\usepackage[most]{tcolorbox}
\usepackage{csquotes}

\usepackage{siunitx}
\usepackage{graphicx}
\usepackage{arydshln}
\usepackage{wrapfig}
\usepackage{enumitem}
\usepackage{soul} 

\usepackage{multirow}
\usepackage{xspace}
\usepackage{adjustbox}
\usepackage{pifont}
\usepackage{caption}
\usepackage{makecell}
\usepackage{subcaption}
\usepackage{bold-extra}
\usepackage{url}

\usepackage{amsthm}
\usepackage{thmtools}
\usepackage{thm-restate} 

\usepackage{pgf-pie}

\usepackage{algorithm}
\usepackage{algorithmic}

\usepackage{hyperref}
\definecolor{linkcolor}{RGB}{0, 0, 128}
\hypersetup{
     colorlinks   = true,
     citecolor    = linkcolor,
     linkcolor    = linkcolor,
     urlcolor     = linkcolor,
}
\usepackage{pifont}
\usepackage{listings}

\setlist[itemize]{leftmargin=*,itemsep=0em,parsep=0.3em,topsep=0.3em}

\DeclareUnicodeCharacter{2212}{\ensuremath{-}}

\definecolor{maroon}{HTML}{F26035}
\definecolor{yellow}{HTML}{FDBC42}
\definecolor{lavender}{HTML}{734f96}
\definecolor{darkergrey}{HTML}{444444}
\definecolor{midgrey}{HTML}{e6eded}
\definecolor{ai2pink}{HTML}{f0529c}
\definecolor{ai2midpink}{HTML}{fad3e5}
\definecolor{ai2lightpink}{HTML}{fbecf3}
\definecolor{ai2midwhite}{HTML}{f2e5d9}
\definecolor{ai2offwhite}{HTML}{fbf4ee}
\definecolor{ai2green}{HTML}{0fcb8c}
\definecolor{ai2lightgreen}{HTML}{e7f9f3}
\definecolor{ai2darkgreen}{HTML}{105257}
\definecolor{ai2purple}{HTML}{B932EB}
\definecolor{ai2lightpurple}{HTML}{f7e8fc}
\definecolor{neutralEight}{HTML}{343434}
\definecolor{neutralFive}{HTML}{838383}
\definecolor{neutralThree}{HTML}{bebebe}
\definecolor{neutralOne}{HTML}{dedede}
\definecolor{lightgrey}{HTML}{fafcfc}
\definecolor{plum}{rgb}{0.56,0.27,0.52}
\definecolor{ai2lightteal}{HTML}{C9D9DA}

\usepackage{tikz}

\definecolor{maroon}{HTML}{F26035}
\definecolor{yellow}{HTML}{FDBC42}
\definecolor{darkred}{RGB}{156, 39, 33}
\definecolor{darkblue}{RGB}{31, 90, 153}
\definecolor{forestgreen}{rgb}{0.13, 0.55, 0.13}
\definecolor{brickred}{rgb}{0.8, 0.25, 0.33}
\definecolor{olmoDarkBlue}{HTML}{012e59}
\definecolor{olmoBlue}{HTML}{265ed4}
\definecolor{olmoLightBlue}{HTML}{012e59}
\definecolor{olmoTeal}{HTML}{00d5ff}
\definecolor{olmoYellow}{HTML}{ffbb00}
\definecolor{olmoOrange}{HTML}{ff9100}

\usepackage{setspace}

\usepackage{nicematrix}
\newcolumntype{L}[1]{>{\raggedright\let\newline\\\arraybackslash\hspace{0pt}}m{#1}}
\newcolumntype{C}[1]{>{\centering\let\newline\\\arraybackslash\hspace{0pt}}m{#1}}
\newcolumntype{R}[1]{>{\raggedleft\let\newline\\\arraybackslash\hspace{0pt}}m{#1}}
\newcolumntype{P}[1]{>{\centering\let\newline\\\arraybackslash\columncolor{ai2lightpink}}m{#1}}
\newcommand{\allenAiAff}{\raisebox{.28em}{\hspace{.02em}\scalebox{0.7}{\textbf{1}}}}

\newcommand{\uwAff}{\raisebox{.28em}{\hspace{.02em}\scalebox{0.7}{\textbf{3}}}}
\newcommand{\stanAff}{\raisebox{.28em}{\hspace{.02em}\scalebox{0.7}{\textbf{2}}}}
\newcommand{\commaAff}{\raisebox{.28em}{\hspace{.02em}\scalebox{0.7}{\textbf{,}\hspace{0.1em}}}}

\newcommand{\huggingface}{\raisebox{-1.5pt}{\includegraphics[height=1.05em]{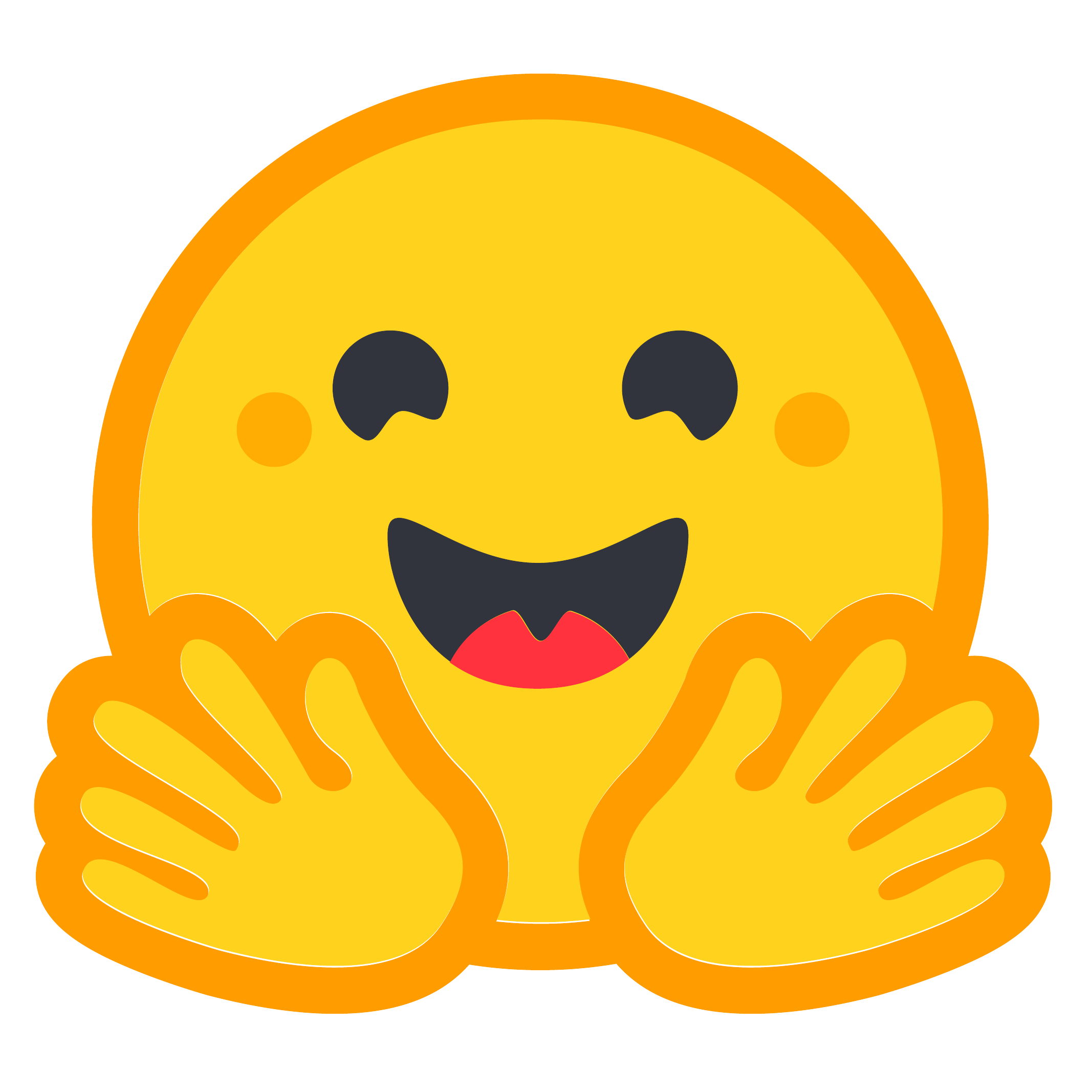}}\xspace}

\newcommand{\emailLogo}{\raisebox{-1.5pt}{\includegraphics[height=1.05em]{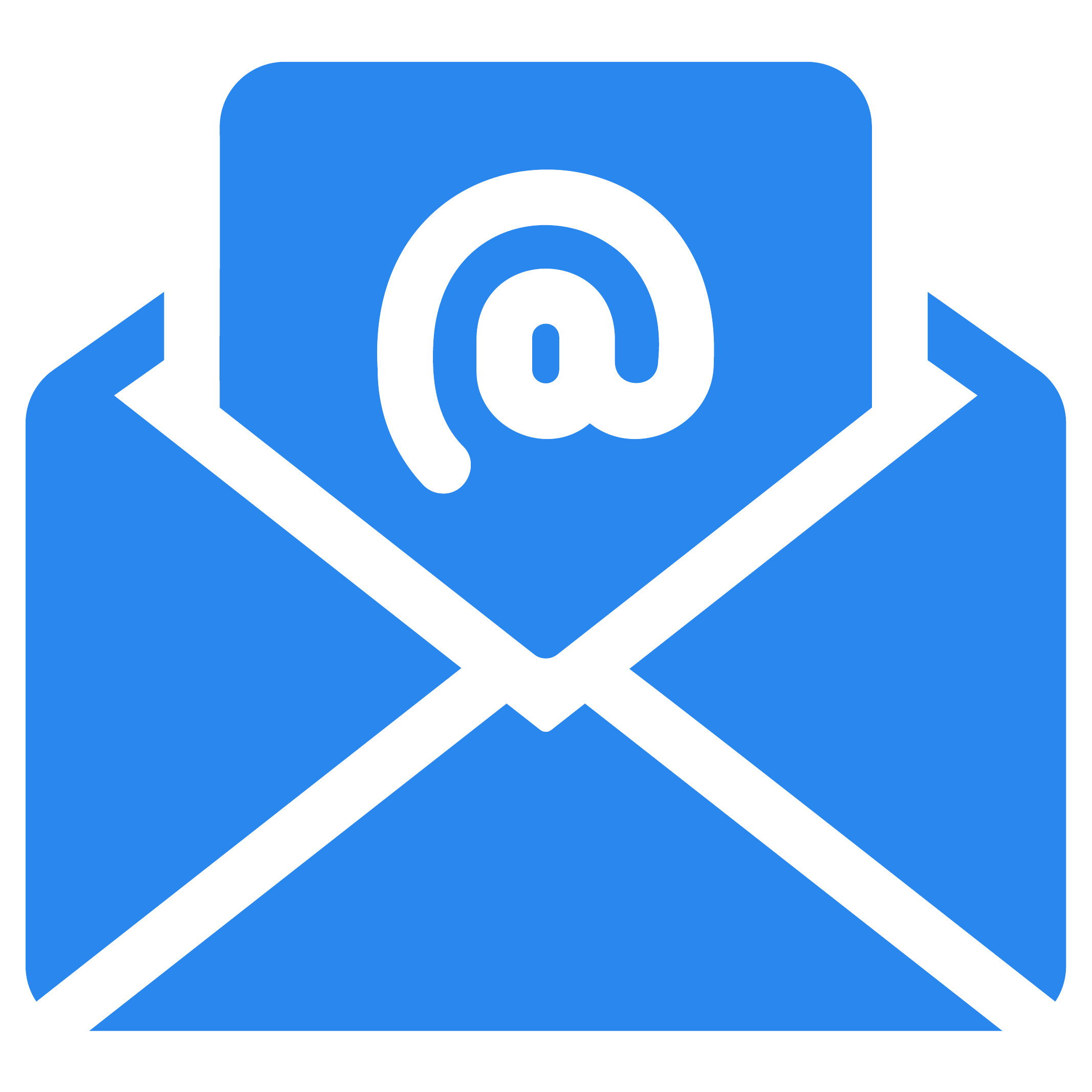}}\xspace}
\newcommand{\github}{\raisebox{-1.5pt}{\includegraphics[height=1.05em]{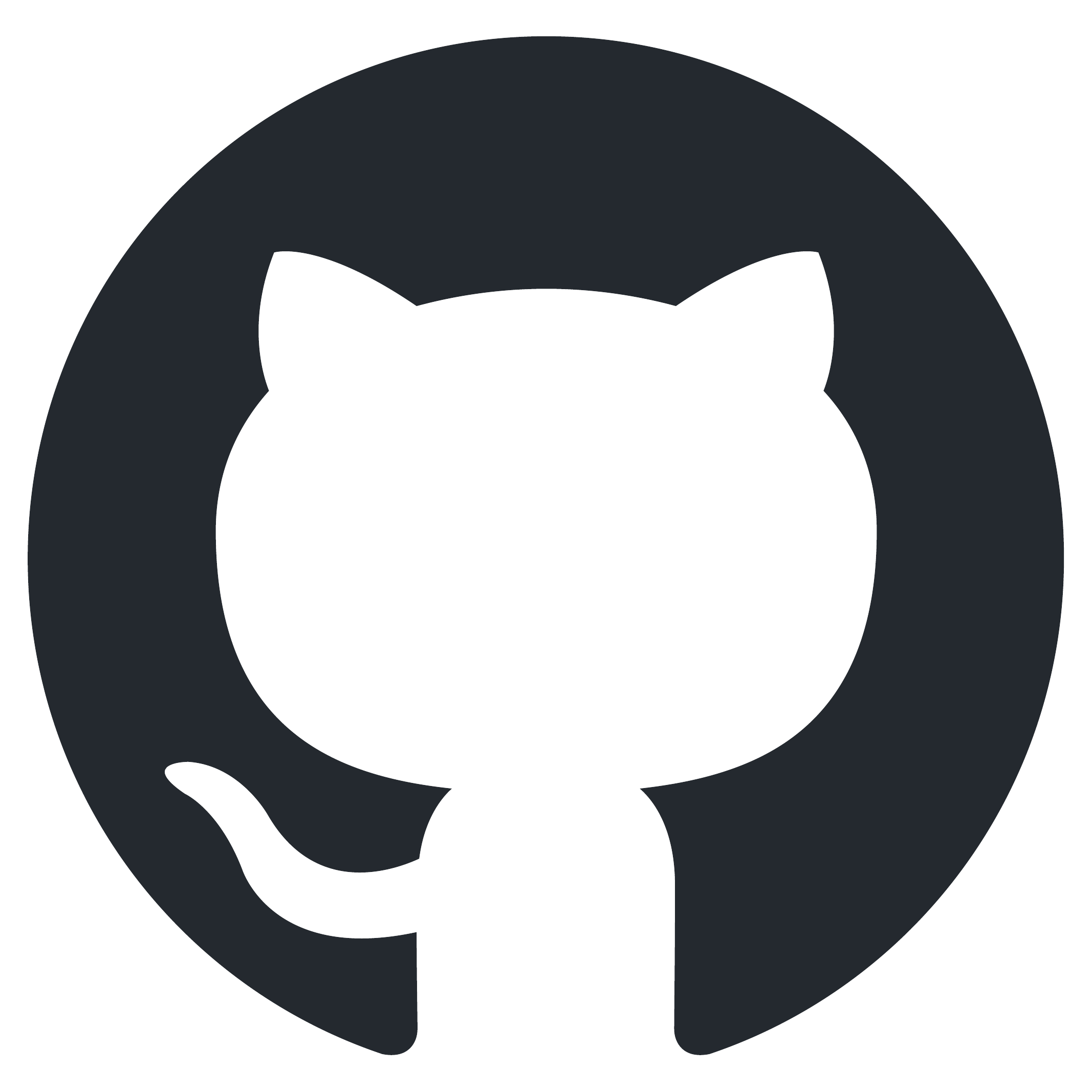}}\xspace}

\newcommand{\sbf}[1]{{\footnotesize\bfseries #1}}

\title{Olmix: A Framework for Data Mixing Throughout LM Development} 

\authorOne[\allenAiAff\commaAff\stanAff]{Mayee F. Chen}
\authorOne[\allenAiAff]{Tyler Murray}
\authorOne[\allenAiAff]{David Heineman}
\authorOne[\allenAiAff]{Matt Jordan}
\authorOne[\allenAiAff\commaAff\uwAff]{Hannaneh Hajishirzi}
\authorOne[\stanAff]{Christopher R\'e}
\authorOne[\allenAiAff]{Luca Soldaini}
\authorOne[\allenAiAff\commaAff\uwAff]{Kyle Lo}

\affiliation[\allenAiAff]{Allen Institute for AI}
\affiliation[\stanAff]{Stanford University}
\affiliation[\uwAff]{University of Washington}

\abstract{
Data mixing---determining the ratios of data from different domains---is a first-order concern for training language models (LMs). While existing mixing methods show promise, they fall short when applied during real-world LM development. We present \olmix, a framework that addresses two such challenges. First, the configuration space for developing a mixing method is not well understood---design choices across existing methods lack justification or consensus and overlook practical issues like data constraints. We conduct a comprehensive empirical study of this space, identifying which design choices lead to a strong mixing method. Second, in practice, the domain set evolves throughout LM development as datasets are added, removed, partitioned, and revised---a problem setting largely unaddressed by existing works, which assume fixed domains. We study how to efficiently recompute the mixture after the domain set is updated, leveraging information from past mixtures. We introduce mixture reuse, a mechanism that reuses existing ratios and recomputes ratios only for domains affected by the update. Over a sequence of five domain-set updates mirroring real-world LM development, mixture reuse matches the performance of fully recomputing the mix after each update with 74\% less compute and improves over training without mixing by 11.6\% on downstream tasks.
}

\metadata[\github Code:]{\href{https://github.com/allenai/olmix}{\texttt{Olmix}}}
\metadata[\huggingface Data:]{\href{https://huggingface.co/datasets/allenai/olmix}{\texttt{Olmix}}}
\metadata[\emailLogo Contact:]{\color{ai2accent}{\texttt{mfchen@cs.stanford.edu \quad \{lucas,kylel\}@allenai.org}}}

\newtheorem{theorem}{Theorem}
\newtheorem{lemma}{Lemma}

\newtheorem{assumption}{Assumption}

\usepackage{xspace}

\newcommand{\D}[0]{\mathcal{D}}

\newcommand{\R}[0]{\mathbb{R}}

\newif\ifsinglecolumn

\newcommand{\F}[0]{\mathcal{F}}

\definecolor{darkgreen}{RGB}{0, 100, 0}
\definecolor{darkred}{RGB}{139, 0, 0}

\newcommand\SmallMatrix[1]{{%
\tiny\arraycolsep=0.3\arraycolsep\ensuremath{\begin{bmatrix}#1\end{bmatrix}}}}

\newcommand{\olmix}{\textsc{Olmix}\xspace}

\newcommand{\LM}[0]{\text{LM}}
\newcommand{\Ssmall}[0]{S_{\text{small}}}
\newcommand{\Rsmall}[0]{R_{\text{small}}}

\newcommand{\kl}[0]{D_{\text{KL}}}

\newcommand{\Dfix}[0]{\mathcal{D}_{\text{fix}}}
\newcommand{\Dcomp}[0]{\mathcal{D}_{\text{comp}}}

\newcommand{\alphafix}[0]{\alpha_{\text{fix}}}
\newcommand{\alphacomp}[0]{\alpha_{\text{comp}}}

\newcommand{\Dpartial}[0]{\mathcal{D}_{\text{partial}}}

\newcommand{\base}[0]{\textsc{OlmixBase}\xspace}

\newcommand{\fix}[0]{\text{fix}}
\newcommand{\comp}[0]{\text{comp}}

\newcommand{\fullmixreuse}[0]{\textsc{FullMixtureReuse}\xspace}
\newcommand{\partialmixreuse}[0]{\textsc{PartialMixtureReuse}\xspace}

\newcommand{\Add}[0]{\texttt{Add}\xspace}
\newcommand{\Remove}[0]{\texttt{Remove}\xspace}
\newcommand{\Partition}[0]{\texttt{Partition}\xspace}
\newcommand{\Revise}[0]{\texttt{Revise}\xspace}

\newcommand{\tv}[0]{\text{TV}}

\begin{document}

\maketitle

\section{Introduction}

\begin{figure}[h]
    \centering
    \includegraphics[width=1.0\linewidth, trim=0 0 0 1cm, clip]{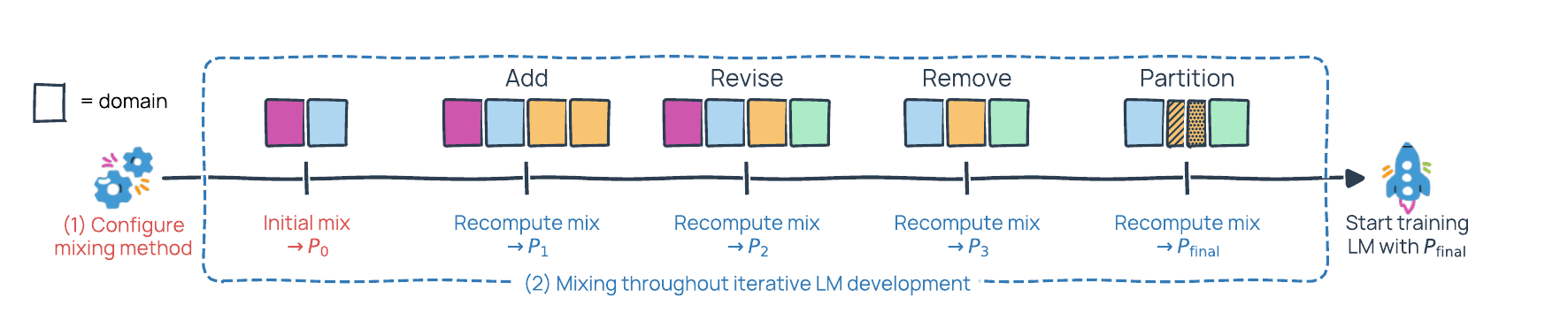}
    \caption{Two problems with data mixing encountered during LM development: (1) How to best configure your mixing method? (2) How to efficiently mix under evolving domain sets?}
    \label{fig:banner}
\end{figure}

Modern language models (LMs) are trained on datasets composed of many domains, such as web text, code, and PDFs. The composition of these domains is crucial for strong downstream performance, making data mixing a first-order component of LM development~\citep{dubey2024llama, qwen2.5, olmo2025olmo3}. 
However, finding a good mix is non-trivial: practitioners often resort to manual weight tuning or exhaustive search, which can require many training runs---possibly thousands of GPU hours---to assess performance.
This has resulted in a growing literature on data mixing methods that aim to find strong mixtures systematically with less compute~\citep{xie2023doremioptimizingdatamixtures, fan2024dogedomainreweightinggeneralization, chen2025aioliunifiedoptimizationframework}. 
Many mixing methods that achieve promising results follow a common \textit{offline mixing schema}~\citep{liu2025rethinkingdatamixturelarge, liu2025regmixdatamixtureregression, ye2025datamixinglawsoptimizing} that consists of three steps: 1. train a set of smaller proxy models on different mixtures (a ``swarm''), 2. fit a regression model on this swarm to predict performance from mixture weights, and 3. propose a final mix that optimizes predicted performance. 

However, data mixing remains far from solved: translating these methods to real-world LM development surfaces challenges that the existing literature does not address.
In this work, we present \olmix, a mixing framework developed for pre-training Olmo~3~\citep{olmo2025olmo3}, a family of fully open, state-of-the-art language and reasoning models at the 7B and 32B parameter scales.
We identify two challenges in applying data mixing methods both at the \emph{start of} LM development--- configuring a mixing method on an initial domain set---and \emph{throughout} development---recomputing mixtures efficiently as domains evolve (Figure~\ref{fig:banner}). \olmix makes progress on addressing both.

\textbf{P1: The mixing configuration problem (\S\ref{sec:config-choices}).} At the start of LM development, we must configure a mixing method on an initial set of domains. However, the configuration space of existing methods is poorly understood; design choices are often unjustified, contradict across methods, or fail to address practical issues like data constraints (Table~\ref{tab:mixing_design_choices}). 
As a result, practitioners have conflicting or little guidance on which choices lead to a strong mixing method.

The first component of \olmix addresses P1 through a comprehensive study of the configuration space of the offline mixing schema. 
We identify seven key design choices needed to instantiate a mixing method from the schema, and we empirically study each design choice in the context of pre-training Olmo 3. Below, we highlight several of these design choices and our findings:
\begin{enumerate}[itemsep=0pt, topsep=2pt, leftmargin=12pt]
    \item \emph{What is the minimum number of proxy runs (swarm size) needed to learn an effective mix for a given domain set?} 
    We find that the required number of runs scales linearly with the number of domains (Figure~\ref{fig:swarm_size}).

    \item \emph{Is there an optimal family of regression models for predicting mix performance?}
    We find that different regression model families excel at different swarm sizes---a key confounding factor that explains disagreement in prior work---but that the log-linear model adapted from~\citet{ye2025datamixinglawsoptimizing} achieves the best overall downstream performance (Figure~\ref{fig:regression_models}).   
        
    \item \emph{How can we perform data mixing under finite data constraints?} Existing methods assume unlimited data, but in practice, certain mixes require sampling more data than is available, leading to excessive sample repetition that degrades performance~\citep{muennighoff2025scalingdataconstrainedlanguagemodels}. We show that incorporating constraints into the mixture optimization problem controls sample repetition while maintaining performance, and that these constraints significantly shape the proposed mix (Figure~\ref{fig:vary_rep_top_7}).    
\end{enumerate}

These findings inform our base mixing method, \base (Algorithm~\ref{alg:base_method}), which we use throughout Olmo~3 development and as a building block for addressing P2.

\textbf{P2: The evolving domain problem (\S\ref{sec:conditional_mixing}).} 
Existing mixing literature assumes a fixed domain set~\citep{albalak2024surveydataselectionlanguage}, but real-world LM development is iterative. 
In our experience building Olmo 1--3~\citep{olmo2025olmo3}, we repeatedly added, removed, revised, and partitioned datasets---a pattern also observed in other iterative development efforts, such as SmolLM 1--3~\citep{bakouch2025smollm3}.
As a result, mixtures must repeatedly be recomputed in practice as the domain set evolves over time. 
However, recomputing from scratch (e.g., using \base) after each domain update becomes increasingly expensive as practitioners make more updates to their datasets. 
Instead, we propose to leverage information from historical mixtures, motivating a new problem: \emph{given an existing mix, how do we efficiently recompute the mix to maintain strong performance after a domain update?}

The second component of \olmix addresses P2 by introducing a spectrum of recomputation strategies that trade off compute cost against performance. At one end is full recomputation---mixing from scratch after each domain update---which achieves high performance but incurs high computational cost. We make the following contributions developing \textit{mixture reuse} alternatives along this performance-cost spectrum (Figure~\ref{fig:mixreuse}):
\begin{enumerate}[itemsep=0.00pt,topsep=0pt,leftmargin=12pt]
\item We propose \fullmixreuse, a mechanism that reuses the existing mix over the domains that are unaffected by the update. In particular, this mechanism freezes the relative weights among the unaffected domains and recomputes only their total weight alongside the weights of the affected domains. This restricts recomputation (via \base) to a lower-dimensional subspace, reducing computational cost.
\item We analyze when \fullmixreuse performs as well as full recomputation. The performance gap is governed by how much the optimal ratios change after the update and the coupling between reused and recomputed domains, measured by how much they impact the same downstream tasks. 
When both effects are small, \fullmixreuse roughly matches full recomputation while being less costly.
\item We propose \partialmixreuse, an extension of \fullmixreuse that provides a middle ground between full reuse and full recomputation which selectively recomputes the mix on some unaffected domains in addition to the affected ones. This approach can reduce coupling effects and narrow the performance gap to full recomputation at a slightly higher cost than \fullmixreuse.
\end{enumerate}
 
Empirically, we evaluate \olmix---combining our configuration from P1 (\base) with our reuse mechanisms from P2 (\fullmixreuse and \partialmixreuse)---in a real-world LM development setting where the domain set evolves through 5 updates culminating in 64 domains.
When training 1B parameter models on 100B tokens, \fullmixreuse improves over the natural distribution (a baseline with ratios proportional to domain sizes) by +11.6\% and captures 95\% of full recomputation's gains while requiring 74\% fewer proxy runs.
\partialmixreuse reaches 98\% (+12.0\% improvement) while using 67\% fewer proxy runs.
In addition, our best mix obtained via mixture reuse is $3.05\times$ more data-efficient than the natural distribution, reaching the same final performance in one-third as many training steps.

\section{Related Work}

\textbf{Mixing Methods.} The offline mixing schema, also described as ``function fitting-based offline methods'' in~\citet{liu2025rethinkingdatamixturelarge} has been extensively used in existing mixing methods. In Table~\ref{tab:mixing_design_choices}, we provide an overview of several offline mixing methods and how they address the key design choices we study. Some methods use explicit parametric regression models~\citep{que2024dcptlawdomainspecificcontinual}, while others use nonparametric approaches like LightGBM and Gaussian Processes~\citep{liu2025regmixdatamixtureregression, chen2025admirebayesoptaccelerateddatamixture}. Some methods explicitly model the role of proxy model size or training budget~\citep{ge2025bimixbivariatedatamixing, ye2025datamixinglawsoptimizing, kang2025autoscalescaleawaredatamixing} while others assume direct generalization from proxy models to target models~\citep{held2025optimizingpretrainingdatamixtures}. There are also several methods that build on top of RegMix~\citep{liu2025regmixdatamixtureregression}, exploring how it can be augmented with better domains~\citep{wettig2025organizewebconstructingdomains}, iterative swarms~\citep{diao2025climbclusteringbasediterativedata}, or more features~\citep{belenki2025optimizingpretrainingdatamixtures}. Our work examines key design choices for the offline schema that underlies all these methods.

Aside from the offline schema, DoReMi~\citep{xie2023doremioptimizingdatamixtures} and DoGE~\citep{fan2024dogedomainreweightinggeneralization} determine a mixture by using one or two proxy runs that dynamically explore the mixture weight space; this is in contrast to using many proxy runs with static mixes. While these approaches have proven effective in some settings, previous analysis~\citep{chen2025aioliunifiedoptimizationframework} has suggested some suboptimality in how the dynamic update rules are constructed, and the offline schema is generally considered simpler to implement.

Online mixing methods~\citep{chen2023skillitdatadrivenskillsframework, jiang2024adaptivedataoptimizationdynamic, albalak2023efficientonlinedatamixing} adjust the mixture weights throughout the final training run. Rather than exploring the mixture space and learning from it offline, these methods do this on the fly during training. Note that our notion of dynamics, which is over the domain set and during LM development (i.e., before the final training run), is different from the dynamic aspect of online mixing methods.

Existing mixture literature largely assumes fixed domain sets. The recent work Chameleon~\citep{xie2025chameleonflexibledatamixingframework} addresses adaptability to domain changes by computing domain weights from learned embeddings using kernel ridge leverage scores, which allows direct transfer to new data without proxy retraining. Chameleon focuses on adding new domains and computes weights directly from domain embeddings without the swarm-based regression approach of offline methods. In contrast, our mixture reuse approach explicitly handles various domain update operators (add, remove, partition, and revise) and is designed to work with any method that follows the offline mixing schema. Moreover, our approach provides a theoretical and empirical analysis of when and why existing mixes can be reused.

\textbf{Data-constrained settings.} Several works have studied how to train language models under data constraints. UniMax~\citep{chung2023unimaxfairereffectivelanguage} proposes an allocation algorithm for multilingual pretraining that distributes a fixed token budget to maximize uniform coverage across languages while capping repetitions to avoid overfitting on low-resource languages. \citet{muennighoff2025scalingdataconstrainedlanguagemodels} find that up to 4 epochs of repetition yields negligible degradation and propose modified scaling laws that account for diminishing returns of repeated tokens. Our work explicitly integrates repetition constraints directly into data mixing, enabling principled allocation of limited data budgets while producing a mix that yields strong downstream performance.

\textbf{LM Data Development.} Real-world LM development involves iterative refinement of training data. Works like DCLM~\citep{dclm}, Dolma~\citep{soldaini2024dolma}, and FineWeb~\citep{penedo2024fineweb} extensively document the curation processes involved in creating high-quality pretraining corpora, including quality filtering, deduplication strategies, and careful domain selection. Continuous projects like OLMo 1-3~\citep{Groeneveld2024OLMoAT, olmo20242olmo2furious, olmo2025olmo3} and SmolLM 1-3~\citep{allal2024smollm, allal2025smollm2smolgoesbig, bakouch2025smollm3} showcase how training data evolves over time. Motivated by these works, \olmix views data mixing as an ongoing process throughout LM development.
\section{The Mixing Configuration Problem}
\label{sec:config-choices}

We present the first component of \olmix: a comprehensive empirical study of the configuration space of mixing methods. We begin by formalizing the data mixing problem and describing the offline mixing schema (\S\ref{sec:background}). Then, we enumerate the key design choices that must be made to configure an offline mixing method (\S\ref{sec:design_choices}).
Our main contribution is an empirical study of these design choices in \S\ref{sec:study}. 
These findings inform \base (\S\ref{sec:base_method}), the mixing method we used throughout Olmo 3 development.

\subsection{Background}\label{sec:background}

\subsubsection{The Data Mixing Problem}\label{sec:setup_1}

Our goal is to determine ratios over training data domains that result in strong downstream performance.

\textbf{Domain set and mixes.} Let $\D = \{D_1, D_2, \dots, D_m \}$ be a set of $m$ domains, where each domain $D_i$ has a training dataset of size $N_i$ tokens. A domain is a group of data, ranging from coarse-grained \textbf{sources} (defined by data provenance) to fine-grained \textbf{topics} (semantically coherent partitions of a source)~\citep{wettig2025organizewebconstructingdomains}.
We specify a data mixture via a probability vector $p \in \triangle^{m-1}$, such that training on $R$ total tokens uses a dataset with $p_i \cdot R$ tokens from $D_i$ for each domain $i$.

\textbf{Model and evaluation.} We train a target LM of $S$ parameters for $R$ tokens on $p$, denoted as $\LM(S, R, p)$.
We then evaluate this model on a suite of $n$ downstream tasks. 
We measure the performance $f_i(\LM(S, R, p))$ on each task $i$ in terms of bits-per-byte (BPB), the negative log likelihood of the correct answer normalized by answer length in UTF-8 bytes. \citet{heineman2025signalnoiseframeworkreducing} showed that BPB can be used for decision-making at small model scales and \citet{huang2024compression} showed that BPB sets correlate with downstream performance across capabilities and model families.

\textbf{Goal.} Given a domain set $\D$ and target model configuration of $S$ parameters and $R$ tokens, we aim to find a mixture $p^\star$ that minimizes the average BPB across all downstream tasks---that is, minimizing $\frac{1}{n} \sum_{i = 1}^n f_i(\LM(S, R, p))$.

\subsubsection{The Offline Mixing Schema}\label{sec:schema}

Many existing methods follow an offline mixing schema to propose a mix; these methods are also described as ``function fitting-based offline methods'' in~\citet{liu2025rethinkingdatamixturelarge}'s survey. We describe the three steps of this schema below and present them in Figure~\ref{fig:schema} as well. 

\textbf{Step 1: Swarm construction.} We sample the space of possible mixes by training a ``swarm'' of $K$ small proxy models of size $\Ssmall$ on $\Rsmall$ tokens, each with different mixture weights $p^1, p^2, \dots, p^K \in \triangle^{m-1}$ sampled from some distribution $\mathcal{P}$. We evaluate each proxy model on the downstream task suite to obtain $y_{ij} := f_i(\LM(\Ssmall, \Rsmall, p^j))$, the performance of the $j$th proxy model on the $i$th task, for all tasks over the entire swarm. Altogether, this creates a dataset $\{(p^j, \{y_{ij}\}_{i=1}^n)\}_{j=1}^K$ of mixture weights paired with their performance across all tasks.

\textbf{Step 2: Regression model.} We fit regression models using the above dataset to capture the relationship between the mixture weights and downstream performance. 
We learn a function $\hat{f} \in \F$, where $\hat{f}(p) \approx \frac{1}{n}\sum_{i = 1}^n f_i(\LM(\Ssmall, \Rsmall, p))$. This enables us to predict the average downstream BPB of a proxy model trained on any candidate mix $p$.

\textbf{Step 3: Mixture optimization.} We propose a mix by solving an optimization problem that uses the regression model $\hat{f}$ as a surrogate for true performance. This optimization problem takes the form $\underset{p \in \mathcal{S}}{\text{minimize}} \hat{f}(p)$, where $\mathcal{S} \subseteq \triangle^m$ denotes the feasible set, which may be the full probability simplex or a restricted subset.

\begin{figure}
    \centering
    \includegraphics[width=\linewidth]{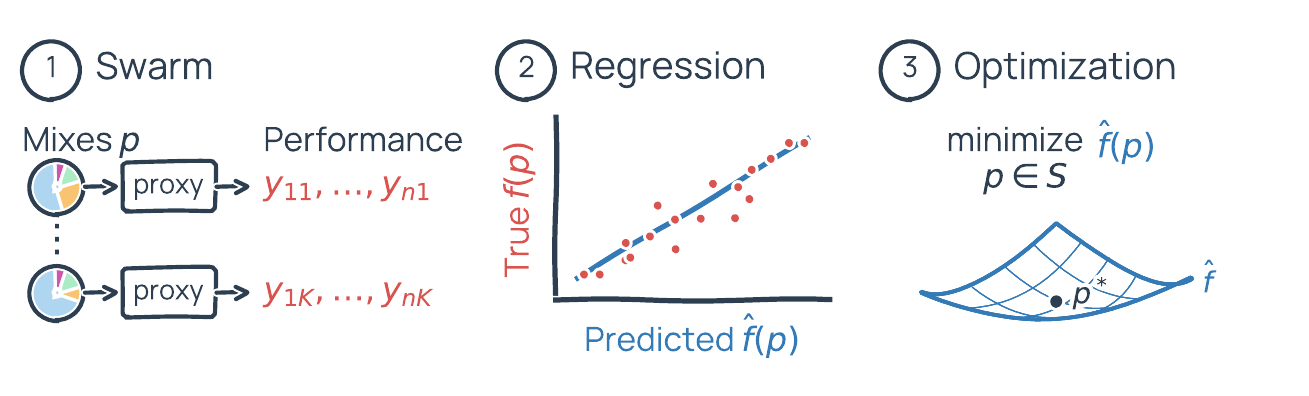}
    \caption{The offline mixing schema used by many existing methods (\S\ref{sec:schema}). We study the design choices needed to configure the schema to develop a strong mixing method (\S\ref{sec:design_choices}-\S\ref{sec:study}).}
    \label{fig:schema}
\end{figure}

\subsection{\olmix Design Choices}\label{sec:design_choices}

Key design choices of the offline mixing schema are not explicitly identified in existing literature. We articulate these design choices as a configuration checklist of research questions (RQs) that practitioners must consider when developing their own mixing methods.

\begin{tcolorbox}[colback=ai2offwhite,colframe=black,boxrule=0.5pt,arc=3pt,left=6pt,right=6pt,top=6pt,bottom=6pt]
\textbf{Swarm construction:}
\begin{itemize}[itemsep=0.00pt,topsep=0pt,leftmargin=30pt]
    \item[\emph{RQ1}] What is the smallest proxy model size (the number of parameters, $\Ssmall$) such that decision-making generalizes to larger target models?
    \item[\emph{RQ2}] How many proxy runs $K$ do we need to learn a good mix on $m$ domains?
    \item[\emph{RQ3}] How should we specify the distribution $\mathcal{P}$ to sample the mixes for the proxy runs?
\end{itemize}

\textbf{Regression model:}
\begin{itemize}[itemsep=0.00pt,topsep=0pt,leftmargin=30pt]
    \item[\emph{RQ4}] Is there an optimal family of regression models ($\mathcal{F}$) for predicting mix performance?
    \item[\emph{RQ5}] At what granularity should we fit the regression models in order to construct $\hat{f}(p)$?
\end{itemize}

\textbf{Mix optimization:}
\begin{itemize}[itemsep=0.00pt,topsep=0pt,leftmargin=30pt]
    \item[\emph{RQ6}] How do we mix under finite data constraints?
    \item[\emph{RQ7}] How do we solve the optimization problem?
\end{itemize}
\end{tcolorbox}

Table~\ref{tab:mixing_design_choices} shows how several existing methods address these design choices we identify. Shaded cells indicate design choices with reported justification while white cells represent those that lack sufficient explanation or exploration of alternatives. There are three takeaways:
\begin{itemize}[itemsep=0.00pt,topsep=0pt,leftmargin=12pt]
\item Many design choices lack guidance beyond the specific
instantiation; for instance, decisions on the swarm size
and distribution are rarely explained.
\item Even design choices that have justification lack consensus. For example, each existing method proposes a different regression model and reports strong regression fit to support it.
\item Critical aspects of LM development remain unaddressed, such as mixing under data constraints.
\end{itemize}

These gaps motivate a systematic investigation of the design choices within the offline mixing schema. 

\begin{table}[t]
    \centering
    \caption{Design choices across offline mixing methods. For each design choice and mixing method, we identify the method's configuration. We shade cells in pink if we found empirical justification for their configuration. In the final column, we present the configuration for \base.}
    \begin{footnotesize}
    \setlength{\tabcolsep}{3pt}
    \begin{tabular}
{L{2.0cm}C{1.9cm}C{1.9cm}C{1.9cm}C{1.9cm}C{1.9cm}C{1.9cm}C{1.9cm}}
        \toprule
        \sbf{Design Choice} & \sbf{RegMix}~\citep{liu2025regmixdatamixtureregression} & \sbf{DML}~\citep{ye2025datamixinglawsoptimizing} & \sbf{AutoScale}~\citep{kang2025autoscalescaleawaredatamixing} & \sbf{BiMix}~\citep{ge2025bimixbivariatedatamixing} & \sbf{ADMIRE-BayesOpt} \newline \citep{chen2025admirebayesoptaccelerateddatamixture} & \sbf{CLIMB}~\citep{diao2025climbclusteringbasediterativedata} & \sbf{\base (Algorithm~\ref{alg:base_method})} \\
        \midrule
        \rowcolor{midgrey}\multicolumn{8}{l}{\sbf{Swarm Construction}} \\
        \midrule
        Proxy model size & \cellcolor{ai2lightpink} 1M
 & \cellcolor{ai2lightpink} 70, 160, 305, 410M & Target & 280M & 1M, 60M & \cellcolor{ai2lightpink} 350M & \cellcolor{ai2lightpink} 30M \\
        Swarm size (vs $m$ domains) & 512 ($m=17$) & \cellcolor{ai2lightpink} 20 ($m=7$) & $2m+1$ & 4 & 101 ($m=17$) & \cellcolor{ai2lightpink} 112 ($m=21$) & \cellcolor{ai2lightpink} $3(m+1)$ \\
        Swarm distribution & Dirichlet with natural prior & Exponential grid & Exponential grid & Entropy-weighted & Dynamic & \cellcolor{ai2lightpink} Dirichlet with natural prior & \cellcolor{ai2lightpink} Dirichlet with natural prior (sparse for topics, dense for sources) \\
        \midrule
        \rowcolor{midgrey}\multicolumn{8}{l}{\sbf{Regression Model}} \\
        \midrule
        Regression model family & \cellcolor{ai2lightpink} LightGBM & \cellcolor{ai2lightpink} Log-Linear & \cellcolor{ai2lightpink} Power Law & \cellcolor{ai2lightpink} Power Law &  Gaussian Process & \cellcolor{ai2lightpink} LightGBM & \cellcolor{ai2lightpink} Log-Linear \\
        Regression granularity & Aggregated & Aggregated & Per-Task & Per-Task & Aggregated & Aggregated & \cellcolor{ai2lightpink} Per-Task \\
        \midrule
        \rowcolor{midgrey}\multicolumn{8}{l}{\sbf{Mixture Optimization}} \\
        \midrule
        Data repetition constraints & No & No & No & No & No & No & \cellcolor{ai2lightpink} Yes \\
        Optimization solver & Search & Search & Gradient Descent & Exact Solver & Search & Search & \cellcolor{ai2lightpink} Exact Solver with KL reg. \\
        \bottomrule
    \end{tabular}
    \end{footnotesize}
    \label{tab:mixing_design_choices}
\end{table}

\subsection{\olmix Study: Configuring a Mixing Method}\label{sec:study}

We conduct a comprehensive study of each design choice. 
We describe our experimental setup (\S~\ref{sec:study_setting}) and then investigate the design choices pertaining to each component of the offline mixing schema (\S~\ref{sec:swarm}-\ref{sec:optimization}).

\subsubsection{Experimental Setup}\label{sec:study_setting}

\textbf{Data.} We use DCLM~\citep{dclm} partitioned into 24 topic-based domains using WebOrganizer~\citep{wettig2025organizewebconstructingdomains}. See Table~\ref{tab:domain_sizes} for domain names and token counts. In Appendix~\ref{supp:study_details}, we also provide results for mixing over the final data sources of Table~\ref{tab:domain-updates-full}. 

\textbf{Model.} We train 1B parameter decoder-only transformer models using Olmo 2 architecture~\citep{olmo20242olmo2furious} to 100B tokens (5x Chinchilla). We use a batch size of 512, sequence length of 4096, and max learning rate of $0.0018$. See Appendix~\ref{supp:exp_details} for more details. 

\textbf{Evaluation.} We measure BPB (bits-per-byte) over gold responses on $52$ downstream tasks spanning math, code, and commonsense QA. 
For QA tasks, the gold continuation is the answer text; for math and code tasks it is a human-written response. 
We treat subtasks (e.g. MMLU or coding language subsets) as standalone tasks when taking an macro-average of BPB scores. See Table \ref{tab:eval_suite} for the entire list of tasks.

Unless otherwise specified, all experiments use DCLM topics as the set of domains and \base (\S~\ref{sec:base_method}, Algorithm~\ref{alg:base_method}) as the default configuration, varying only the component under study. Appendix~\ref{supp:study_details} provides full experiment details.

\subsubsection{Swarm Construction Study}\label{sec:swarm}

\textbf{RQ1: Proxy model size.} 
Proxy models must provide reliable signal that transfers to the target model's scale while ideally being small so that computational costs are low. Existing methods use sizes ranging from 1M to 400M parameters, making it unclear what size is sufficient.

We train many proxy–target model pairs, where each pair consists of a proxy model and a 1B target model trained on the same mixture ratios.
We compute the Spearman rank correlation between proxy and target model performance (average BPB) to quantify transfer; a high correlation indicates that rankings at the proxy scale predict rankings at the target scale, ensuring reliability of proxy swarm decisions. We consider proxy models of 1M, 15M, 30M, and 60M parameters with 5x Chinchilla multiplier.

Figure~\ref{fig:proxy_size} shows that proxy models with $\ge 15$M parameters achieve strong rank correlation ($\rho > 89$).
However, 1M models show significantly degraded correlation ($r = 73$), making them unreliable for guiding mixture decisions. This is in contrast to the recommendation of RegMix~\citep{liu2025regmixdatamixtureregression}\footnote{Investigating the public RegMix code, we found their 1M implementation is closer to 15M, which our results suggest is a good proxy size.}.
\textbf{For our setting, we use 30M parameter proxy models with 3B tokens (5x Chinchilla}), which achieves a Spearman correlation of 89.6 with 1B target models.

\begin{figure}
    \centering
    \includegraphics[width=0.5\linewidth]{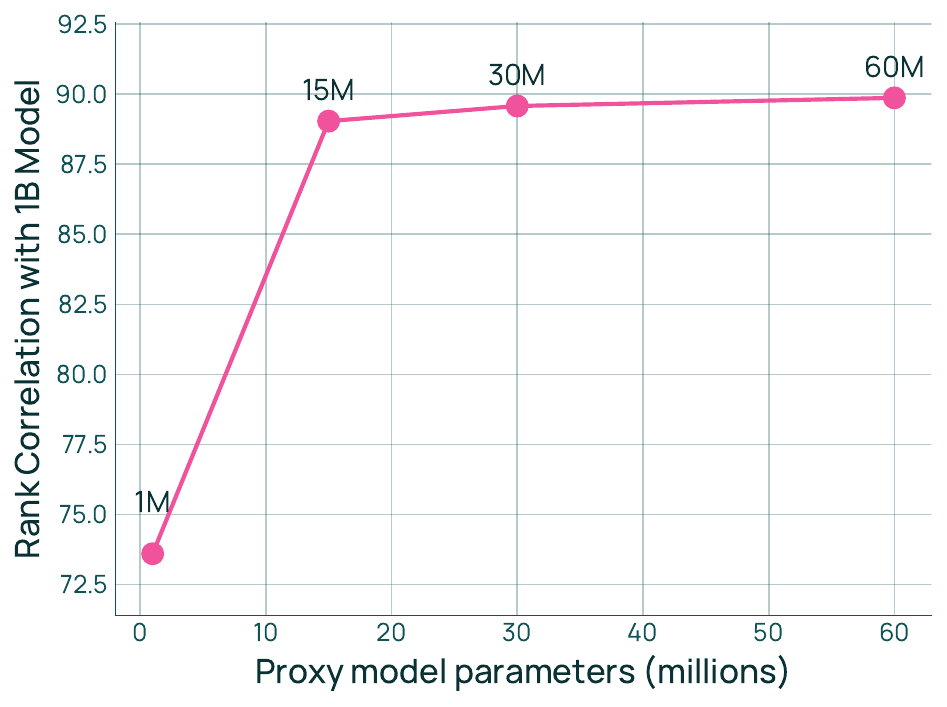}
    \caption{Correlation in performances between pairs of proxy models and 1B target models trained on the same mixture. Proxy models with over 15M parameters achieve strong rank correlation.}
    \label{fig:proxy_size}
\end{figure}

\textbf{RQ2: Swarm size in terms of number of domains.} 
The swarm size directly controls the cost-performance tradeoff: more proxy runs increase computational cost but improve mix quality. However, it is unclear what the tradeoff rate is when mixing over $m$ domains, since existing methods use fixed swarm sizes ranging from $20$ to over $500$, and few relate these choices to the number of domains $m$.

We examine the sample complexity of the log-linear regression model, which we evaluate against other regression models in RQ4 and use in \base.  Specifically, given a set of $m$ domains, the sample complexity measures how many proxy runs $K$ are needed to produce a mix 
whose downstream BPB is within a certain margin of the best mix we've identified.
For a fixed number of domains $m$, we run a sweep over swarm sizes $K=c(m+1)$ for linear multiplier $c=1, 2, 3, 4, 5$ (since the minimum number of runs needed for a unique solution to log-linear regression is $m + 1$).
We perform this sweep for $m=6,12,18,24$ domains and $3$ seeds.
For each $m$, we define error as the difference between the BPB of a proposed mix and the best BPB achieved achieved by any mix found in the corresponding sweep for that $m$.
We used 30M proxy models for all runs here but verify that the proposed mixes from these different swarm sizes also exhibit the same trends at the 1B scale (Figure~\ref{fig:swarm_size_1B}).

Figure~\ref{fig:swarm_size} shows that sample complexity is linear in $m$: the error curves across $m$ collapse when plotted against the linear multiplier $c$.
This means that for any target error, there exists a fixed $c$ that achieves it, regardless of $m$.
Therefore, $\mathcal{O}(m)$ runs are sufficient for strong performance---a finding that contradicts prior assumptions of quadratic scaling~\citep{ye2025datamixinglawsoptimizing, ge2025bimixbivariatedatamixing} and provides practitioners with a concrete prescription for allocating compute. 
\textbf{We recommend using at least $K \ge 3(m+1)$ proxy runs with the log-linear regression model}, since $c = 3, 4, 5$ all have close to $0$ error. 

\begin{figure}
    \centering
    \includegraphics[width=0.5\linewidth]{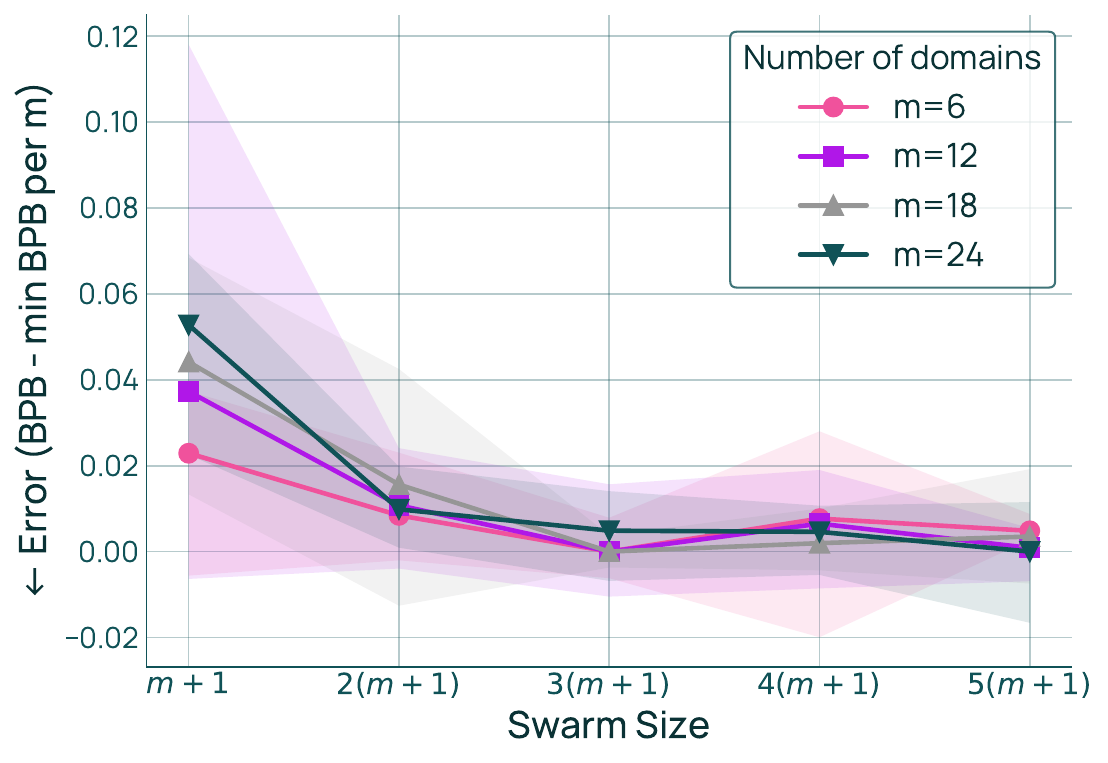}
    \caption{Error vs. Swarm Size. Curves collapse across different $m$, indicating $\mathcal{O}(m)$ runs are needed. Results are averaged across 3 random seeds, and the intervals indicate min and max results.}
    \label{fig:swarm_size}
\end{figure}

\textbf{RQ3: Swarm distribution.} We study the distribution $\mathcal{P}$ from which swarm mixtures should be sampled. The distribution determines how effectively the swarm explores the mixture space. However, existing works rarely study the impact of the swarm distribution, with only CLIMB~\citep{diao2025climbclusteringbasediterativedata} comparing a Dirichlet versus a random uniform distribution.

We investigate two aspects of the swarm distribution. To evaluate each aspect, we measure the average BPB of 1B target models trained on proposed mixes. As an intermediate metric, we also report the regression fit (Pearson correlation between predicted and true per-task BPB) on a held-out set of mixtures. 
\begin{itemize}
    \item \textit{Should each proxy run train on data from all domains (dense), or should some runs focus on subsets of domains (sparse)?} Sparse distributions enable discovering that certain domains should be excluded, while dense distributions ensure all domains are represented in all proxy runs. We generate sparse and dense swarms at the \textbf{topic level} (on DCLM topics) and \textbf{source level} (on the final sources from Table~\ref{tab:domain-updates-full}) to capture both fine and coarse-grained domains.
    
    \item \textit{Does centering the swarm around a promising mixture region improve results?} We test three Dirichlet priors: 1) the natural distribution (based on domain token counts), 2) a strong prior, and 3) a weak prior (see Appendix~\ref{supp:swarm_distribution} for exact constructions).
    
\end{itemize}

Figure~\ref{fig:swarm_distribution} shows that sparse swarms outperform dense at the topic level and vice versa at the source level—both for downstream BPB (left) and regression fit (right).
Neither is universally better; the choice depends on the domain set.
One hypothesis is that the best topic-level mixes exclude certain low-signal topics (e.g., adult content in DCLM), while the best source-level mixes utilize all sources. This aligns with how these domains were constructed: DCLM was partitioned into topics that are potentially uninformative, while sources are intentionally curated. \textbf{The choice of swarm distribution depends on the domain set; we recommend using sparse distributions for topic-level mixing and dense distributions for source-level mixing.}

Table~\ref{tab:dirichlet_priors} shows that centering on the natural distribution and the strong prior results in roughly similar downstream performance while using the weak prior significantly degrades performance. 
\textbf{We recommend that practitioners center the swarm around strong priors when available, but if prior knowledge is uncertain, the natural distribution remains a reasonable fallback.}

\begin{figure}
    \centering
    \includegraphics[width=0.47\linewidth]{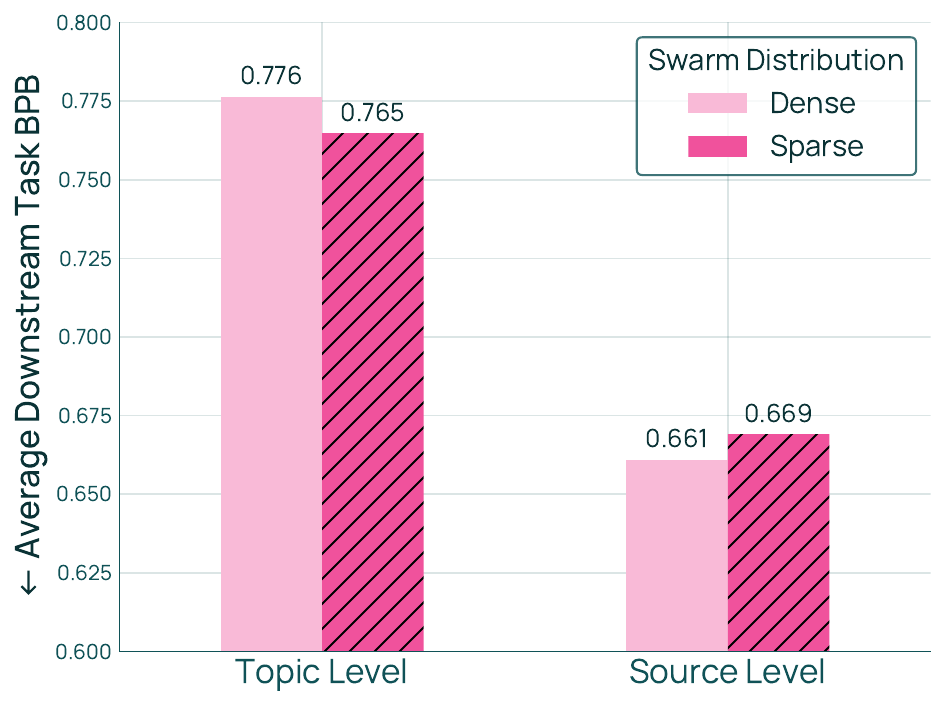}
    \includegraphics[width=0.5\linewidth]{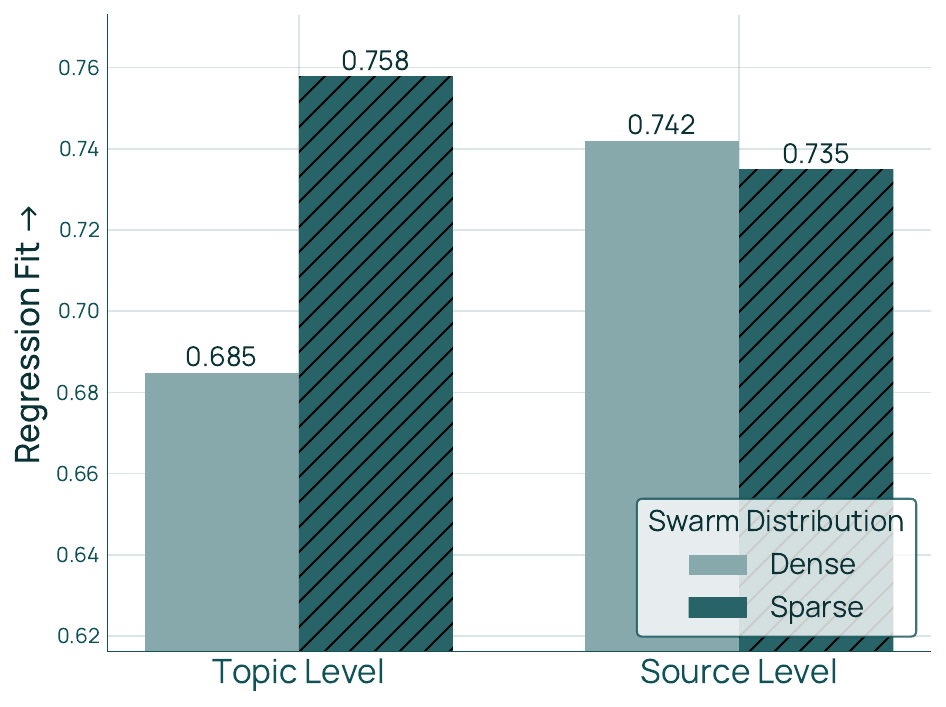}
    \caption{Performance (left) and regression fit (right) of dense versus sparse swarms. While sparse swarms perform better at the topic level, dense swarms perform better at the source level, suggesting that behavior is data-dependent.}
    \label{fig:swarm_distribution}
\end{figure}

\begin{table}[t]
\centering
\caption{Natural and strong priors achieve similar downstream performance, but the weak prior does considerably worse, suggesting that this design choice is fairly robust and that the natural distribution is a reasonable configuration.}
\begin{tabular}{lcc}
\toprule
Dirichlet Prior & Avg Downstream Task BPB $\downarrow$ & Regression Fit $\uparrow$ \\
\midrule
Natural & 0.765 & 0.748 \\
Strong  & 0.763 & 0.835 \\
Weak    & 0.797 & 0.661 \\
\bottomrule
\end{tabular}
\label{tab:dirichlet_priors}
\end{table}

\subsubsection{Regression Model Study}\label{sec:regression}

\textbf{RQ4: Regression model family.}
The regression model is crucial: mixes are optimized directly using its predictions, meaning that regression fit directly controls the quality of the proposed mix.
However, each existing method proposes a different model and reports strong fit, making it unclear which to adopt.

We compare several regression model families derived from existing methods (see Appendix~\ref{supp:regression} for adaptations):
\begin{itemize}[itemsep=0.00pt,topsep=0pt,leftmargin=10pt]
    \item \textbf{Search}: selects the best mixture from the swarm.
    \item \textbf{LightGBM}~\citep{lightgbm}: a gradient boosting regression framework using decision trees. It has been used to learn the regression model in several mixing papers, including RegMix~\citep{liu2025regmixdatamixtureregression, wettig2025organizewebconstructingdomains, diao2025climbclusteringbasediterativedata}.
    \item \textbf{Gaussian Process}: based on~\cite{chen2025admirebayesoptaccelerateddatamixture}, we consider Gaussian Process regression with an RBF kernel scaled by a signal variance term and augmented with independent Gaussian observation noise.
    \item \textbf{BiMix}~\citep{ge2025bimixbivariatedatamixing}: we consider an adaptation of BiMix to our setting that has the following parametric form: $\hat{f}_i(p) = \sum_{j = 1}^m A_{ij} p_j^{-\alpha{ij}}$, where $A_{ij}, \alpha_{ij} \in \R^+$ for all $i, j$. 
    \item \textbf{AutoScale}~\citep{kang2025autoscalescaleawaredatamixing}: we consider an adaptation of Autoscale to our setting that has the following parametric form: $\hat{f}_i(p) = c_i + \sum_{j = 1}^m (R(A_{ij} + p_j))^{-\alpha_{ij}}$, where $c_i, \alpha_{ij} \in \R^+$, $A_{ij} \in [0, 1]$ and $R$ is the number of requested tokens for all $i, j$. 
    \item \textbf{Log-linear}: we consider an adaptation of Data Mixing Laws~\citep{ye2025datamixinglawsoptimizing} to our setting that has the following parametric form: $\hat{f}_i(p) = c_i + \exp(A_i^\top p_i)$, where $c_i \in \R^+$ and $A_i \in \R^m$ for all $i$. $m+1$ proxy models must be trained to obtain a unique solution.
\end{itemize}

First, we measure the regression fit, the Pearson correlation between predicted and true per-task BPB on held-out mixtures, across regression models and swarm sizes $K= 25, 50, 75, 100, 118$. We repeat this across 3 random seeds, subsampling each swarm from a pool of $118$ proxy runs (drawn from a larger swarm of $K=128$ with $10$ held-out mixes).

Figure~\ref{fig:regression_models} (left) shows that swarm size is a key confounding factor in regression fit: different models excel at different swarm sizes, potentially explaining the lack of consensus. For instance, BiMix~\citep{ge2025bimixbivariatedatamixing} performs the best for small swarms ($K=25$), while LightGBM~\citep{liu2025regmixdatamixtureregression} requires more than 118 proxy runs for sufficient fit. Notably, log-linear models~\citep{ye2025datamixinglawsoptimizing} achieve the best regression fit overall ($\rho=80$ at $K=118$) and outperforms other models consistently when  $K \ge 75$ (i.e. $K \ge 3(m+1)$).

Next, we examine the downstream performance of these regression models when $K=128$. Figure~\ref{fig:regression_models} (right) shows that the log-linear regression model's proposed mix achieves the best downstream performance. Combined with the strong regression fit, \textbf{we recommend using log-linear models}, given swarm size $K \ge 3(m+1)$, as recommended in RQ2.

Appendix~\ref{supp:regression} provides additional results at the source level (Figure~\ref{fig:regression_models_olmo3}) and how regression fit varies with swarm size and number of domains (Figure~\ref{fig:k_vs_m_dclm}).

\begin{figure}
    \centering
    \includegraphics[width=0.48\linewidth]{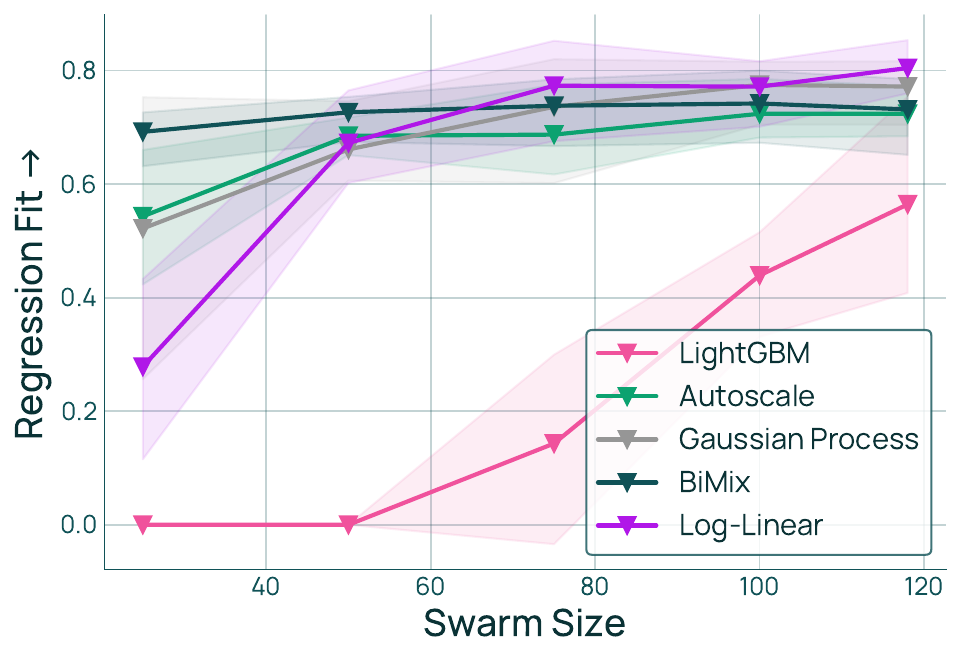}
    \includegraphics[width=0.48\linewidth]{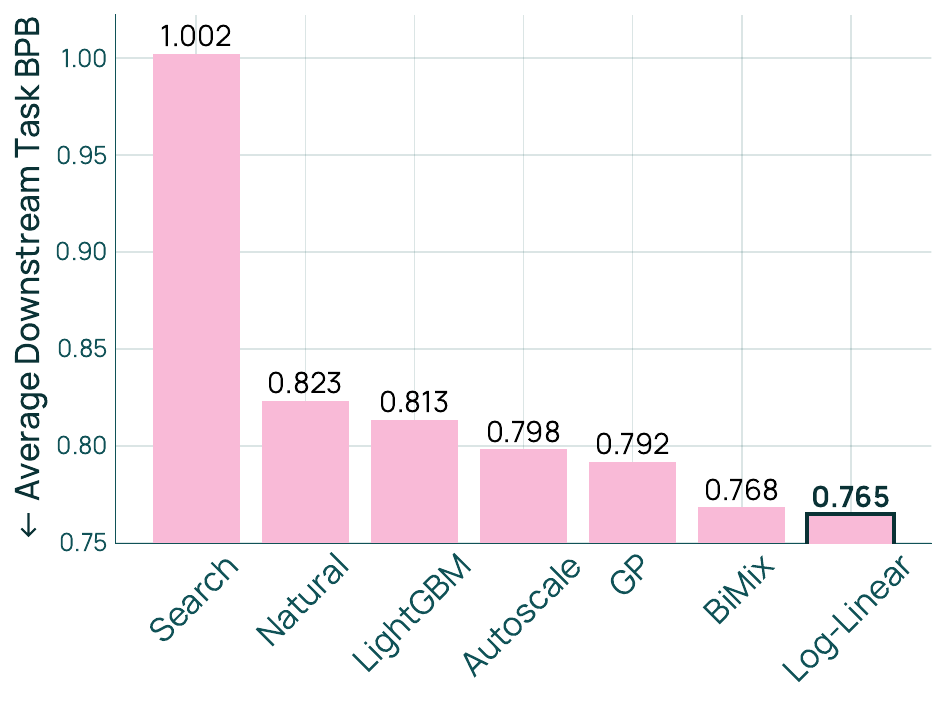}
    \caption{Left: Fit versus swarm size across regression models. Different models require different amounts of data, and the log-linear model achieves the best regression fit. Results are averaged across 3 random seeds, and the intervals indicate min and max results. Right: Downstream performance of regression models for $K=128$. The log-linear model achieves the best average downstream task BPB.}
    \label{fig:regression_models}
\end{figure}

\textbf{RQ5:Regression granularity.} We study the granularity at which to fit regression models in constructing $\hat{f}$: one model per task, one model per task family (e.g., math, code, and QA), or one model for the entire average BPB.
This involves a tradeoff between expressivity and noise. Per-task and per-family regression models can capture distinct relationships between mixtures and various capabilities, increasing expressivity of the overall function approximating average BPB, $\hat{f}$. 
However, they are fit to noisier individual measurements, while directly modeling average BPB may involve less noisy targets.

We compare three approaches. In the \textbf{per-task} approach, we fit individual functions $\hat{f}_i(p)$ using $\{(p^j, y_{ij})\}_{j=1}^K$ for each task $i \in [n]$, and then minimize $\frac{1}{n}\sum_{i = 1}^n \hat{f}_i(p)$. 
In the \textbf{per-family} approach, we group tasks into three families $F_1, F_2, F_3$: math, code, and QA. We fit one model $\bar{f}_i(p)$ to each family's average BPB using $\{(p^j, \frac{1}{|F_i|} \sum_{k \in F_i} y_{kj}) \}_{j = 1}^K$ for $i = 1, 2, 3$, and then minimize $\sum_{i = 1}^3 \frac{|F_i|}{n} \bar{f}_i(p)$.
In the \textbf{aggregated} approach, we fit a single function $\hat{f}_{\text{avg}}(p)$ directly to the average BPB across tasks using $\{(p^j, \frac{1}{n}\sum_{i = 1}^n y_{ij})\}_{j = 1}^K$ and minimize $\hat{f}_{\text{avg}}(p)$.  
We evaluate downstream performance (average BPB of 1B target models) and regression fit (Pearson correlation on held-out mixtures).

Table~\ref{tab:granularity} shows that per-task yields the best downstream performance and regression fit.
Regression fit degrades monotonically as granularity decreases, consistent with reduced expressivity.
While per-family has better regression fit than aggregated, their downstream performance is comparable; this is likely due to noise in transferring from proxy to target models since the performance on 30M models (Figure~\ref{fig:regression_granularity_30m}) exhibits trends that are consistent with the regression fit.
Nevertheless, per-task provides clear benefits on both metrics. \textbf{We recommend fitting separate regression functions per task.}

\begin{table}[t]
\centering
\caption{Performance and regression fit across regression granularities. As granularity of the regression target increases, the regression fit improves, and the downstream performance also trends towards improvement.}
\begin{tabular}{lcc}
\toprule
Method & Avg Downstream Task BPB $\downarrow$ & Regression Fit $\uparrow$ \\
\midrule
\rowcolor{ai2lightpink} Per-Task      & 0.765 & 0.983 \\
Per-Family    & 0.777 & 0.958 \\
Aggregated    & 0.774 & 0.866 \\
\bottomrule
\end{tabular}
\label{tab:granularity}
\end{table}

\subsubsection{Mix Optimization Study}\label{sec:optimization}

\textbf{RQ6: Data repetition constraints.}
In practice, LM training is often data-constrained: the target model’s requested training tokens $R$ exceed the available data. 
In this regime, some mixtures cause excessive sample repetition, degrading performance~\citep{muennighoff2025scalingdataconstrainedlanguagemodels}.
For example, a mixture that proposes 40\% code when code comprises only 5\% of available data would oversample code 8 times. Existing mixing methods assume compute-constrained settings with effectively infinite data and provide no way to control repetition.

We consider repetition constraints of the form $p_j \le \frac{k N_j}{R}$ for all $j \in [m]$, where $k$ is a repetition factor that limits how many times any domain can be sampled given $R$ requested tokens and $N_j$ available tokens in domain $j$.
We study two questions around enforcing these constraints and their impact on proposed mixes.
\begin{itemize}
    \item \textit{How should repetition constraints be enforced while ensuring strong downstream performance?} 
    We consider two places where constraints can be enforced. First, we can constrain the swarm by ensuring that all swarm mixes satisfy the constraint---the regression models then learn what mixes are feasible. Second, we can constrain the optimization problem directly. This creates three approaches: 1) constrained swarm with unconstrained optimization, 2) unconstrained swarm with constrained optimization, and 3) constrained swarm with constrained optimization.     
    We test with $k=4$, verifying whether the proposed mixture satisfies the constraint and reporting downstream performance.
    
    \item \textit{How does the repetition factor affect the proposed mix?} Using the best-performing strategy, we vary $k \in \{2, 3, 4, 5, \infty \}$ and visualize how the proposed mix changes.

\end{itemize}

Table~\ref{tab:enforce_rep} shows that approach 2 (unconstrained swarm, constrained optimization) results in a proposed mix that satisfies the repetition constraints while maintaining performance. Approach 1 fails to satisfy the constraints, repeating samples from one domain 5 times and exceeding $k=4$. Approach 3 satisfies constraints but yields worse downstream performance than approach 2, likely because the constrained swarm provides less coverage of the mixture space. \textbf{We recommend enforcing repetition constraints in the mixture optimization step.}

\begin{table}[t]
\centering
\caption{Performance and repetition constraint satisfaction. Using a constrained swarm with unconstrained optimization does not satisfy a repetition constraint with $k=4$; the proposed mix repeats samples of a domain 5 times. Between approaches 2 and 3, the former approach achieves the best downstream performance.}
\begin{tabular}{lcc}
\toprule
Approach &
Satisfies rep.\ constraint? ($k=4$) &
Avg.\ Task BPB $\downarrow$ \\
\midrule
1) Constrained Swarm, Unconstrained Opt   & No (5)  & 0.774694 \\
\rowcolor{ai2lightpink}
2) Unconstrained Swarm, Constrained Opt  & Yes (4) & 0.764718 \\
3) Constrained Swarm, Constrained Opt    & Yes (4) & 0.785517 \\
\bottomrule
\end{tabular}
\label{tab:enforce_rep}
\end{table}

Figure~\ref{fig:vary_rep_top_7} shows how the proposed mix changes with varying $k$ when we add a repetition constraint to the optimization problem.. As the constraint relaxes from $2$ to $\infty$ (the infinite data setting), the mixture weights on high-utility domains, such as software development, monotonically increase. In contrast, domains such as literature smoothly decrease as $k$ increases, suggesting that their larger allocations under tight constraints primarily compensate for limited availability of higher utility domains. This demonstrates that repetition constraints significantly shape the proposed mix.

\begin{figure}
    \centering
    \includegraphics[width=0.5\linewidth]{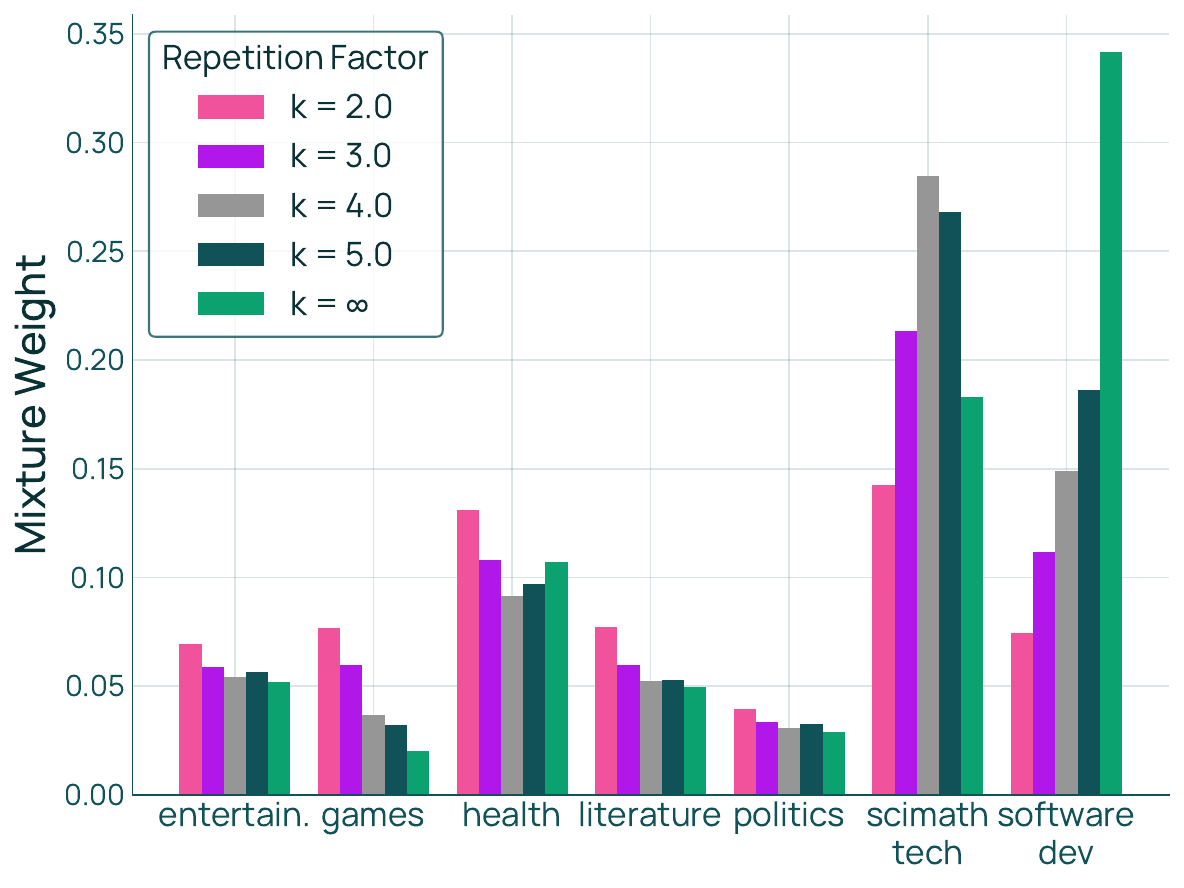}
    \caption{Proposed Mixture Weights versus Repetition Factor. Enforcing varying caps on repetition $k$ leads to significantly different proposed mixes. See Figure~\ref{fig:vary_repetition_factor} for mixture weights on the full set of domains.}
    \label{fig:vary_rep_top_7}
\end{figure}

\textbf{RQ7: Optimization solver.} We study how to solve for the mixture that minimizes the average predicted BPB $\hat{f}(p)$. Under log-linear regression, $\hat{f}(p)$ is convex, enabling exact solvers. However, since regression functions imperfectly predict performance (Figure~\ref{fig:regression_models} left), it is unclear whether exact optimization of the surrogate objective yields the mix with the best downstream performance or if incorporating some regularization may be better.

We consider three solving strategies. The \textbf{exact solver} uses CVXPY to compute the optimal mix. The \textbf{search}-based solver, used in RegMix~\citep{liu2025regmixdatamixtureregression}, samples candidate mixes from a Dirichlet distribution with a natural prior (i.e., a mix proportional to the domain sizes). The \textbf{exact solver with KL regularization} modifies the objective to $\text{minimize}_{p \in \triangle^{m - 1}} \hat{f}(p) + \lambda \kl(p || p_0)$, where $\lambda > 0$ controls the strength of regularization towards the natural distribution $p_0$; we consider $\lambda \in \{0.01, 0.05\}$. We measure the average BPB of 1B target models trained on the proposed mixes. We also examine the optimal objective value, the predicted average BPB at the 30M proxy model scale, as a sanity check that the exact solver should obtain the lowest predicted BPB.

Figure~\ref{fig:solvers} (left) shows that the exact solver with $\lambda=0.05$ achieves the best performance. In contrast, the right panel confirms that the exact solver obtains the lowest predicted BPB, while adding a KL penalty degrades it. Taken together, these results indicate that although the average predicted BPB is optimized most effectively by the exact solver, noise from regression fitting and proxy-to-target model transfer makes moderate regularization beneficial. The search baseline achieves poor performance, consistent with it being a heuristic. \textbf{We recommend using an exact solver with a KL regularization of 0.05 to solve the mixture optimization problem}. See Appendix~\ref{supp:solver}, Figure~\ref{fig:solvers_olmo3} for additional source-level results.

\begin{figure}
    \centering
    \includegraphics[width=0.48\linewidth]{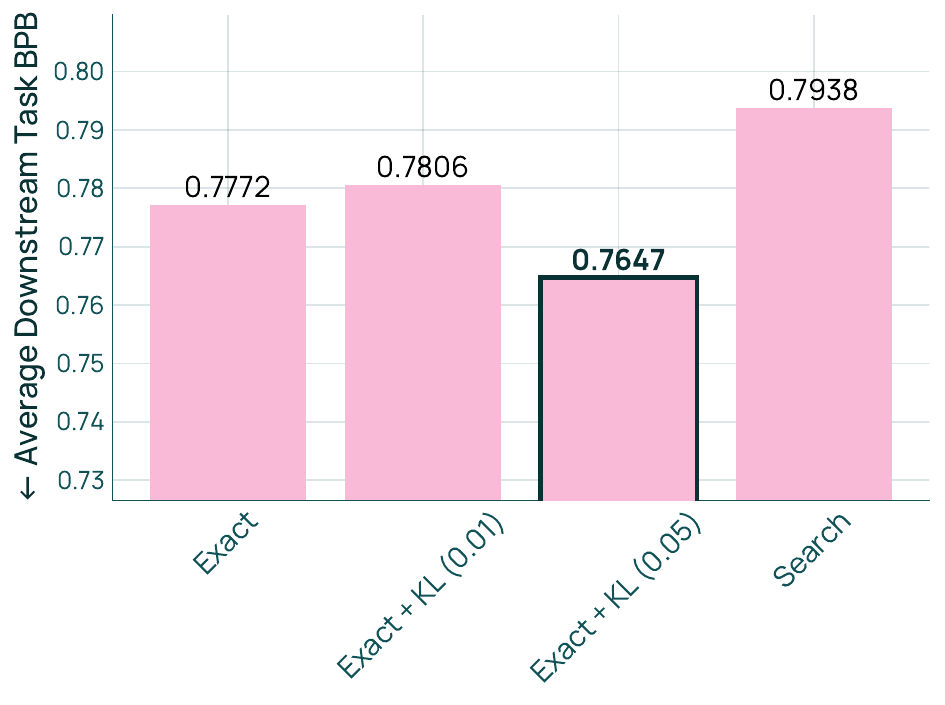}
    \includegraphics[width=0.5\linewidth]{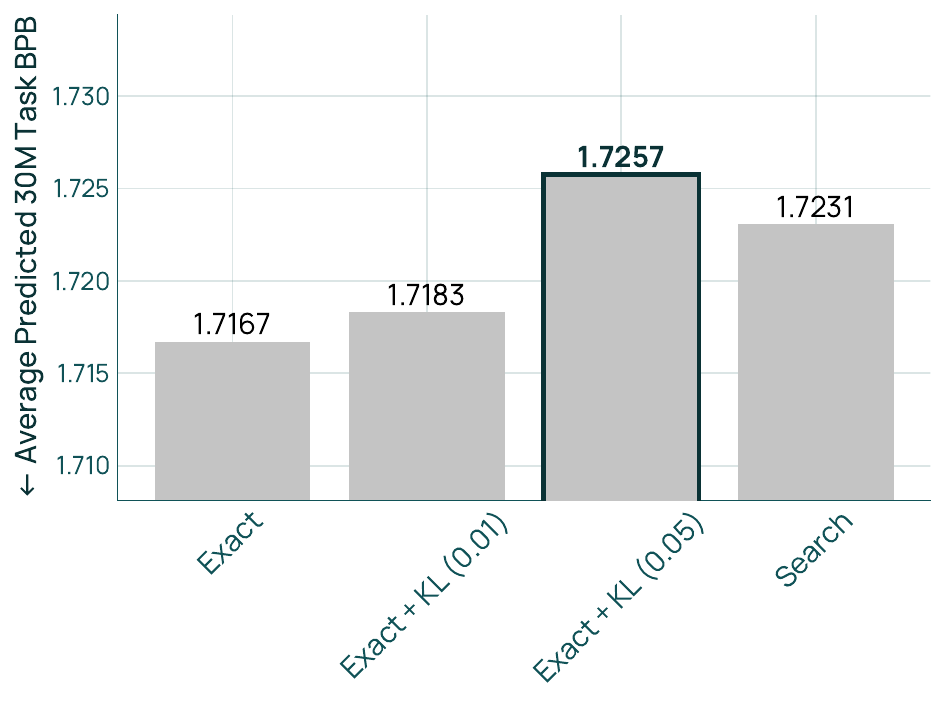}
    \caption{Downstream performance (left) and 30M predicted performance (right) across optimization solvers. The predicted performances of various solvers confirms that the exact solver minimizes the predicted BPB, as expected, followed by KL penalties of $0.01$ and $0.05$. However, the exact solver does not obtain the best downstream performance, and instead some KL regularization helps.}
    \label{fig:solvers}
\end{figure}

\subsection{Configuring \base}\label{sec:base_method}

Our results in \S\ref{sec:study} let us define \base, a concrete configuration of the offline mixing schema that we use throughout Olmo 3 development. The rightmost column of Table~\ref{tab:mixing_design_choices} summarizes the full configuration, and Algorithm~\ref{alg:base_method} formalizes the procedure (\textcolor{olmoBlue}{blue} highlights steps informed by our findings).

\begin{center}
\begin{tikzpicture}
\node[inner sep=0pt] (alg) {
\begin{minipage}{0.9\textwidth}
\begin{algorithm}[H]
   \caption{\base}
   \label{alg:base_method}
\begin{algorithmic}[1]
   \STATE {\bfseries Input:} Domains $\D = \{D_1, \dots, D_m\}$ of sizes $\{N_1, \dots, N_m\}$, swarm size \textcolor{olmoBlue}{$K = \mathcal{O}(m)$}, repetition factor $k$, requested tokens $R$, KL penalty $\lambda$, natural distribution $p_0 \propto \{N_1, \dots, N_m \}$.  
    \STATE Sample mixes $p^1, \dots, p^K \in \triangle^{m-1}$ on $\D$.
    \STATE Train proxy models \textcolor{olmoBlue}{$\Ssmall\ge 15$M parameters} on these mixes, and evaluate on downstream tasks to get a dataset of mixes and performance, $\{(p^j, \{y_{ij}\}_{i = 1}^n \}_{j =1}^K$, where $y_{ij}:= f_i(\LM(\Ssmall, \Rsmall, p^j))$.
    \FOR{$i \in [n]$}
        \STATE Use $\{(p^j, y_{ij})\}_{j = 1}^K$ to fit the \textcolor{olmoBlue}{log-linear model} $\hat{f}_i(p) \texttt{=} c_i + \exp(A_i^\top p)$, where $c_i \in \R^+$ and $A_i \in \R^{m}$.
    \ENDFOR
    \STATE Solve the optimization problem to get $p^\star$:
    \begin{align*}
        &\text{minimize}_{p \in \triangle^{m-1}} \frac{1}{n} \sum_{i=1}^n \hat{f}_i(p) + \textcolor{olmoBlue}{\lambda \kl(p || p_0)} \qquad \text{subject to} \quad \textcolor{olmoBlue}{p_j \leq \frac{k N_j}{R} \; \forall j \in [m]} 
    \end{align*}
   \STATE \textbf{Return} $p^\star$.
\end{algorithmic}
\end{algorithm}
\end{minipage}
};

\draw [decorate,decoration={brace,amplitude=5pt}]
    ([yshift=-2.0cm,xshift=5pt]alg.north east) -- ([yshift=-3.6cm,xshift=5pt]alg.north east)
    node[midway,right=10pt,rotate=-90,anchor=south] {\footnotesize Swarm};

\draw [decorate,decoration={brace,amplitude=5pt}]
    ([yshift=-3.7cm,xshift=5pt]alg.north east) -- ([yshift=-4.9cm,xshift=5pt]alg.north east)
    node[midway,right=8pt,rotate=-90,anchor=south] {\footnotesize Regression};

\draw [decorate,decoration={brace,amplitude=5pt}]
    ([yshift=-5.0cm,xshift=5pt]alg.north east) -- ([yshift=-7.0cm,xshift=5pt]alg.north east)
    node[midway,right=8pt,rotate=-90,anchor=south] {\footnotesize Optimization};
\end{tikzpicture}
\end{center}

\section{The Evolving Domain Problem}\label{sec:conditional_mixing}

From \S\ref{sec:config-choices}, \base assumes a fixed domain set $\D$. Here, we study the \emph{evolving domain problem} encountered throughout LM development.
We first formalize the problem of recomputing mixes after the domain set is updated (\S\ref{sec:problem-setup-evolving-domain}). We then present \fullmixreuse, our primary mechanism that reuses ratios on all the domains unaffected by the update (\S\ref{sec:mix_reuse_42}). We theoretically analyze when \fullmixreuse performs well (\S\ref{sec:analysis}). Finally, we introduce \partialmixreuse as an extension that reuses the ratios over some unaffected domains (\S\ref{sec:partial-reuse}). An overview of these strategies is provided in Figure~\ref{fig:mixreuse}.

\subsection{Problem Setup}
\label{sec:problem-setup-evolving-domain}

During LM development, the domain set evolves from $\D$ to an updated set $\D' = \{D_1', \dots, D_{m'}'\}$ of size $m'$, where each domain $D_i'$ now has $N_i'$ tokens. We use $q \in \triangle^{m'-1}$ to express data mixtures over $\D'$.
In practice, domain updates are typically localized, affecting only a subset of domains while leaving others unchanged.
To capture this structure, we partition the domain sets as $\D = [\D_1, \D_2]$ and $\D' = [\D_1, \D_2']$, where $\D_1$ is the \textbf{unaffected domain set}, and $\D_2$ is the \textbf{affected domain set} that is transformed into $\D_2'$.

In developing Olmo 1--3~\citep{olmo2025olmo3} and observing similar efforts like SmolLM 1--3~\citep{bakouch2025smollm3}, we identified four patterns of domain updates, which we call \textbf{domain update operators}, that describe how $\D_2$
transforms into $\D_2'$ (Figure~\ref{fig:banner}):
\begin{itemize}
    \item \Add: $\D_2 = \emptyset$ and $\D_2'$ contains the newly added domains. This occurs when new datasets are created.
    \item \Remove: $\D_2$ contains one or more domains and $\D_2' = \emptyset$. This can occur, for instance, when domains are discarded due to low utility.
    \item \Partition: $\D_2$ consists of one domain that is split into multiple subdomains in $\D_2'$ such that $\D_2 = \bigcup_{D_i' \in \D_2'} D_i'$. Partitioning is commonly used to obtain finer-grained mixtures, which can improve performance~\citep{wettig2025organizewebconstructingdomains, diao2025climbclusteringbasediterativedata,ge2025rbdomainregroupingdata, peng2025topicsourcekeyeffective}.
    \item \Revise: $\D_2$ is one domain that is modified to produce the corresponding domain in $\D_2'$. This occurs when contents of the dataset are modified, such as samples being reformatted or rewritten~\citep{kimiteam2025kimik2openagentic, maini2024rephrasingwebrecipecompute}.
\end{itemize}

\textbf{Goal.} We have an existing mixture $\tilde{p} \in \triangle^{m-1}$ over the current domain set $\D$ proposed through a procedure like \base. 
But now, the domain set has evolved from $\D$ to $\D'$, and our goal is to use $\tilde{p}$ to compute an updated optimal $q^\star \in \triangle^{m' - 1}$ on $\D'$ that solves
\begin{align}
    &\text{minimize}_{q \in \triangle^{m'-1}}  \frac{1}{n} \sum_{i = 1}^n f_i(\LM(S, R, q))
    \label{eq:mixing_problem} \\
    &\text{s.t.} \quad  q_j \le \frac{k N_j'}{R} \quad \forall j \in [m'] \nonumber
\end{align}

This objective is similar to the one in \S\ref{sec:setup_1}, except that it is defined over $q$, and we explicitly enforce the repetition constraint from \S\ref{sec:optimization} to ensure that samples are not repeated more than $k$ times.

\textbf{Baseline: full recomputation.} One way to solve~\eqref{eq:mixing_problem} is to apply \base (Algorithm~\ref{alg:base_method}) to $\D'$ after each update, which requires a swarm of $\mathcal{O}(m')$ proxy runs (\S\ref{sec:swarm}). The costs of full recomputation thus accumulate rapidly with the number of updates. We next study whether mixture reuse can produce mixes at a lower cost.

\subsection{Mixture Reuse}\label{sec:mix_reuse_42}

The core idea in mixture reuse is to freeze the relative ratios among the unaffected domains according to $\tilde{p}$, aggregate them into a single \emph{virtual domain}, and only recompute its total weight along with the weights of the affected domains $\mathcal{D}_2'$. This reduces the dimensionality of the optimization from $m'$ to $1 + |\mathcal{D}_2'|$. This mechanism is presented in Figure~\ref{fig:mixreuse} (right).

For example, suppose $\mathcal{D}$ contains $m=3$ domains with mixture
$\tilde{p} = [0.25,\,0.25,\,0.5]$, and $\mathcal{D}'$ is formed by adding
one new domain.
Instead of learning a $4$-dimensional mixture, we learn a $2$-dimensional
mixture over the virtual domain and the new domain.
If that mixture is $[0.4,\,0.6]$,  we can expand it using $\tilde{p}$ to induce the final mixture over $m'=4$ domains:
\[
q=\underbrace{0.4}_{\substack{\text{recomputed}\\\text{virtual domain}\\\text{ratio}}} \cdot \underbrace{[0.25,\,0.25,\,0.5]}_{\substack{\text{reused}\\\text{ratios}}} \;\cup\; \underbrace{[0.6]}_{\substack{\text{recomputed}\\\text{affected domain}\\\text{ratio}}}\;=\; [0.1,\,0.1,\,0.2,\,0.6].
\]
See Appendix~\ref{supp:examples} for examples of \Remove, \Partition, and \Revise.

\begin{figure}
    \centering
    \includegraphics[width=0.8\linewidth]{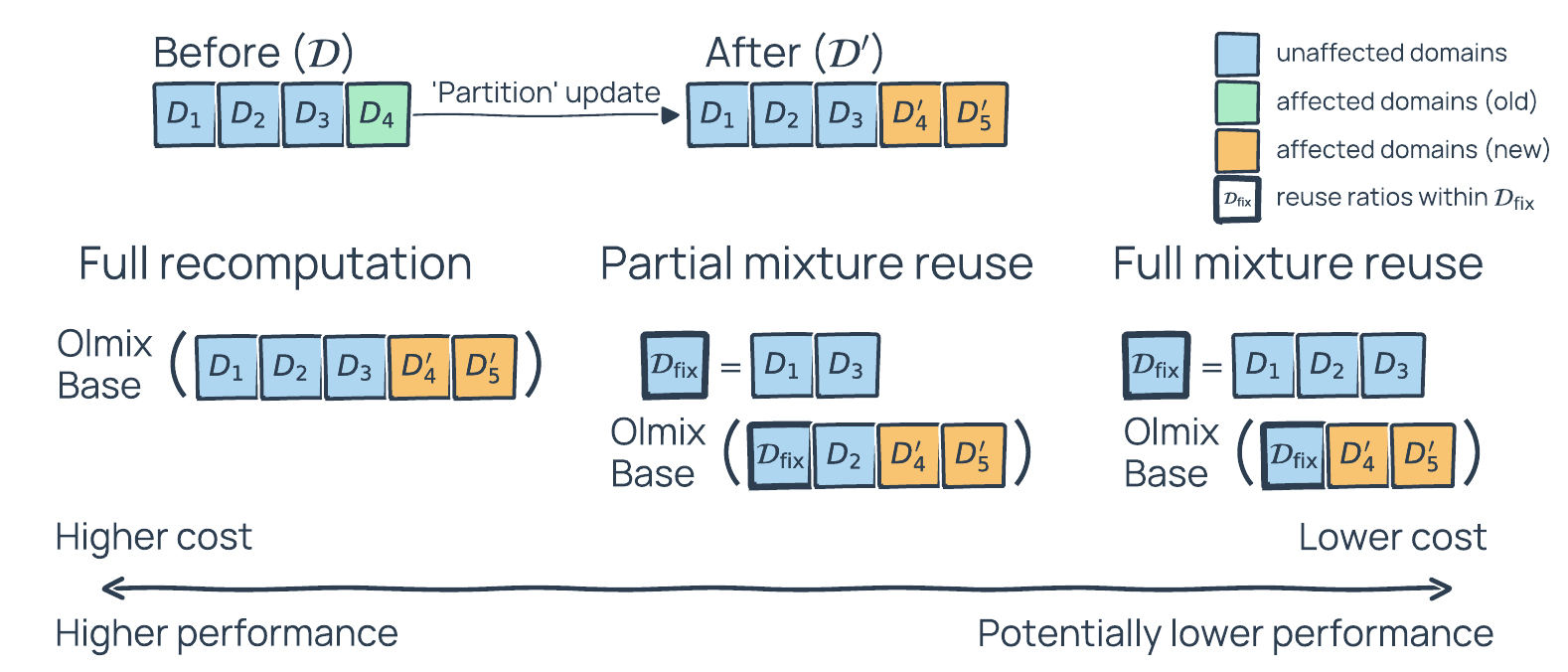}
    \caption{Overview of Mixture Recomputation Strategies. When the domain set is updated, full recomputation applies \base on the entire $\D'$, while \fullmixreuse (\S\ref{sec:mix_reuse_42}) and \partialmixreuse (\S\ref{sec:partial-reuse}) freeze the ratios of the domains in $\Dfix$, resulting in less recomputation. The strategies provide a spectrum in terms of cost (number of swam runs) and downstream performance.}
    \label{fig:mixreuse}
\end{figure}

\subsubsection{Formalizing the Mixture Reuse Problem}

The example above provides the intuition behind mixture reuse, which we formalize in the mixture reuse problem. First, we introduce notation to handle domains we will reuse versus recompute. Define $\Dfix := \D_1$ as the unaffected domains whose relative ratios we will keep fixed from $\tilde{p}$, and $\Dcomp := \D_2'$ as the affected domains whose ratios we recompute. We remap these domain sets because, as seen later in \S\ref{sec:partial-reuse}, we may choose to recompute unaffected domains.

We partition any mix $q$ on $\D'$ as $q = [\rho q_{\Dfix}, (1 - \rho) q_{\Dcomp}]$, where $\rho \in [0, 1]$ is the total weight on the unaffected domains, and $q_{\Dfix} \in \triangle^{|\Dfix| - 1}$ and $q_{\Dcomp} \in \triangle^{|\Dcomp| - 1}$ are the relative mixes within $\Dfix$ and $\Dcomp$, respectively. 
Similarly, we define $\tilde{p}_{\Dfix} \in \triangle^{|\Dfix| - 1}$ as the normalized ratios from $\tilde{p}$ restricted to $\Dfix$.

With this notation, the mixture reuse problem is the following approximation of~\eqref{eq:mixing_problem}:
\begin{align}
    &\text{minimize}_{q \in \triangle^{m'-1}}  \frac{1}{n} \sum_{i = 1}^n f_i(\LM(S, R, q))
    \label{eq:conditional_mixing_problem} \\
    &\text{s.t.} \quad  q_j \le \frac{k N_j'}{R} \quad \forall j \in [m'] \nonumber \\
    &  \qquad \; \textcolor{olmoBlue}{q_{\Dfix} = \tilde{p}_{\Dfix}} \nonumber 
\end{align}

The new constraint $\textcolor{olmoBlue}{q_{\Dfix} = \tilde{p}_{\Dfix}}$ enforces that the relative ratios among $\Dfix$ are the same as in $\tilde{p}$; we optimize over a simpler $q$ of the form $[\rho \textcolor{olmoBlue}{\tilde{p}_{\Dfix}}, (1 - \rho) q_{\Dcomp}]$.

\subsubsection{\fullmixreuse}

We now describe \fullmixreuse, our approach for solving the mixture reuse problem~\eqref{eq:conditional_mixing_problem}.

We use a change of variables that absorbs the constraint $q_{\Dfix} = \tilde{p}_{\Dfix}$, transforming~\eqref{eq:conditional_mixing_problem} into an unconstrained optimization problem.
For any feasible $q = [\rho \tilde{p}_{\Dfix}, (1 - \rho) q_{\Dcomp}]$, we aggregate all domains in $\Dfix$ into a single virtual domain and define the collapsed mixture $r = [\rho, (1-\rho)q_{\Dcomp}] \in \triangle^{|\Dcomp|}$. 
We index the elements of $r$ by $\{v\} \cup \Dcomp$, where $r_v = \rho$ denotes the virtual domain weight (the total weight on $\Dfix$) and $r_j$ for $j \in \Dcomp$ are the weights on affected domains.
Given $\tilde{p}_{\Dfix}$, any collapsed mixture $r$ can be expanded into a feasible $q$ over $\D'$ via the expansion function $\Phi_{\tilde{p}_{\Dfix}}(r)$, where $q_j = r_v \cdot \tilde{p}_j$ for $j \in \Dfix$, and $q_j = r_j$ for $j \in \Dcomp$.

From our earlier example, $r = [0.4, 0.6]$ (i.e., $\rho = 0.4$ and $q_{\Dcomp} = [1]$) with $\tilde{p}_{\Dfix} = [0.25, 0.25, 0.5]$ expands into $q = \Phi_{\tilde{p}_{\Dfix}}(r) = [0.1, 0.1, 0.2, 0.6]$.

This change of variables transforms the mixture reuse problem into standard mixing on $r$ over $1 + |\Dcomp|$ domains (Lemma~\ref{lem:collapsed_space}). We can now apply \base (Algorithm~\ref{alg:base_method}) in the collapsed space, which requires only $\mathcal{O}(|\Dcomp|)$ proxy runs instead of $\mathcal{O}(m')$. 
The result is a proposed collapsed mixture $r^\star(\tilde{p}_{\Dfix})$, which we expand back to the full domain space as $q^\star(\tilde{p}_{\Dfix}) = \Phi_{\tilde{p}_{\Dfix}}(r^\star(\tilde{p}_{\Dfix}))$. This expanded mixture is our final proposed mix over $\D'$. The full procedure is in Algorithm~\ref{alg:conditional_mixing_method}, with \textcolor{olmoBlue}{blue} text indicating changes from \base in Algorithm~\ref{alg:base_method}.

\begin{figure}[htb]
\centering
\begin{tikzpicture}
\node[inner sep=0pt] (alg) {
\begin{minipage}{0.9\textwidth}
\begin{algorithm}[H]
   \caption{\fullmixreuse}
   \label{alg:conditional_mixing_method}
\begin{algorithmic}[1]
   \STATE {\bfseries Input:} \textcolor{olmoBlue}{Domain set $\D' = \Dfix \cup \Dcomp$ of sizes $\{N_1', \dots, N_{m'}' \}$, existing mix $\tilde{p} \in \triangle^{m - 1}$}, swarm size $K = \mathcal{O}(m)$, repetition factor $k$, requested tokens $R$, KL penalty $\lambda$, natural distribution $r_0 \in \triangle^{|\Dcomp|}$.
    \STATE Sample mixes $r^1, \dots, r^K \in \triangle^{|\Dcomp|}$ and \textcolor{olmoBlue}{expand each collapsed mix $r^j$ into $q^j := \Phi_{\tilde{p}_{\Dfix}}(r^j)$}.
    \STATE Train proxy models on the expanded mixes and evaluate on downstream tasks to get a dataset of mixes and performance, $\{(r^j, \{y_{ij}\}_{i = 1}^n \}_{j =1}^K$, where $y_{ij}:= f_i(\LM(\Ssmall, \Rsmall, q^j))$.
    \FOR{$i \in [n]$}
        \STATE Use $\{(r^j, y_{ij})\}_{j = 1}^K$ to fit the log-linear model $\hat{g}_i(r) = d_i + \exp(B_i^\top r)$, where $d_i \in \R^+$ and $B_i \in \R^{|\Dcomp|}$.
    \ENDFOR
    \STATE Solve the following optimization problem to get $r^\star(\tilde{p}_{\Dfix})$:
    \begin{align}
    &\text{minimize}_{r \in \triangle^{|\Dcomp|}} \frac{1}{n} \sum_{i=1}^n \hat{g}_i(r) + \lambda \kl(r || r_0) \label{eq:alg_mixing_problem_2} \nonumber \\ & \text{subject to} \quad r_v \leq \min_{j \in \Dfix}\bigg\{\frac{k N_j'}{R \tilde{p}_{j}} \bigg\}, \;\; r_j \le  \frac{k N_j'}{R} \quad \forall j \in \Dcomp \nonumber 
    \end{align}
   \STATE \textbf{Return} \textcolor{olmoBlue}{$q^\star(\tilde{p}_{\Dfix}) := \Phi_{\tilde{p}_{\Dfix}}(r^\star(\tilde{p}_{\Dfix}))$, the expanded form of $r^\star(\tilde{p}_{\Dfix})$}.
\end{algorithmic}
\end{algorithm}
\end{minipage}
};

\draw [decorate,decoration={brace,amplitude=5pt}]
    ([yshift=-2.5cm,xshift=5pt]alg.north east) -- ([yshift=-3.8cm,xshift=5pt]alg.north east)
    node[midway,right=10pt,rotate=-90,anchor=south] {\footnotesize Swarm};
    
\draw [decorate,decoration={brace,amplitude=5pt}]
    ([yshift=-3.9cm,xshift=5pt]alg.north east) -- ([yshift=-5.4cm,xshift=5pt]alg.north east)
    node[midway,right=8pt,rotate=-90,anchor=south] {\footnotesize Regression};
    
\draw [decorate,decoration={brace,amplitude=5pt}]
    ([yshift=-5.5cm,xshift=5pt]alg.north east) -- ([yshift=-8.5cm,xshift=5pt]alg.north east)
    node[midway,right=8pt,rotate=-90,anchor=south] {\footnotesize Optimization};
\end{tikzpicture}
\end{figure}

\subsection{Theoretical Analysis}\label{sec:analysis}

\fullmixreuse reduces costs by reusing existing ratios, but when does it match versus degrade performance compared to full recomputation? We theoretically analyze this, finding that performance depends on (1) how much the optimal mix changes due to the domain update, and (2) a coupling effect between $\Dfix$ and $\Dcomp$.
Importantly, we empirically validate our theory in \S\ref{sec:validation}: \textbf{we measure the terms in our bounds and show they tightly track actual performance gaps across settings.}

\textbf{Assumptions.} We make the following assumptions.
\begin{enumerate}[itemsep=0pt,topsep=2pt,leftmargin=12pt]
    \item Log linear model holds: For each task $i\in[n]$, performance can be expressed as $f_i(\LM(S,R,q)) = c_i + \exp(A_i^\top q)$ for some $c_i\in\R^+,\; A_i\in\R^{m'}$.
    \item Full recomputation minimizes~\eqref{eq:mixing_problem}, and \fullmixreuse minimizes~\eqref{eq:conditional_mixing_problem}. 
\end{enumerate}

\textbf{Definitions.}
Define performance of $q$ as $F(q) := \frac{1}{n}\sum_{i=1}^n f_i(\LM(S,R,q))$. $q^\star$ is the solution to~\eqref{eq:mixing_problem}, and $q^\star(\tilde p_{\Dfix})$ is the solution to~\eqref{eq:conditional_mixing_problem}. We study the \emph{performance gap} of \fullmixreuse, $F(q^\star(\tilde p_{\Dfix})) \texttt{-} F(q^\star)$.

Let $q^\star_{\Dfix}$ denote the optimal mix on $\Dfix$ after the domain set update, such that $q^\star = [\rho^\star q^\star_{\Dfix},\; (1-\rho^\star)q^\star_{\Dcomp}]$. Let $\kappa(\Dfix, \Dcomp)$ denote a coupling term (defined in Appendix~\ref{supp:defs}) that captures task influence of $\Dfix$ versus $\Dcomp$.

\subsubsection{Performance Characterization}

Our first result characterizes the performance gap in terms of $\tilde{p}_{\Dfix}$ itself. See Appendix~\ref{supp:general_gap_proof} for the proof.

\begin{theorem}[Performance gap bound] \label{thm:general_gap} There exists a finite $C_1 > 0$ such that the performance gap is bounded by
\begin{align*} 
&F(q^\star(\tilde{p}_{\Dfix})) - F(q^\star) \le C_1 \kappa(\Dfix, \Dcomp) \|\tilde{p}_{\Dfix} - q^\star_{\Dfix}\|.
\end{align*} 
\end{theorem}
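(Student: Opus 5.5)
The plan is a comparison argument: construct a feasible point of the reuse problem~\eqref{eq:conditional_mixing_problem} that is close to $q^\star$, and bound the loss of that point by a first-order (Lipschitz-type) estimate.

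\textbf{Step 1: comparison point.} Since $q^\star(\tilde p_{\Dfix})$ minimizes~\eqref{eq:conditional_mixing_problem} by Assumption 2, it suffices to exhibit one feasible $\hat q$ for that problem and bound $F(\hat q) - F(q^\star)$. The natural choice keeps the optimal outer split and compute-side mix and swaps in the reused ratios:
\[
\hat q := [\rho^\star \tilde p_{\Dfix},\ (1-\rho^\star) q^\star_{\Dcomp}] = \Phi_{\tilde p_{\Dfix}}(r^\star),
\]
where $r^\star = [\rho^\star, (1-\rho^\star)q^\star_{\Dcomp}]$. Feasibility needs the repetition constraints $\rho^\star \tilde p_j \le kN_j'/R$ for $j \in \Dfix$. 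This may fail. If it does, I would shrink $\rho^\star$ slightly and move the freed mass onto $\Dcomp$. Alternatively, I would state the result on the region where the constraint is inactive and absorb the correction into $C_1$. Then $F(q^\star(\tilde p_{\Dfix})) - F(q^\star) \le F(\hat q) - F(q^\star)$.

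\textbf{Step 2: per-task expansion.} Under Assumption 1, $F(q) = \frac1n\sum_i (c_i + \exp(A_i^\top q))$. The two points differ only on the $\Dfix$ block, so $\hat q - q^\star = [\rho^\star(\tilde p_{\Dfix} - q^\star_{\Dfix}),\ 0]$. By the mean value theorem applied to $\exp$,
\[
\exp(A_i^\top \hat q) - \exp(A_i^\top q^\star) = \exp(A_i^\top \xi_i)\,\rho^\star\, A_{i,\Dfix}^\top(\tilde p_{\Dfix} - q^\star_{\Dfix}),
\]
for some $\xi_i$ on the segment between the two points. Both points lie in the simplex, so $\exp(A_i^\top\xi_i) \le \exp(\max_j |A_{ij}|)$, which is finite. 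Summing over tasks and applying Cauchy--Schwarz gives
\[
F(\hat q) - F(q^\star) \le \Big(\max_i e^{\|A_i\|_\infty}\Big) \cdot \Big\|\tfrac{\rho^\star}{n}\textstyle\sum_i w_i A_{i,\Dfix}\Big\| \cdot \|\tilde p_{\Dfix} - q^\star_{\Dfix}\|,
\]
where the $w_i$ are bounded exponential weights.

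\textbf{Step 3: identify the coupling term.} This step is the main obstacle, because it depends on how Appendix~\ref{supp:defs} defines $\kappa(\Dfix,\Dcomp)$. The plan is to use the first-order optimality (KKT) conditions of $q^\star$. At $q^\star$, the gradient $\frac1n\sum_i e^{A_i^\top q^\star}A_i$ restricted to the support of $\Dfix$ equals a common multiplier, up to active repetition constraints. Because $\tilde p_{\Dfix} - q^\star_{\Dfix}$ sums to zero, the first-order term therefore reduces to the deviation of $\Dfix$'s task-influence vectors $A_{i,\Dfix}$ from uniform. That deviation is induced through the shared exponential weights, which also depend on $\Dcomp$ via $A_{i,\Dcomp}^\top q^\star_{\Dcomp}$. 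I would expect $\kappa$ to measure exactly this: how strongly tasks influenced by $\Dfix$ are also influenced by $\Dcomp$, for instance a normalized cross term $\sum_i \|A_{i,\Dfix}\|\,\|A_{i,\Dcomp}\|$, or an analogous sensitivity quantity.

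So I would bound the directional derivative by $\kappa$, and place the remaining pieces into $C_1$: the exponential bounds, $\rho^\star \le 1$, a second-order remainder controlled by $\|\tilde p_{\Dfix}-q^\star_{\Dfix}\| \le 2$, and any feasibility correction. Finiteness of $C_1$ follows from compactness of the simplex.
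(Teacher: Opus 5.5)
Your Steps 1--2 are correct, and once two loose ends are tied they already prove the theorem, by a route that differs from the paper's and gives a sharper constant.

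\textbf{Loose end 1: feasibility of $\hat q$.} Do not repair feasibility of $\hat q=\Phi_{\tilde p_{\Dfix}}(r^\star)$ by shrinking $\rho^\star$. Shifting mass onto $\Dcomp$ can violate its caps. Also, the needed reduction is not uniformly $O(\|\tilde p_{\Dfix}-q^\star_{\Dfix}\|)$: a domain with $q^\star_j=0$, $\tilde p_j=\epsilon$ and cap $kN_j'/R=\epsilon^2$ forces $\rho\le\epsilon$. The paper instead assumes mutual feasibility (Assumption~\ref{ass:1}, item 4). Its half $r^\star(q^\star_{\Dfix})\in\mathcal S(\tilde p_{\Dfix})$ states exactly that your $\hat q$ is feasible, so your second option (assume the constraint is inactive) is the right one.

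\textbf{Loose end 2: Step 3 is unnecessary.} The paper defines $\kappa=\|(\mathbf 1+\alphafix+\alphacomp)\odot\alphafix\|$ with $\alpha_{i,\fix}=\|A_{i,\fix}\|$. Every entry of $\mathbf 1+\alphafix+\alphacomp$ is at least $1$, so $\kappa\ge\|\alphafix\|$. Write $\Delta=\tilde p_{\Dfix}-q^\star_{\Dfix}$ and $a_{\max}=\max_i\|A_{i,\fix}\|$. Bound the mean-value factor by $\exp(A_i^\top\xi_i)\le \exp(A_i^\top q^\star)\exp(\alpha_{i,\fix}\|\Delta\|)$, then apply Cauchy--Schwarz with $\sum_i e^{A_i^\top q^\star}=n\bar F(q^\star)$. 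This gives
\[
F(q^\star(\tilde p_{\Dfix}))-F(q^\star)\le \bar F(q^\star)\,e^{a_{\max}\|\Delta\|}\,\|\alphafix\|\,\|\Delta\|\le \bar F(q^\star)\,e^{a_{\max}\|\Delta\|}\,\kappa(\Dfix,\Dcomp)\,\|\Delta\|,
\]
with no KKT analysis needed. The KKT direction was also heading somewhere false. If $\kappa$ were a pure cross term such as $\sum_i\|A_{i,\fix}\|\|A_{i,\comp}\|$, the statement would fail. Take one task with $A_{1,\fix}=(-1,0)$, $A_{1,\comp}=0$, slack caps and $\tilde p_{\Dfix}=(1/2,1/2)$. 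The gap is $e^{-1/2}-e^{-1}>0$, while that cross term vanishes.

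\textbf{Comparison with the paper's route.} With $p_1=\tilde p_{\Dfix}$ and $p_1'=q^\star_{\Dfix}$, the paper splits the gap into $[F(r^\star(p_1),p_1)-F(r^\star(p_1'),p_1)]+[F(r^\star(p_1'),p_1)-F(r^\star(p_1'),p_1')]$.
\begin{itemize}
\item It bounds the first bracket by $\|\nabla_r F(r^\star(p_1),p_1)\|\,\|r^\star(p_1')-r^\star(p_1)\|$. It controls that argmin shift using $\mu$-strong convexity in $r$, both halves of mutual feasibility, and the mixed derivative $\nabla^2_{r,p_1}F$ (Lemma~\ref{lem:stability}).
\item It bounds the second bracket by the mean value theorem in $p_1$ (Lemmas~\ref{lem:gap_decomp} and~\ref{lemma:grad_p_1}).
\end{itemize}
Your comparison point amounts to noticing that the first bracket is $\le 0$: $r^\star(p_1')$ is feasible for the problem that $r^\star(p_1)$ solves. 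Your Step 2 is then the paper's treatment of the second bracket. You therefore recover exactly the second summand $\bar F(q^\star)e^{a_{\max}\|\Delta\|}\|\alphafix\|$ of the paper's constant $C$ and drop the $\mu^{-1}$ summand. You also need neither strong convexity nor the other half $r^\star(\tilde p_{\Dfix})\in\mathcal S(q^\star_{\Dfix})$ of mutual feasibility.

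The paper's longer route does yield a by-product stability bound on how far the recomputed collapsed mix moves. It is also where $\alphacomp$ enters the constant, which is the basis of the paper's coupling interpretation. For the gap bound itself, your argument is simpler and sharper.
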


The performance gap is controlled by two factors: \begin{itemize}[itemsep=0pt,topsep=2pt,leftmargin=12pt] 
\item $\|\tilde{p}_{\Dfix} - q^\star_{\Dfix} \|$ (``reuse gap''): how close the mix we reuse is to the optimal mix after the update. 
\item $\kappa(\Dfix, \Dcomp)$: this coupling term is large when $\Dfix$ and $\Dcomp$ impact the same set of downstream tasks. It controls the rate at which performance depends on the reuse gap. 
\end{itemize}

When both terms are small, \fullmixreuse matches full recomputation. Our empirical validation (Figure~\ref{fig:theorem1_validation}) confirms that the reuse gap strongly predicts actual performance gaps across different scenarios.

\subsubsection{Reuse Gap}

Theorem~\ref{thm:general_gap} shows that the performance gap is governed by the reuse gap, but when is the reuse gap small? 
We analyze the case where $\tilde{p}$ is assumed to be optimal before the update, and the domain set is modified via an \Add update.
Proofs and results on other operators are in Appendix~\ref{supp:add_domains_proof}.

\begin{theorem}\label{thm:add_domains}
Define $\tilde{p}$ as the solution to~\eqref{eq:mixing_problem} on $\D$ and suppose that new domains are added. There exists a finite $C_2 > 0$ such that the reuse gap is bounded by
    \begin{align*}
        \|\tilde{p}_{\Dfix} - q^\star_{\Dfix}\| \le C_2 \kappa(\Dfix, \Dcomp) (1 - \rho^\star).
    \end{align*}
\end{theorem}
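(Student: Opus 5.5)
We will compare the two first-order optimality systems: the one that characterizes $\tilde{p}$ on the old domain set $\D=\Dfix$ (for \Add, $\D_1=\Dfix=\D$), and the one that characterizes the new optimum $q^\star=[\rho^\star q^\star_{\Dfix},(1-\rho^\star)q^\star_{\Dcomp}]$ restricted to the $\Dfix$ block.

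\textbf{Objective on the fixed block.} Under the log-linear assumption, write the objective as $F(q)=\frac1n\sum_i c_i+\exp(A_{i,\fix}^\top q_{\fix}+A_{i,\comp}^\top q_{\comp})$. With $\rho^\star$ and $q^\star_{\Dcomp}$ held at their optimal values, $q^\star_{\Dfix}$ minimizes over the simplex, subject to the scaled repetition constraints,
\[
G(x):=\frac1n\sum_i w_i \exp(\rho^\star A_{i,\fix}^\top x), \qquad w_i:=\exp\bigl((1-\rho^\star)A_{i,\comp}^\top q^\star_{\Dcomp}\bigr).
\]
By optimality, $\tilde{p}$ minimizes $G_0(x):=\frac1n\sum_i \exp(A_{i,\fix}^\top x)$ over the same simplex with the old constraints. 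So the comparison is between two convex programs of the same shape, and it has two sources of difference:
\begin{itemize}
\item \textbf{Task reweighting.} Each task gets the multiplicative weight $w_i$.
\item \textbf{Exponent rescaling.} The exponent is scaled by $\rho^\star=1-(1-\rho^\star)$.
\end{itemize}

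\textbf{Key steps.}
\begin{enumerate}
\item Write the KKT conditions for both problems: $\nabla G(q^\star_{\Dfix})+\mu\mathbf 1-\nu=0$ and $\nabla G_0(\tilde p)+\mu_0\mathbf 1-\nu_0=0$.
\item Assume strong convexity of $G_0$ on the feasible face with modulus $\sigma>0$, restricted to the tangent space of the simplex (or assume local nondegeneracy). This is where the finite constant comes from. Combined with standard perturbation of strongly convex minimizers, it gives
\[
\|\tilde p-q^\star_{\Dfix}\|\le \sigma^{-1}\sup_x\|\Pi(\nabla G(x)-\nabla G_0(x))\|,
\]
where $\Pi$ projects onto the simplex tangent space.
\item Bound the gradient difference. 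We have $\nabla G(x)=\frac{\rho^\star}{n}\sum_i w_i e^{\rho^\star A_{i,\fix}^\top x}A_{i,\fix}$, so $\nabla G-\nabla G_0$ is a sum of terms of size $|w_i\rho^\star e^{\rho^\star a}-e^{a}|\,\|A_{i,\fix}\|$. Use the mean value theorem in the scalar $t=1-\rho^\star$. Both $w_i-1$ and the exponent change are linear in $t$ near $t=0$, with coefficients $|A_{i,\comp}^\top q^\star_{\Dcomp}|$ and $|A_{i,\fix}^\top x|$. This gives
\[
\|\nabla G-\nabla G_0\|\le C\,(1-\rho^\star)\cdot\frac1n\sum_i \|A_{i,\fix}\|\bigl(\|A_{i,\comp}\|+\|A_{i,\fix}\|\bigr).
\]
Here $C$ absorbs bounded exponentials on the compact simplex.
\end{enumerate}

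\textbf{Main obstacle.} The delicate point is matching the last factor to $\kappa(\Dfix,\Dcomp)$ as defined in Appendix~\ref{supp:defs}. Presumably $\kappa$ is a cross-task inner product of the form $\frac1n\sum_i\|A_{i,\fix}\|\|A_{i,\comp}\|$. The pure-rescaling term $\|A_{i,\fix}\|^2$ is not of that form. I would handle it by noting that rescaling the exponent by $\rho^\star$ multiplies the gradient uniformly. After the projection $\Pi$ it does not change the minimizer to first order, since the old KKT direction is preserved up to the multiplier. So only the $w_i$-reweighting term survives, and that term carries the coupling factor $\|A_{i,\comp}\|$.

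If that first-order cancellation is not exact, I would fall back to a bound of the same form with the rescaling residual folded into $C_2$. This uses that $\kappa$ is bounded below whenever the $\Dcomp$ weight matters. Otherwise, when $\kappa=0$, $w_i$ is constant across tasks and exact cancellation does occur.

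Finally, active repetition constraints have to be handled. Here I would assume the active set is unchanged between the two problems, so that the strong convexity argument applies on a fixed face.
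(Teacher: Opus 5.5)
Your steps 1--3 follow the paper's own argument. With $r_0=[1,\mathbf{0}]$, the paper compares the constrained minimizers of $h=F(r_0,\cdot)$ (your $G_0$) and $h'=F(r^\star,\cdot)$ (your $G$). It uses $\mu_1$-strong convexity of $h$ to get $\|\tilde p_{\Dfix}-q^\star_{\Dfix}\|\le\mu_1^{-1}\|\nabla h(q^\star_{\Dfix})-\nabla h'(q^\star_{\Dfix})\|$. It then integrates $\nabla^2_{p_1,r}F$ along $r^t=tr_0+(1-t)r^\star$, which moves only $\rho$ and keeps $q_{\Dcomp}$ at $q^\star_{\Dcomp}$, exactly your interpolation in $1-\rho$. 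This yields the factor $\|r_0-r^\star\|\le 2(1-\rho^\star)$.

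\textbf{Where the proposal fails.} The step that fails is your resolution of the ``main obstacle.'' Rescaling the exponent does not rescale the gradient uniformly:
\[
\nabla_x\sum_i e^{\rho A_{i,\fix}^\top x}=\rho\sum_i e^{-(1-\rho)A_{i,\fix}^\top x}e^{A_{i,\fix}^\top x}A_{i,\fix},
\]
and the factor $e^{-(1-\rho)A_{i,\fix}^\top x}$ differs across tasks. Only the chain-rule prefactor $\rho$ is uniform. Concretely, minimizing $e^{-2\rho s}+e^{-\rho(1-s)}$ over $s\in[0,1]$ gives $s=(\ln 2+\rho)/(3\rho)$, which moves at first order in $1-\rho$. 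Treat these as two tasks that see only $\Dfix$, and add a third task that sees only the new domain, with $A_{3,\comp}$ negative enough that $\rho^\star<1$. Then the cross-coupling $\sum_i\|A_{i,\fix}\|\|A_{i,\comp}\|$ is zero, but the reuse gap is not. So with the purely cross-task $\kappa$ you guessed, the inequality would be false, and ``exact cancellation when $\kappa=0$'' is wrong. Folding the residual into $C_2$ through a lower bound on $\kappa$ is also unavailable.

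\textbf{The fix.} No cancellation is needed, because the paper defines
\[
\kappa(\Dfix,\Dcomp)=\|(\mathbf{1}+\alphafix+\alphacomp)\odot\alphafix\|,\qquad \alpha_{i,\fix}=\|A_{i,\fix}\|,\quad \alpha_{i,\comp}=\|A_{i,\comp}\|.
\]
Its three pieces are exactly what your mean-value step produces. The cross-derivative is $\frac1n\sum_i e^{s_i}(\rho a_i+e_1)A_{i,\fix}^\top$, where $s_i$ is task $i$'s exponent and $a_i=(\langle A_{i,\fix},p_1\rangle,A_{i,\comp})$.
\begin{itemize}
\item The $e_1$ term comes from differentiating the prefactor $\rho$ and gives $\alpha_{i,\fix}$. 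This is the only piece your uniform-rescaling intuition applies to, and your displayed bound omits it.
\item The change inside the exponent gives $\alpha_{i,\fix}|\langle A_{i,\fix},p_1\rangle|\le\alpha_{i,\fix}^2$.
\item The reweighting gives $\alpha_{i,\fix}\alpha_{i,\comp}$.
\end{itemize}
Bound these entrywise and apply Cauchy--Schwarz over tasks, using $\sqrt{\sum_i e^{2s_i}}\le\sum_i e^{s_i}$. This gives $\bar F(q^\star)e^{c_{\max}(1-\rho^\star)}\kappa$, where $\bar F$ is $F$ without the constants $c_i$ and $c_{\max}=\max_i(\alpha_{i,\fix}+\alpha_{i,\comp})$. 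Hence $C_2=2\bar F(q^\star)e^{c_{\max}(1-\rho^\star)}/\mu_1$.

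\textbf{Repetition constraints.} ``Same active set on a fixed face'' does not handle them. In $p_1$-coordinates the caps on $\Dfix$ are $kN_j/R$ before the update and $kN_j/(R\rho^\star)$ after. A cap active in both problems therefore puts the two minimizers on different slices, with coordinate gap $\frac{kN_j}{R}(\frac{1}{\rho^\star}-1)$, so your fixed-face argument does not apply. The paper instead assumes mutual feasibility: $\tilde p_{\Dfix}\in\mathcal S(r^\star)$ and $q^\star_{\Dfix}\in\mathcal S(r_0)$. This lets the two first-order variational inequalities be added directly.
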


The reuse gap is controlled by two factors:
\begin{itemize}[itemsep=0pt,topsep=2pt,leftmargin=12pt]
    \item $\kappa(\Dfix, \Dcomp)$: the coupling term, also in Theorem~\ref{thm:general_gap}.
    \item $1 - \rho^\star$: this term captures how much newly added domains move the optimum, since $1$ is effectively $\rho$ induced by $\tilde{p}$ before domains are added. This term can be small when: 1) the added domains are low utility, or 2) the added domains have little data, so the repetition constraint caps their maximum weight.
\end{itemize}    

Our empirical validation (Figures \ref{fig:theorem2_validation}- \ref{fig:theorem2_validation_coupling}) shows that the success of \fullmixreuse when adding domains correlates with small $1-\rho^\star$ and low coupling.

\subsection{\partialmixreuse}
\label{sec:partial-reuse}

Our analysis (\S\ref{sec:analysis}) shows that \fullmixreuse may underperform full recomputation when coupling between $\Dfix$ and $\Dcomp$ is high. We propose \partialmixreuse (Figure~\ref{fig:mixreuse} center) as a middle ground: rather than reusing all unaffected domains, it selectively recomputes some, reducing coupling while keeping costs below full recomputation.

\textbf{Method} Let $\Dpartial \subset \D_1$ be a subset of unaffected domains we reuse. We redefine the domains we reuse versus recompute as $\Dfix := \Dpartial$ and $\Dcomp := (\D_1 \setminus \Dpartial) \cup \D_2'$, and then apply \fullmixreuse (Algorithm~\ref{alg:conditional_mixing_method}) with this new partition. The mixing problem has dimension $1 + |\D_2'| + |\D_1 \setminus \Dpartial|$, interpolating between \fullmixreuse's $1 + |\D_2'|$ and full recomputation's $m'$. Since swarm cost scales linearly with dimension (RQ2), this directly interpolates computational cost.

\textbf{Interpretation and Empirical Validation} Carefully choosing $\Dpartial$ can reduce the coupling $\kappa(\Dfix, \Dcomp)$ from Theorem~\ref{thm:general_gap}. 
An intuitive example: when adding code data to web topics, the software development web topic should be recomputed alongside it since both strongly influence code evaluation tasks. Figure~\ref{fig:theorem2_validation_coupling} confirms that this reduces coupling and improves performance compared to reusing all web topics.

\section{Experimental Results}\label{sec:results}

In \S\ref{sec:superswarm}, we evaluate \olmix---using \fullmixreuse and \partialmixreuse, with recomputation done via \base---in an LM development setting where the domain set evolves through several updates. In \S\ref{sec:validation}, we empirically validate our results from \S\ref{sec:analysis}, confirming that our theoretical analysis accurately characterizes mixture reuse performance in practice.

\subsection{Real-World LM Development Scenario}\label{sec:superswarm}

We evaluate \fullmixreuse and \partialmixreuse across a sequence of five domain updates mirroring real-world LM development. Our results show that these methods maintain strong performance while substantially reducing computational costs compared to full recomputation.

\subsubsection{Setup}

See Appendix~\ref{supp:exp_details} for full experiment details on evaluation tasks and the $1$B target models we train.

\textbf{Domain Updates} We simulate a real-world LM development workflow with an initial web corpus that undergoes 5 updates (Table~\ref{tab:domain-updates-full}). 
The final domain set contains 64 domains (token counts in Table~\ref{tab:domain_sizes}).

\begin{table}[!h]
\centering
\caption{Datasets used for simulated domain updates.}
\small
\begin{tabular}{l p{0.6\linewidth} r}
\toprule
\textbf{Operation} & \textbf{Dataset(s)} & \textbf{$\Delta m$} \\
\midrule
Initial &
DCLM~\citep{dclm} partitioned into topical domains using WebOrganizer~\citep{wettig2025organizewebconstructingdomains}) &
24 \\
\midrule
\Add &
Stack-Edu~\citep{allal2025smollm2smolgoesbig}, partitioned by programming language &
+15 \\
\Add &
ArXiv~\citep{azerbayev2023llemma}, FineMath~3+~\citep{allal2025smollm2smolgoesbig}, olmOCR Science PDFs~\citep{olmo2025olmo3}, Dolma 1 Wikipedia~\citep{soldaini2024dolma}, AlgebraicStack~\citep{azerbayev2023llemma}, pes2o~\citep{peS2o} &
+6 \\
\Revise &
olmOCR Science PDFs~\citep{olmo2025olmo3} (reformatted tables and references) &
0 \\
\Remove &
AlgebraicStack~\citep{azerbayev2023llemma} &
$-1$ \\
\Partition &
olmOCR Science PDFs~\citep{olmo2025olmo3}, partitioned into topical domains using WebOrganizer~\citep{wettig2025organizewebconstructingdomains} &
+20 \\
\bottomrule
\end{tabular}
\label{tab:domain-updates-full}
\end{table}

\textbf{Methods compared.} We consider:
\begin{itemize}[itemsep=0pt,topsep=0pt,leftmargin=12pt]
    \item Natural: mix proportional to domain sizes.
    \item Full recomputation: apply \base after each update (high performance, high cost).
    \item Swarm reuse: reuse all accumulated proxy runs that can represented on $\D'$ and apply \base on the combined swarm (see Algorithm~\ref{alg:swarm_reuse_method}). 
    \item \fullmixreuse (our method): reuse ratios for unaffected domains, recompute affected ones. 
    \item \partialmixreuse (our method): reuse ratios within DCLM topics and within Stack-Edu languages while recomputing at the source level; recompute DCLM:software development when adding Stack-Edu due to high coupling (see \S~\ref{sec:validation_results} for justification).
\end{itemize}

\textbf{Experimental protocol.} 
For full recomputation, we use swarm size $K \approx c(m + 1)$ and vary $c \in \{1, 2, 3\}$ to showcase different compute regimes. For swarm reuse and mixture reuse, we set swarm sizes to roughly match that of full recomputation with $c=1$.
We report results over $3$ random seeds of swarms with $k=4$, $R=1$T. Full details are in Appendix~\ref{supp:imp_details}.

\begin{figure}[H]
    \centering
    \includegraphics[width=0.6\linewidth]{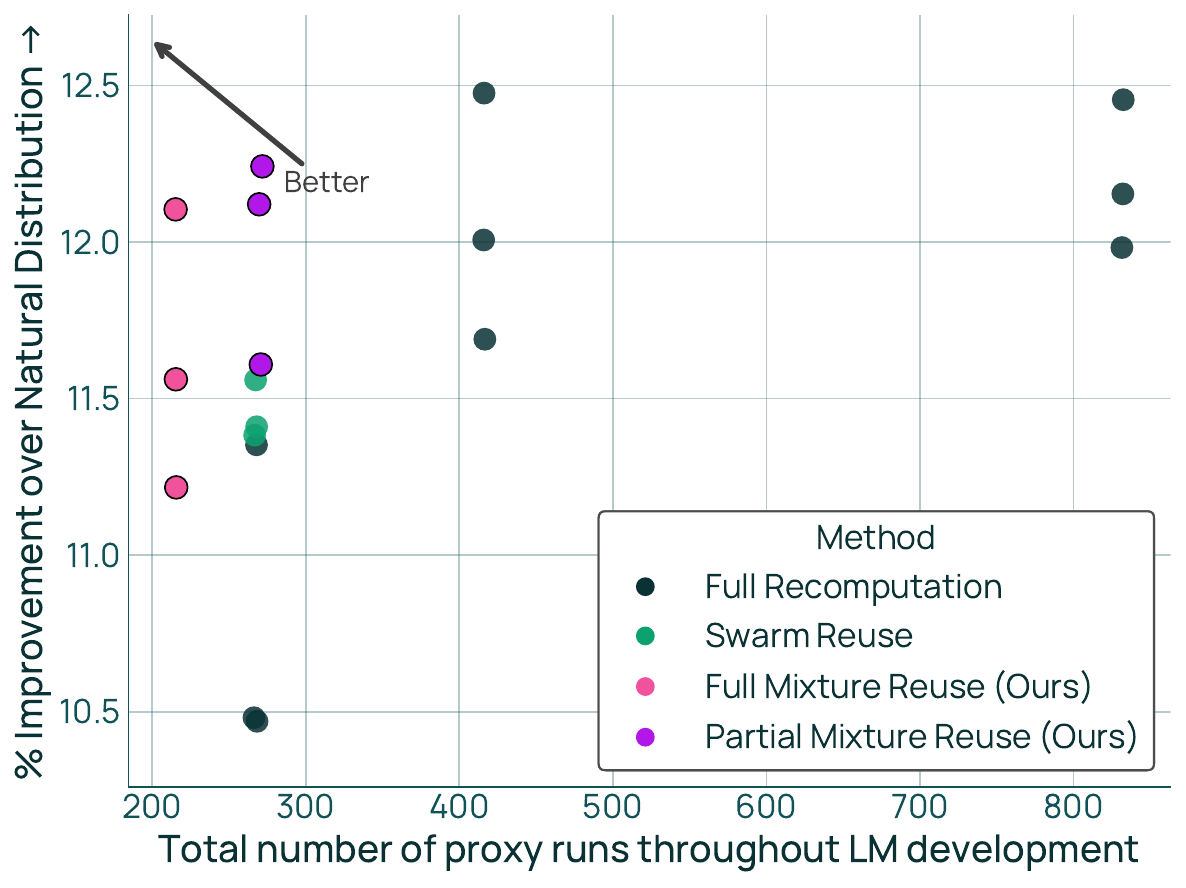}
    \caption{Performance improvement versus cost of mixing under evolving domains. \fullmixreuse and \partialmixreuse achieve $>95\%$ of the improvement of full recomputation while using at least $67\%$ fewer proxy runs.}
    \label{fig:main_results}
\end{figure}

\subsubsection{Results}\label{sec:results_1}

See Appendix~\ref{sec:additional_exp_results} for additional results on: 1) $R=6$T, 2) smaller proxy run budgets, and 3) individual domain update operators.

\textbf{\fullmixreuse roughly matches full recomputation at substantially lower cost.} Figure~\ref{fig:main_results} shows relative improvement in downstream performance (average BPB) over the natural distribution versus the total number of proxy runs across all 5 updates. \fullmixreuse achieves 95\% of full recomputation's ($c\texttt{=}3$) performance improvement (+11.6\% versus +12.2\%) while using 74\% fewer total proxy runs (216 versus 832). 

\textbf{\partialmixreuse closes the remaining gap between \fullmixreuse and full recomputation.}
By selectively recomputing some unaffected domains, \partialmixreuse achieves 98\% of full recomputation's performance (+12.0\%) with 272 total runs---still 67\% fewer than full recomputation.

\textbf{Mixture reuse outperforms swarm reuse.}
Swarm reuse achieves +11.4\% with 268 runs, underperforming \fullmixreuse (+11.6\% with 216 runs) despite using more runs. This is likely because (1) representing old swarms on updated domain sets over-explores biased subspaces, and (2) swarms cannot be reused when domains are removed or revised.

\textbf{At matched proxy run budgets, mixture reuse outperforms alternatives.} 
At a budget of 216-272 proxy runs, \fullmixreuse and \partialmixreuse achieve +11.6\% and +12.0\% improvement over the natural distribution, respectively. At this same budget, swarm reuse achieves +11.4\% and full recomputation achieves +10.8\% ($c=1$).

\textbf{Our best mixture is 3.05× more data-efficient than the natural distribution.}
Beyond final performance, we measure data efficiency: how many training steps does our best learned mixture need to match the natural distribution's final performance? Figure~\ref{fig:checkpoints} shows that \partialmixreuse reaches the natural distribution's final BPB in approximately 20,000 steps versus 61,000 steps---a 3.05× speedup.

\begin{figure}
    \centering
    \includegraphics[width=0.6\linewidth]{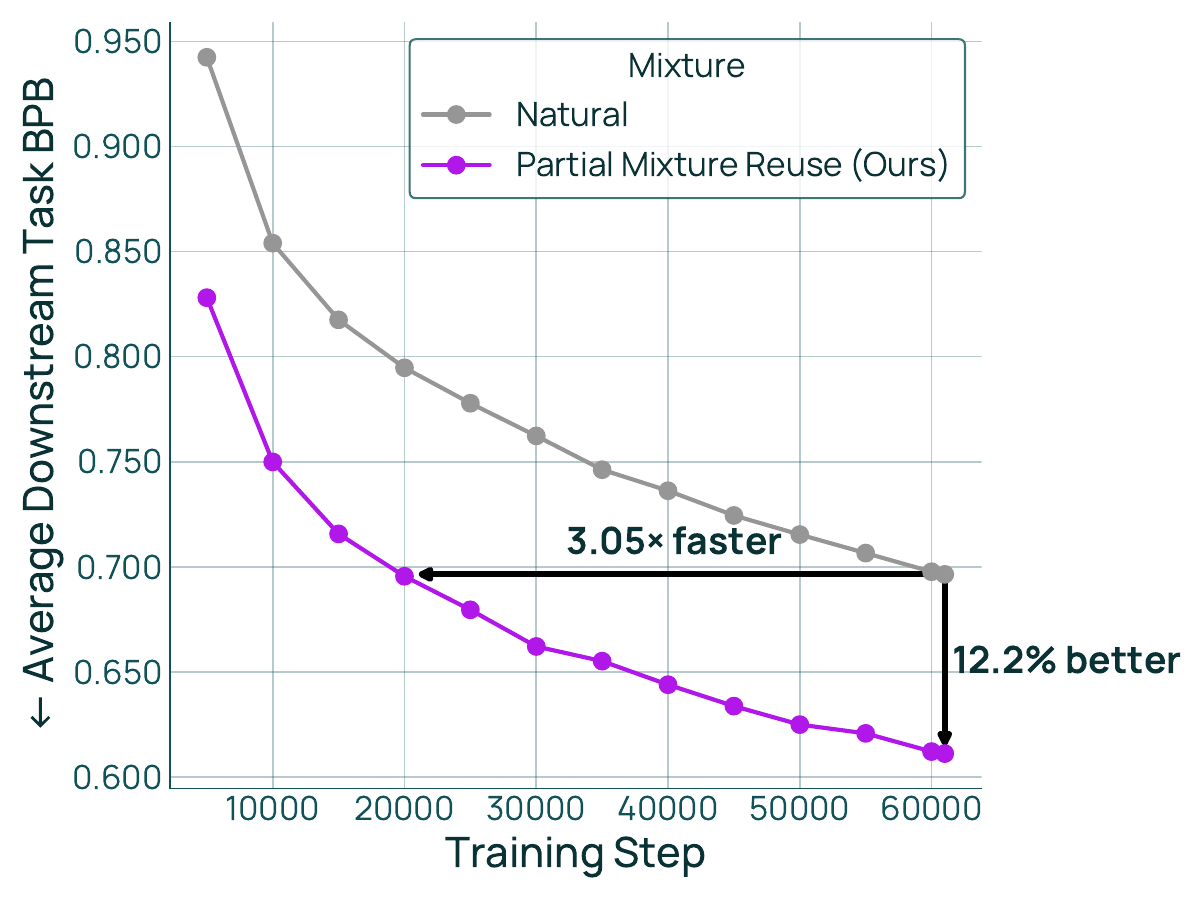}
    \caption{Downstream performance across training for \partialmixreuse (best seed) versus the natural distribution. \partialmixreuse reaches the natural distribution's final performance in 3.05$\times$ fewer steps.}
    \label{fig:checkpoints}
\end{figure}

\textbf{Qualitative analysis of learned mixtures.}
Figure~\ref{fig:superswarm_proposed_mixes} shows the final proposed mixes over a subset of domains (full mix in Table~\ref{tab:full_domain_mixtures}). \partialmixreuse is more similar to full recomputation (total variation distance of 0.067) than to the natural distribution (distance of 0.127), aligning with our downstream performance findings.

\begin{figure}
    \centering
    \includegraphics[width=0.8\linewidth]{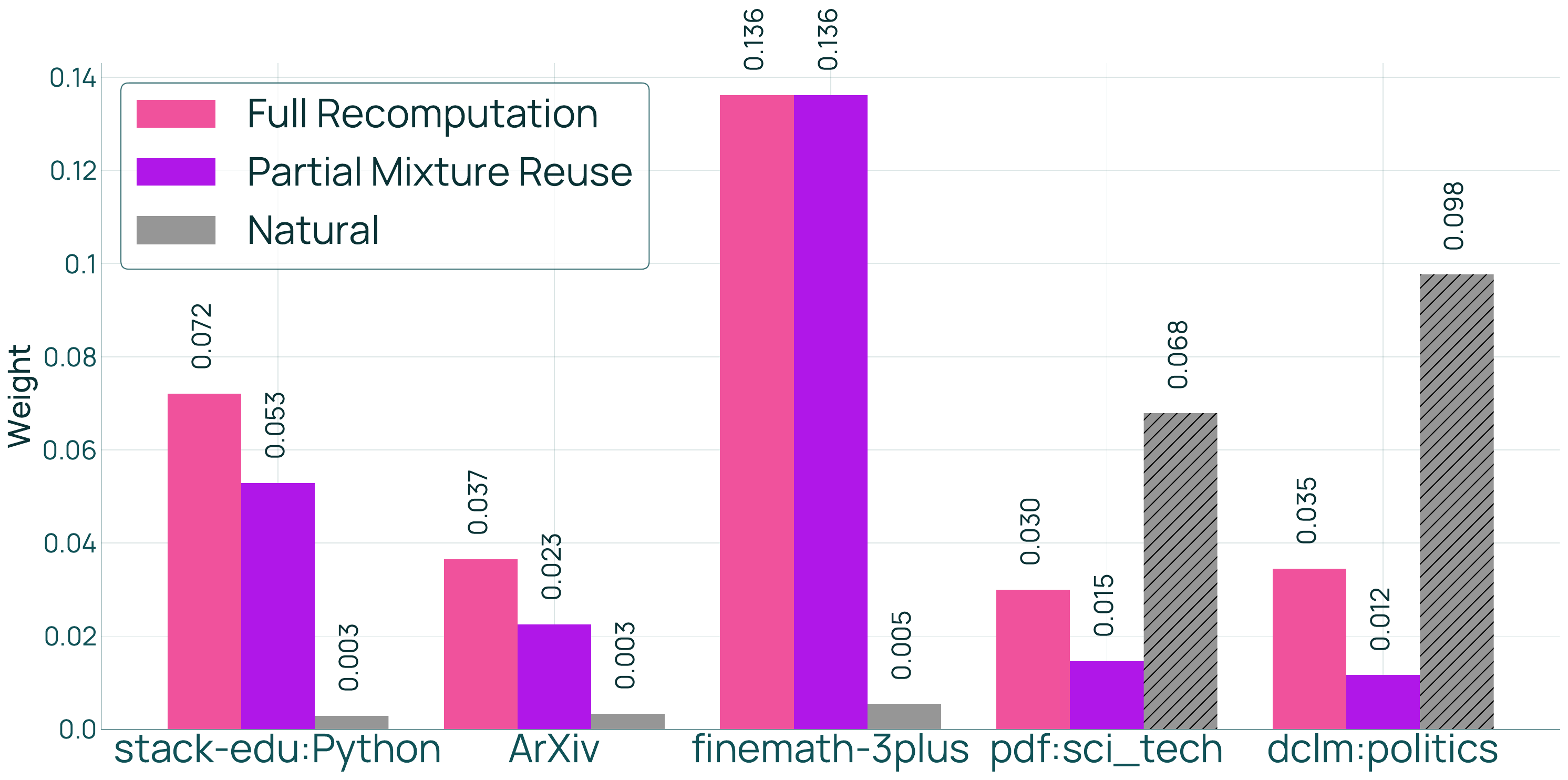}
    \caption{Proposed mixes for different mixing strategies. The mixes produced by full recomputation and \partialmixreuse are more similar to each other than they are to the natural distribution, confirming the performance results in Figure~\ref{fig:main_results}. Domains shown have the greatest difference in weights between full recomputation and natural.}
    \label{fig:superswarm_proposed_mixes}
\end{figure}

\subsection{Empirical Validation of Theorems~\ref{thm:general_gap} and~\ref{thm:add_domains}}\label{sec:validation}

We empirically measure the terms in Theorems~\ref{thm:general_gap} and~\ref{thm:add_domains} to assess whether they track the performance of mixture reuse.

\subsubsection{Setup}

We consider two examples of an \Add update:
\begin{itemize}
    \item $\D_1=$DCLM~\citep{dclm} partitioned into 24 topic-based domains using WebOrganizer~\citep{wettig2025organizewebconstructingdomains}, $\D_2'$ = Stack-Edu~\citep{allal2025smollm2smolgoesbig} partitioned into $15$ programming languages.
    \item $\D_1=$DCLM partitioned into 24 topic-based domains using WebOrganizer, $\D_2'$ = olmOCR Science PDFs~\citep{olmo2025olmo3} partitioned into $21$ topic-based domains using WebOrganizer. 
\end{itemize}

\subsubsection{Results}\label{sec:validation_results}

\textbf{Performance is correlated with the reuse gap.} 
Mixture reuse yields $q^\star(\tilde{p}_{\Dfix})$ from Algorithm~\ref{alg:conditional_mixing_method} given reused ratios $\tilde{p}_{\Dfix}$. Full recomputation yields $q^\star$ from directly solving~\eqref{eq:mixing_problem} for the optimal mix on $\D'$.
Theorem~\ref{thm:general_gap} shows that the performance gap of mixture reuse, $F(q^\star(\tilde{p}_{\Dfix})) - F(q^\star)$, where $F$ is the average downstream task BPB, is bounded in terms of the reuse gap, $\|\tilde{p}_{\Dfix} - q_{\Dfix}^\star \|$, where $q_{\Dfix}^\star$ is $q^\star$ normalized over $\Dfix$. We validate that the reuse gap predicts the performance gap.

To do this, we construct different values of $\tilde{p}_{\Dfix}$ with varying reuse gaps and measure the resulting performance gaps.
We first approximate $q_{\Dfix}^\star$ (and $F(q^\star)$) by running full recomputation on $\D'$ using \base and normalizing the resulting mix over $\Dfix$. 
We then construct two $\tilde{p}_{\Dfix}$:
\begin{itemize}
    \item Weak mix: to construct a $\tilde{p}_{\Dfix}$ far from $q^\star_{\Dfix}$, we run \base on $\D'$ with a modified objective that maximizes BPB and then normalize the resulting mix over $\Dfix$.
    \item Intermediate mix: we average the weak mix with $q^\star_{\Dfix}$.
\end{itemize}

Figure~\ref{fig:theorem1_validation} confirms the findings of Theorem~\ref{thm:general_gap}: as the reuse gap increases, the performance gap also increases across both settings.

\begin{figure}
    \centering
    \includegraphics[width=0.4\linewidth]{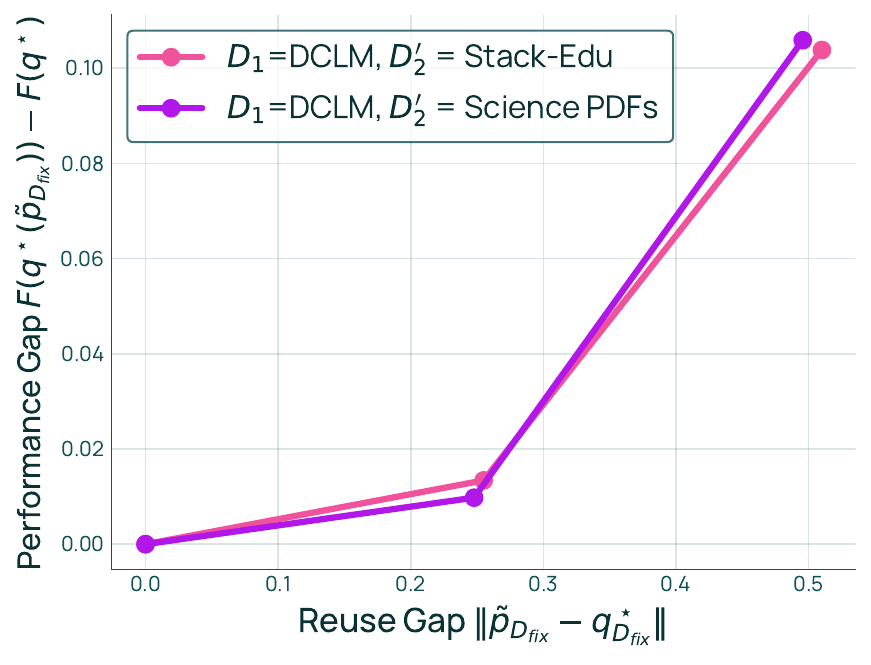}
    \caption{Performance vs Reuse Gap. The optimality of $\tilde{p}_{\Dfix}$ after the domain update is correlated with the success of mixture reuse.}
    \label{fig:theorem1_validation}
\end{figure}

\textbf{The reuse gap is controlled by how much the domain update shifts the optimal mixture.}
When adding domains, Theorem~\ref{thm:add_domains} shows that the reuse gap (and consequently the performance gap) depends on how much total weight shifts to the affected domains. Specifically, the reuse gap depends on $1 - \rho^\star$, where $\rho^\star$ is the total weight on the reused domains $\D_1$ in the optimal mix $q^\star$.
We validate that $1 - \rho^\star$ predicts both the reuse gap and performance gap.

To do this, we construct settings with different $\rho^\star$ by varying the repetition constraint. A tight constraint (small $k$ or large $R$ in~\eqref{eq:mixing_problem}) forces the mix to stay close to the natural distribution, yielding $\rho^\star \approx 1$ when $\D_1$ is much larger than the added domains. A relaxed constraint allows more weight to shift to new domains, potentially yielding smaller $\rho^\star$.
We test two settings: $R = 1$T (relaxed) and $R = 6$T (tight). For each $R$, we 1) compute $\tilde{p}_{\Dfix}$ by running \base on $\D$; 2) apply \fullmixreuse on $\D'$ using $\tilde{p}_{\Dfix}$; and 3) use full recomputation (run \base on $\D'$) to get an approximate $\rho^\star$, the reuse gap, and the performance gap.

Figure~\ref{fig:theorem2_validation} shows that larger $1 - \rho^\star$ correlates with both a larger reuse gap and a larger performance gap across both settings, validating that the extent to which the domain update shifts the optimal mixture directly impacts mixture reuse performance.

\begin{figure}
    \centering
    \includegraphics[width=\linewidth]{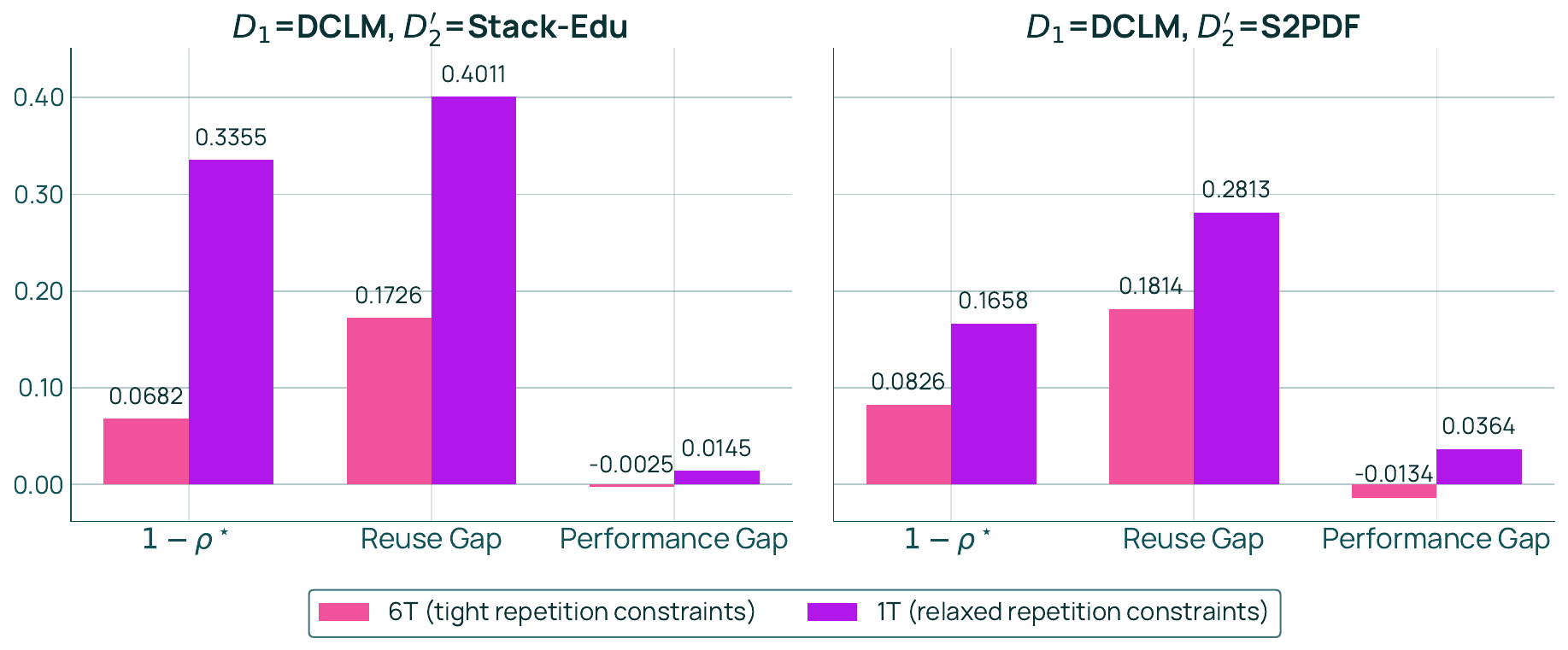}
    \caption{Performance gap vs reuse gap vs $1 - \rho^\star$ when adding Stack-Edu to DCLM (left) and olmOCR Science PDFs to DCLM (right). Varying $R$ changes $\rho^\star$, and we observe that $1 - \rho^\star$ propagates to both the reuse gap and performance gap, validating Theorem~\ref{thm:add_domains}.}
    \label{fig:theorem2_validation}
\end{figure}

\textbf{\partialmixreuse reduces coupling and improves performance.}
Theorems~\ref{thm:general_gap} and~\ref{thm:add_domains} show that the coupling term $\kappa$ controls the \emph{rate} at which changes in optimality translate to the performance gap. The coupling term is large when both reused domains $\Dfix$ and recomputed domains $\Dcomp$ impact similar downstream tasks.
We validate that reducing coupling---by adjusting $\Dfix$ vs $\Dcomp$ via \partialmixreuse---improves both the reuse gap and performance gap.

To do this, we analyze a scenario where coupling is high: adding Stack-Edu to DCLM topics when $R=1$T (from Figure~\ref{fig:theorem2_validation} left, purple). 
Figure~\ref{fig:theorem2_validation_dclm_mix} compares the reused ratios $\tilde{p}_{\Dfix}$ against the optimal ratios $q^\star_{\Dfix}$ over DCLM topics. There is a stark difference in the weight on DCLM's software development topic, suggesting high coupling between software development and Stack-Edu.
Based on this, we test whether \partialmixreuse with $\Dcomp = \text{Stack-Edu} \cup \{\text{DCLM:software\_development}\}$ reduces coupling and improves performance compared to \fullmixreuse (which uses $\Dcomp = \text{Stack-Edu}$).

\begin{figure}
    \centering
    \includegraphics[width=0.6\linewidth]{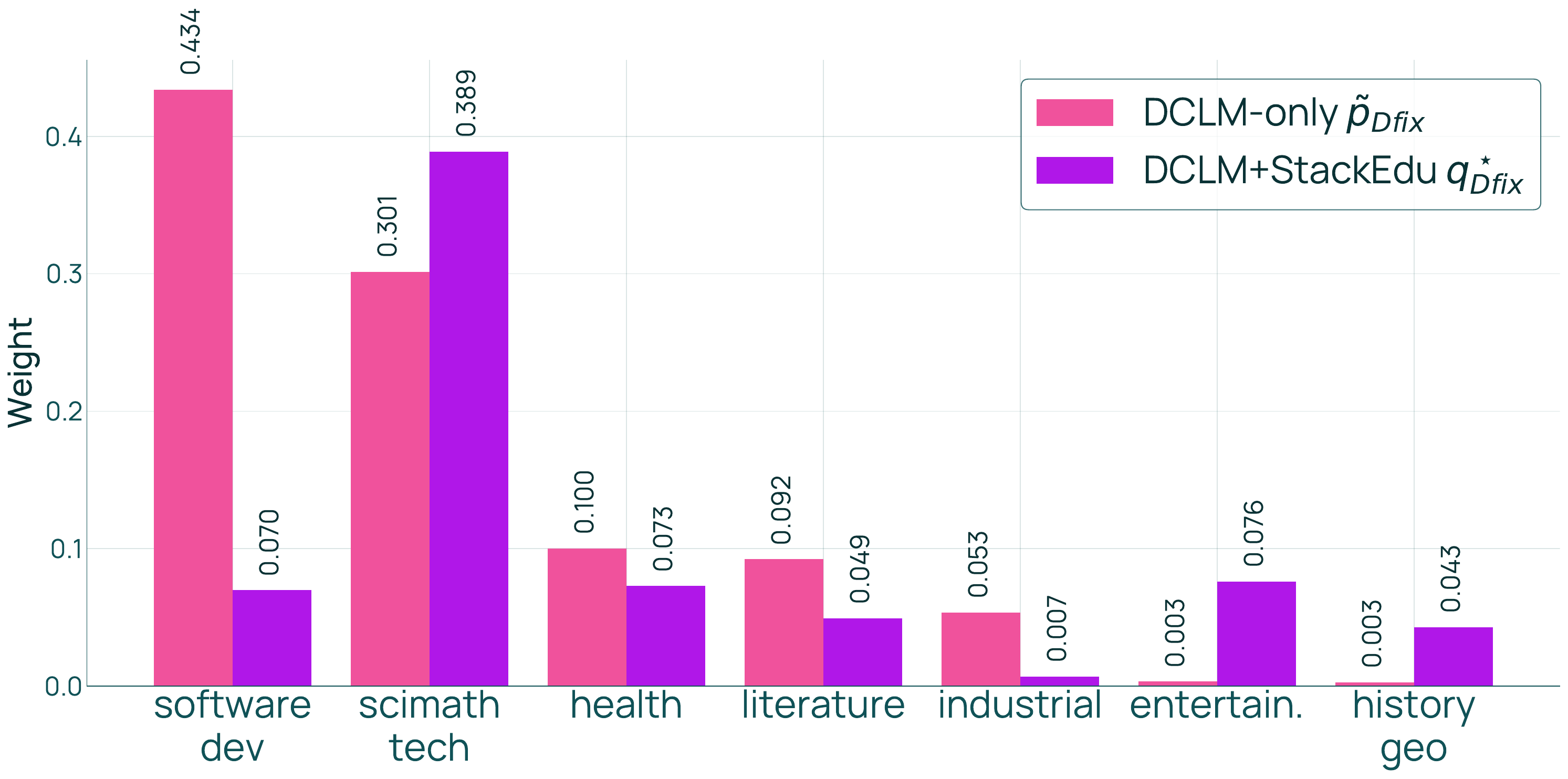}
    \caption{Comparison of $\tilde{p}_{\Dfix}$ versus $q^\star_{\Dfix}$ (top-7 domains according to $\tilde{p}_{\Dfix}$) when $\D_1=$DCLM, $\D_2'$=Stack-Edu with $R=1$T. The only domain that significantly differs in mixture weight is software development, suggesting that \partialmixreuse can reduce coupling if we recompute DCLM:software\_development.}
    \label{fig:theorem2_validation_dclm_mix}
\end{figure}

Figure~\ref{fig:theorem2_validation_coupling} shows that recomputing DCLM:software\_development along with Stack-Edu reduces both the reuse gap and performance gap, despite $1 - \rho^\star$ being larger for \partialmixreuse. This breaks the pattern seen in Figure~\ref{fig:theorem2_validation}: larger $1 - \rho^\star$ no longer translates to a larger performance gap, suggesting that the reduction in the coupling term $\kappa$ dominates.
To confirm this directly, we compute $\kappa$ (see Appendix~\ref{supp:defs} for definition) for \fullmixreuse and $\kappa$ for \partialmixreuse with $\Dcomp = \text{Stack-Edu} \cup \{\text{DCLM:topic} \}$ for each DCLM topic.
Figure~\ref{fig:theorem2_validation_kappa_reduction} shows that recomputing DCLM:software\_development reduces $\kappa$ far more than recomputing any other DCLM topic, validating that \partialmixreuse reduces coupling.

\begin{figure}
    \centering
    \includegraphics[width=0.6\linewidth]{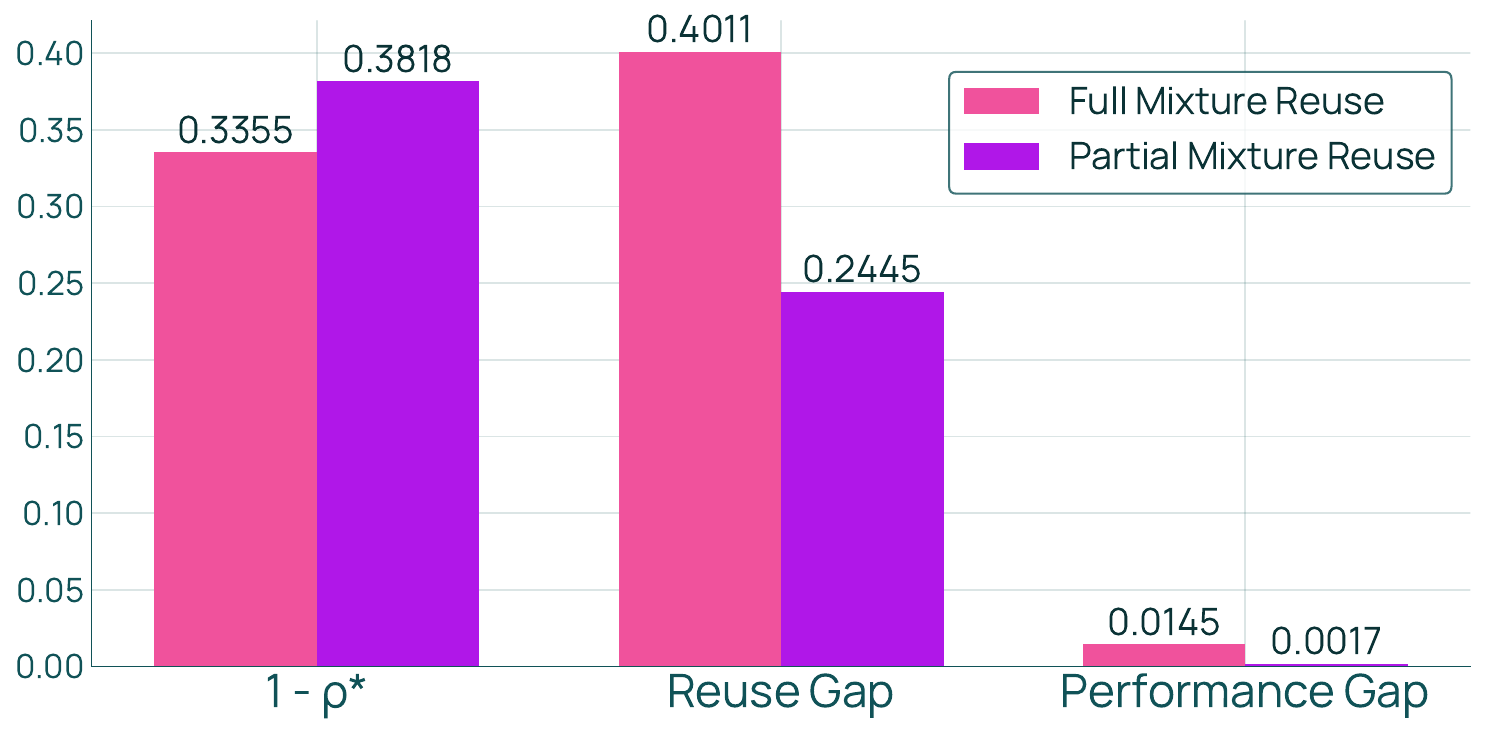}
    \caption{Performance of \partialmixreuse (recompute DCLM:software\_development) for $\D_1$=DCLM, $\D_2'$=Stack-Edu with $R=1$T. Recomputing DCLM:software development along with Stack-Edu reduces both the reuse gap and the performance gap, even though its $1 - \rho^\star$ is higher.}
    \label{fig:theorem2_validation_coupling}
\end{figure}

\begin{figure}
    \centering
    \includegraphics[width=0.6\linewidth]{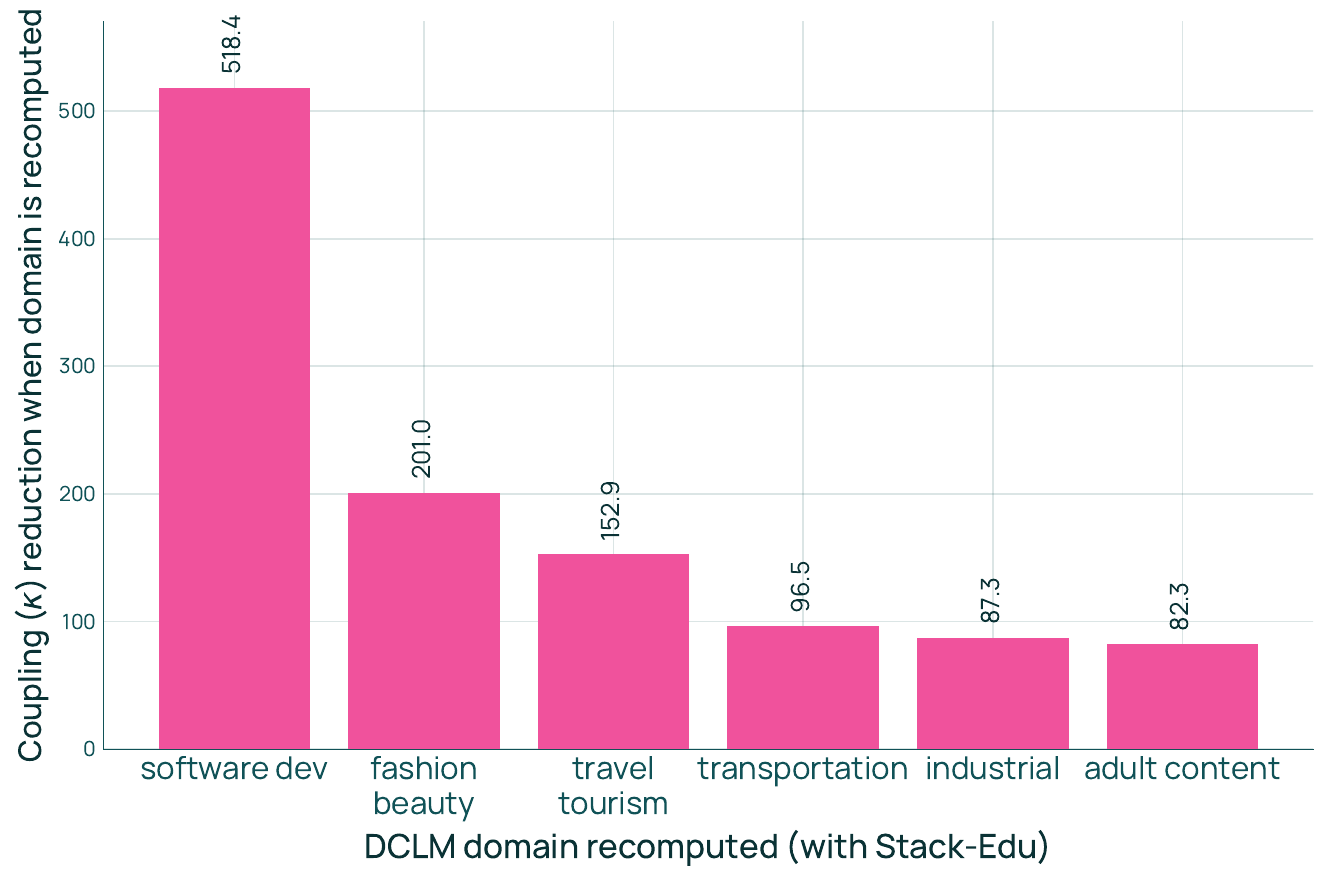}
    \caption{The top-6 DCLM domains for which recomputation results in the greatest reduction in $\kappa$ for $\D$=DCLM, $\D'$=DCLM+Stack-Edu. The reduction from recomputing DCLM:software development via \partialmixreuse is substantially higher than recomputing other DCLM topics.}
    \label{fig:theorem2_validation_kappa_reduction}
\end{figure}
\section{Discussion}

We presented \olmix, addressing two key challenges in data mixing for language models: configuring effective mixing methods and efficiently updating mixtures as domain sets evolve. We conduct a large empirical study of design choices and introduce mixture reuse. 

\textbf{Limitations.} First, our work focuses exclusively on the offline mixing schema. While this schema is widely adopted, other approaches, such as those that use a dynamic mix to explore the mixture weight space~\citep{xie2023doremioptimizingdatamixtures, fan2024dogedomainreweightinggeneralization}, may benefit from different design choices. Second, our theoretical analysis of mixture reuse assumes log-linear regression models. The analysis should extend naturally to other parametric models like AutoScale~\citep{kang2025autoscalescaleawaredatamixing} and BiMix~\citep{ge2025bimixbivariatedatamixing}, but it is less clear how to extend it to non-parametric models that yield non-convex and non-differentiable mixing objectives.
Third, \partialmixreuse requires determining the subset of unaffected domains to recompute. Some cases are quite intuitive; for instance, recomputing DCLM:software\_development when Stack-Edu code data is added. However, we do not provide a general automated approach for determining what to recompute, limiting \partialmixreuse's applicability without domain expertise. 

\textbf{Future work.} First, we aim to extend mixture reuse and our design choice study to online mixing methods that adjust mixtures during training. This is in contrast with this work, which focuses on the LM development that occurs \textit{before} the final model starts training. Second, we envision co-design of data mixing with other LM development workflows, such as using mixture performance as feedback for data quality filtering or domain discovery. Third, validating our findings at larger model scales would improve confidence in the reliability of our design choices and reuse mechanisms.

\section*{Author Contributions}

Mayee F. Chen led the project during an internship at Ai2: she developed the theoretical framework, implemented all methods, designed and ran all experiments, validated the approach at scale for OLMo 3, and wrote the paper. Kyle Lo and Luca Soldaini were the primary mentors and worked closely with Mayee on developing the methodology, designing experiments, integrating into Olmo 3, and writing the paper. Tyler Murray built and maintained the training infrastructure that enabled large-scale experimentation. David Heineman designed small-scale evaluation metrics for data mixing. Matt Jordan contributed to mixing methodology. Hannaneh Hajishirzi and Christopher R\'{e} provided research guidance and helped frame the work.

\section*{Acknowledgments}

We thank Neel Guha, Junmiao Hu, Ronny Junkins, Pang Wei Koh, Jerry Liu, Roberto Garcia Torres, Alex Wettig, Steven Euijong Whang, and Michael Zhang for helpful feedback and discussions. 
We thank the Olmo 3 team, particularly Allyson Ettinger, for iterative feedback and experimental validation of Olmix.
MC was supported in part by the Office of Naval Research (ONR) under No. N000142312633 (Deep Signal Processing). Any opinions, findings, and conclusions or recommendations expressed in this material are those of the authors and do not necessarily reflect the views, policies, or endorsements, either expressed or implied, of ONR or the U.S. Government.

\clearpage
\bibliographystyle{abbrvnat}
\bibliography{references}

\clearpage

\appendix

\section{Offline Schema Study Details}\label{supp:study_details}

All experiments for the design choice study, unless specified, use the same configuration as \base. For mixing over the DCLM topics, we constructed a sparse swarm of size $K = 128$.
We also mix at the source level over DCLM, Stack-Edu, ArXiv, FineMath 3+, olmOCR Science PDFs, Wikipedia, and Pes2o (see Table~\ref{tab:domain_sizes}). For the sources, we constructed a dense swarm of size $K = 64$. We use log-linear regression per task, an exact solver with $\lambda = 0.05$, and enforced repetition constraints in the optimization problem with $R =$ 6T and $k = 4$.

\subsection{RQ1: Proxy model size} 

\textbf{Details.} Table~\ref{tab:model_architectures} contains details about the architectures of the 1M, 15M, 30M, and 60M proxy models we train. 

\begin{table}[htb]
\centering
\caption{Proxy Model Architecture Configurations.}
\label{tab:model_architectures}
\begin{tabular}{lcccc}
\toprule
\textbf{Parameter} & \textbf{1M} & \textbf{15M} & \textbf{30M} & \textbf{60M} \\
\midrule
Vocab size & 100,352 & 100,352 & 100,352 & 100,352 \\
\texttt{n\_layers} & 4 & 8 & 4 & 8 \\
\texttt{n\_heads} & 4 & 4 & 8 & 8 \\
\texttt{d\_model} & 16 & 128 & 256 & 384 \\
\texttt{head\_dim} & 4 & 32 & 32 & 48 \\
\bottomrule
\end{tabular}
\end{table}

\subsection{RQ2: Swarm size} 

\textbf{Additional results.} In Figure~\ref{fig:swarm_size_1B}, we vary $K = c(m+1)$ for $c = 1, 2, 3, 4$ and $m=6, 24$ and study its effect on downstream BPB of 1B models. Despite having fewer $m$, the error curves when scaled according to $c$ collapse together, suggesting that the $\mathcal{O}(m)$ sample complexity holds at both 30M and 1B model scales. 

\textbf{Details.} For the results in Figure~\ref{fig:swarm_size} and Figure~\ref{fig:swarm_size_1B}, we constructed proposed mixes using a search-based solver rather than an exact solver. We also did not enforce repetition constraints and had originally included 5 additional metrics in our evaluation suite: Qasper Yes/No, Sciriff Yes/No, LabBench DBQA, LabBench ProtocolQA, and MedQA EN. 

For $m=24$, we used all WebOrganizer DCLM topics. For $m < 24$, we selected the $m$ largest domains. In particular, for $m=18$, we used: Art and Design, Crime and Law, Education and Jobs, Electronics and Hardware, Entertainment, Finance and Business, Games, Health, Literature, Politics, Religion, Science Math and Technology, Social Life, Software, Software Development, Sports and Fitness, Transportation, Travel and Tourism. 
For $m=12$, we used: Crime and Law, Education and Jobs, Entertainment, Finance and Business, Games, Health, Literature, Politics, Religion, Science Math and Technology, Software, and Software Development.
For $m=6$, we used: Entertainment, Finance and Business, Games, Health, Politics, and Science Math and Technology.

\begin{figure}
    \centering
    \includegraphics[width=0.48\linewidth]{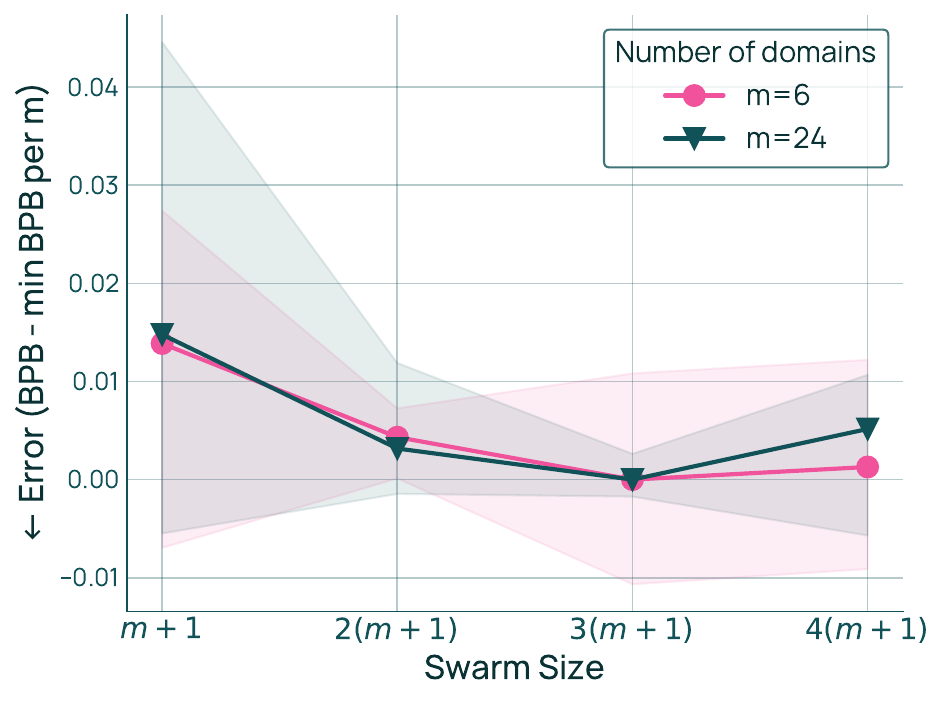}
    \caption{Error on 1B models versus swarm size. Similar to Figure~\ref{fig:swarm_size}, we see that the error curves collapse across both $m$, supporting our finding that $K$ needs $\mathcal{O}(m)$ runs to obtain strong downstream performance. Results are averaged across 3 random seeds, and the intervals indicate min and max results.}
    \label{fig:swarm_size_1B}
\end{figure}

\subsection{RQ3: Swarm distribution}\label{supp:swarm_distribution}

\textbf{Details.} To construct the sparse and dense swarms, we first generated mixes sampled from a Dirichlet distribution.
For the dense swarm, we discarded any mixes that have domains with 0 weight, and for the sparse swarm, we enforced that if the weight of a domain is less than 0.05, then it is clipped to 0. The entire mixture is normalized afterwards. 

For the centering experiments, we compared a swarm with a natural Dirichlet prior to a strong and weak prior. The strong prior was the proposed mix obtained using the natural swarm. The weak prior was a mix that aimed to maximize BPB using the the natural swarm using a simulation-based solver. Regardless of the choice of Dirichlet prior, we used the natural distribution in the KL regularization term of the objective for fair comparison.

For both sets of experiments, we constructed a held-out test set containing of 30 samples, where 15 were from a Dirichlet distribution centered around the sparse swarm's optimal mix and 15 were from a Dirichlet distribution centered around the dense swarm's optimal mix. This measures regression fit in high-performance regions of the mixture space, which matters in the mixture optimization step. 

\subsection{RQ4: Regression model family}\label{supp:regression}

\textbf{Additional results.} Figure~\ref{fig:regression_models_olmo3} compares the downstream performance and regression fit (measured on three random train-test splits of the swarm, each with 10 test mixes) across several regression model families at the source level. Across both metrics, we see that the log-linear regression model outperforms other approaches. 

In Figure~\ref{fig:k_vs_m_dclm}, we plot the relationship between swarm size, number of domains, and regression fit across all regression model families over DCLM topics. These figures reveal that different regression model families have different sample complexity and regimes in which they do well. This offers a potential explanation for why existing methods lack consensus. 

\begin{figure}
    \centering
    \includegraphics[width=0.5\linewidth]{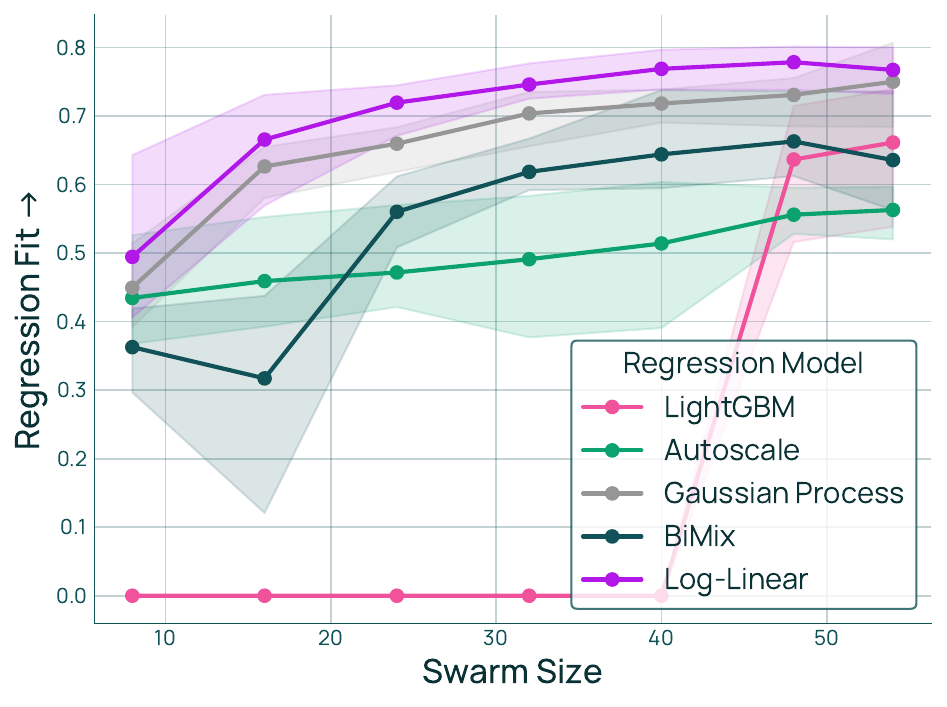}
    \includegraphics[width=0.48\linewidth]{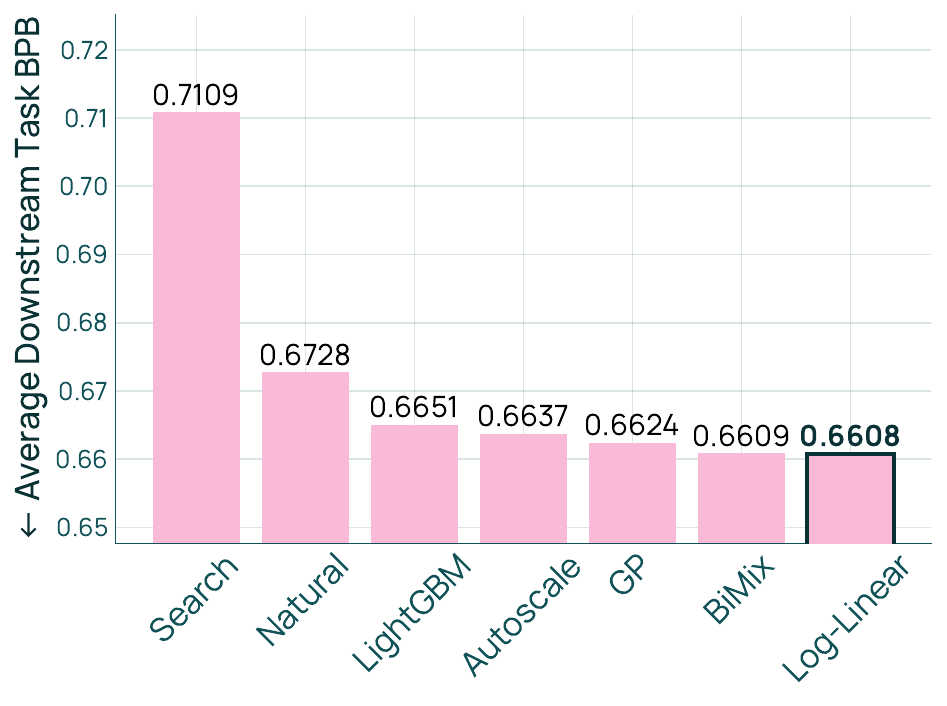}
    \caption{Left: Regression fit versus swarm size across regression models when mixing across sources. Results are averaged across 3 random seeds, and the intervals indicate min and max results. Right: downstream performance of regression models over sources.}
    \label{fig:regression_models_olmo3}
\end{figure}

\begin{figure}
    \centering
    \includegraphics[width=\linewidth]{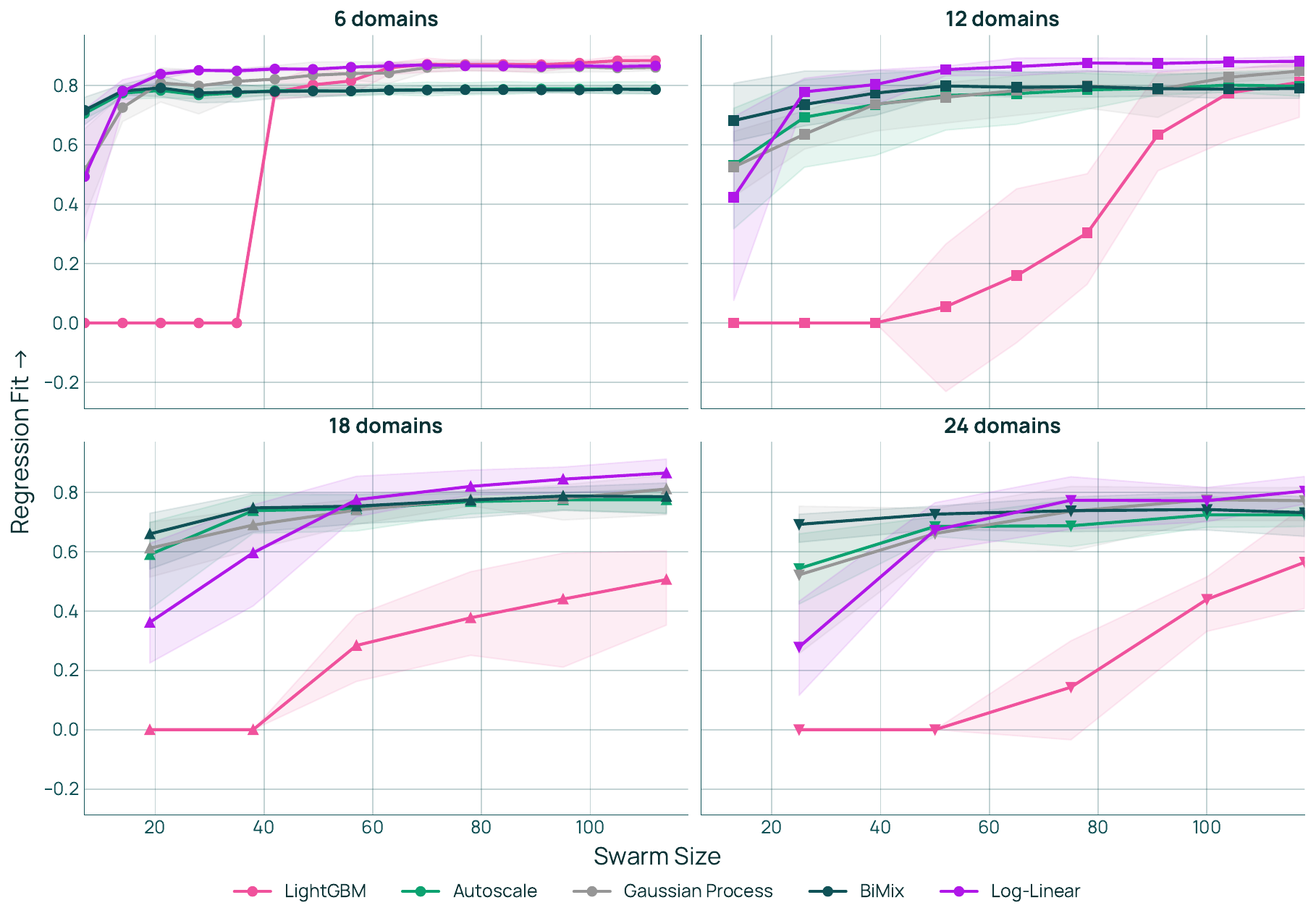}
    \caption{Regression fit versus swarm size and number of domains across regression models when mixing on DCLM topics. Across $m=6, 12, 18, 24$ domains, each regression model requires different swarm sizes to obtain good regression fit. Results are averaged across 3 random seeds, and the intervals indicate min and max results.}
    \label{fig:k_vs_m_dclm}
\end{figure}

\textbf{Details.} We describe how we adapt the regression models from BiMix~\citep{ge2025bimixbivariatedatamixing}, Autoscale~\citep{kang2025autoscalescaleawaredatamixing}, and Data Mixing Laws~\citep{ye2025datamixinglawsoptimizing} to our setting where performance is measured on a set of downstream tasks.

\begin{itemize}
    \item BiMix: The BiMix mixing law models the validation loss on domain $i$ as a function of the mixture ratio on domain $i$ and the number of training steps $s$:
    \begin{align}
        \hat{f}_i(p, s) = \frac{A_i}{p_i^{\alpha_i}} \bigg( \frac{B_i}{s^{\beta_i}} + C_i\bigg) \; \forall i \in [m].
    \end{align}

    Since we do not model the number of steps and instead directly transfer the proposed mix from the proxy scale to the target scale, this mixing law simplifies to $\hat{f}_i(p) = A_i p_i^{-\alpha_i}$, where the constants are absorbed into $A_i$. To extend this mixing law to downstream tasks rather than validation loss on training domains, we assume that task BPB can be modeled as a weighted average of training domain-specific mixing laws. Therefore, for each downstream task $i$, the BiMix mixing law for our setting is:
    \begin{align}
        \hat{f}_i(p) = \sum_{j = 1}^m A_{ij} p_j^{-\alpha_{ij}}.
    \end{align}

    \item AutoScale: The AutoScale mixing law models validation loss on a held-out dataset $i$ as a function of the mixture and the number of tokens $R$:
    \begin{align}
        \hat{f}(p, R) = \sum_{j = 1}^m \bigg((N^{j}_0 + p_j R)^{-\gamma_{j}} + l_j\bigg),
     \end{align}

     where $N^j_0$ is the ``effective'' number of tokens of all other domains excluding $j$.
     The original paper fits separate parameters for each domain using perturbed runs; we instead fit all parameters jointly using our larger swarm. We make two modifications. First, we replace the per-domain constants $l_j$ with a single constant term $c_i$ corresponding to a given task $i$. Second, instead of directly learning $N^j_0$, which can be quite large given the order of magnitude of $R$, we rewrite $N^j_0 + p_j R$ as $R(A_{ij} + p_j)$ and learn $A_{ij}$. Therefore, for each downstream task $i$, the AutoScale mixing law for our setting is:
     \begin{align}
         \hat{f}_i(p) = c_i + \sum_{j = 1}^m (R(   A_{ij} + p_j))^{-\alpha_{ij}}.
     \end{align}

     \item Data Mixing Laws: The mixing law models the validation loss on domain $i$ as:
     \begin{align}
         \hat{f}_i(p) = c_i + k_i \exp\bigg(\sum_{j = 1}^m t_{ij} p_j\bigg).\label{eq:dml_original}
     \end{align}

     The paper also extends this to out-of-domain validation sets by expressing them as a linear combination of training domains, namely $\hat{f}(p) = \sum_{i = 1}^m s_i \Big(c_i + k_i \exp \big(\sum_{j = 1}^m t_{ij} p_j\big) \Big)$. We use the first, simpler form~\eqref{eq:dml_original} directly for downstream tasks, treating each task's performance as a function of the mixture on training domains. Moreover, we observe that $k_i$ can be absorbed into the linear term; rewriting \eqref{eq:dml_original} as $c_i + \exp(\sum_{j = 1}^m t_{ij} p_j + \log k_i)$ shows that $k_i$ acts as an intercept term.  Since $p$ sums up to $1$, this intercept is redundant and can be dropped. Therefore, our final log-linear mixing law is
     \begin{align}
         \hat{f}_i(p) = c_i + \exp\bigg(\sum_{j = 1}^m A_{ij} p_j\bigg). \label{eq:app_log_linear}
     \end{align}
     
    \end{itemize}

Next, we provide implementation details for each model:
\begin{itemize}
    \item Search: we select the swarm run that has the best average BPB and satisfies the data repetition constraints.
    \item LightGBM: we use the hyperparameters from RegMix~\citep{liu2025regmixdatamixtureregression}. However, we did not use an evaluation set for early stopping, since we noticed that RegMix used the same set of mixes for early stopping and for reporting performance. We solve the optimization problem using search, as is done in RegMix.
    \item Gaussian Process: we use scikit-learn's GaussianProcessRegressor. We solve the optimization problem using search, as is done in~\citet{chen2025admirebayesoptaccelerateddatamixture}.
    \item BiMix: we use SciPy least squares to fit the regression models. We solve the optimization problem using search, an exact solver, and exact solver with KL regularization ($\lambda = 0.05$) in Table~\ref{tab:bimix_autoscale_solvers}. Based on the results, we decided to use a search-based solver when evaluating BiMix.
    \item AutoScale: we use SciPy least squares to fit the regression models. We solve the optimization problem using search, an exact solver, and exact solver with KL regularization ($\lambda = 0.05$)in Table~\ref{tab:bimix_autoscale_solvers}. Based on the results, we decide to use an exact solver when evaluating Autoscale.
    \item Log-linear: we use the fitting code from~\citet{ye2025datamixinglawsoptimizing}. We solve the optimization problem using CVXPY, while results for other solvers are discussed in \S\ref{sec:optimization}.
\end{itemize}

\begin{table}[t]
\centering
\small
\caption{Downstream performance of BiMix and AutoScale regression models on DCLM topics using exact optimization, exact + KL regularization ($\lambda = 0.05$), and search. Based on these findings, we use an exact solver with Autoscale and a search-based solver for BiMix for the rest of our study.}
\begin{tabular}{lccc}
\toprule
\textbf{Method} & \textbf{Exact (0.0)} & \textbf{Exact + KL (0.05)} & \textbf{Search} \\
\midrule
BiMix      & 0.776733 & 0.790188 & \textbf{0.768153} \\
Autoscale  & \textbf{0.798084} & 0.804814 & 0.808775 \\
\bottomrule
\end{tabular}
\label{tab:bimix_autoscale_solvers}
\end{table}

\subsection{RQ5: Regression granularity}

\textbf{Additional results.} Figure~\ref{fig:regression_granularity_30m} shows the performance of various regression granularities at the 30M scale. While the downstream BPB results at the 1B scale (Table~\ref{tab:granularity}) demonstrates that per-family performs the worst, the 30M performance results are in line with the regression fits of each granularity; per-task performs the best, followed by per-family and aggregated.

\begin{figure}
    \centering
    \includegraphics[width=0.5\linewidth]{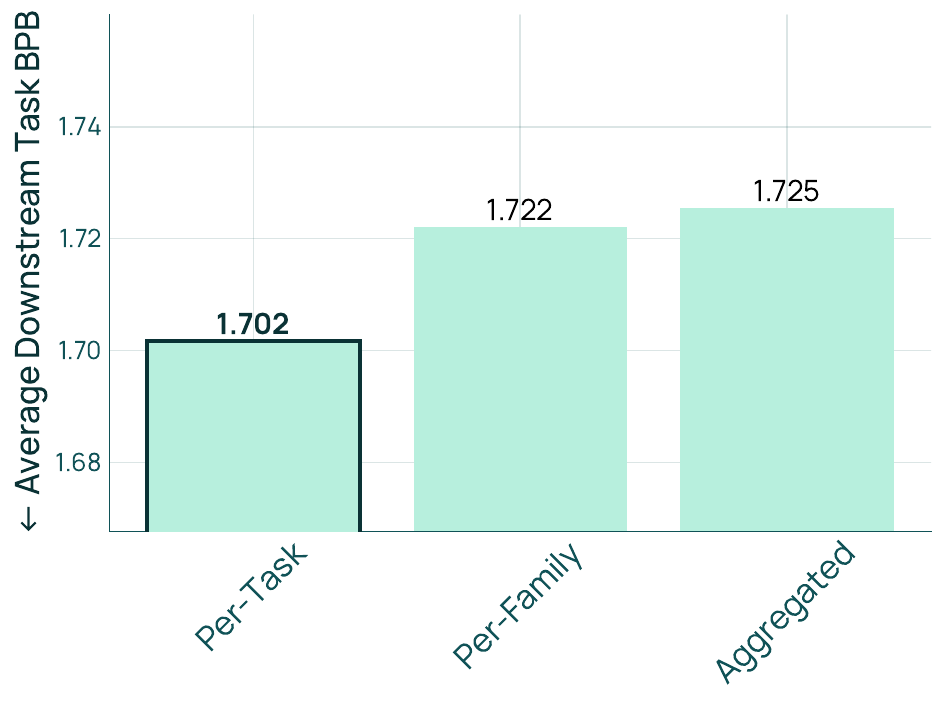}
    \caption{30M performance across regression granularities. The performance is monotonic in the task granularity, matching with the regression fit findings from Table~\ref{tab:granularity}.}
    \label{fig:regression_granularity_30m}
\end{figure}

\textbf{Details.} The task families we used are defined in Table~\ref{tab:eval_suite}: Math, Code, and QA. We constructed a held-out test set containing of 45 samples, consisting of 15 samples from a Dirichlet distribution centered around each regression granularity's respective proposed mix.
This measures regression fit in high-performance regions of the mixture space, which matters in the mixture optimization step.

\subsection{RQ6: Data repetition constraints}

\textbf{Additional results.} Figure~\ref{fig:vary_repetition_factor} is the full version of Figure~\ref{fig:vary_rep_top_7}, showing the weights over all $24$ DCLM topics when we vary the repetition factor $k$.

\begin{figure}
    \centering
    \includegraphics[width=0.6\linewidth]{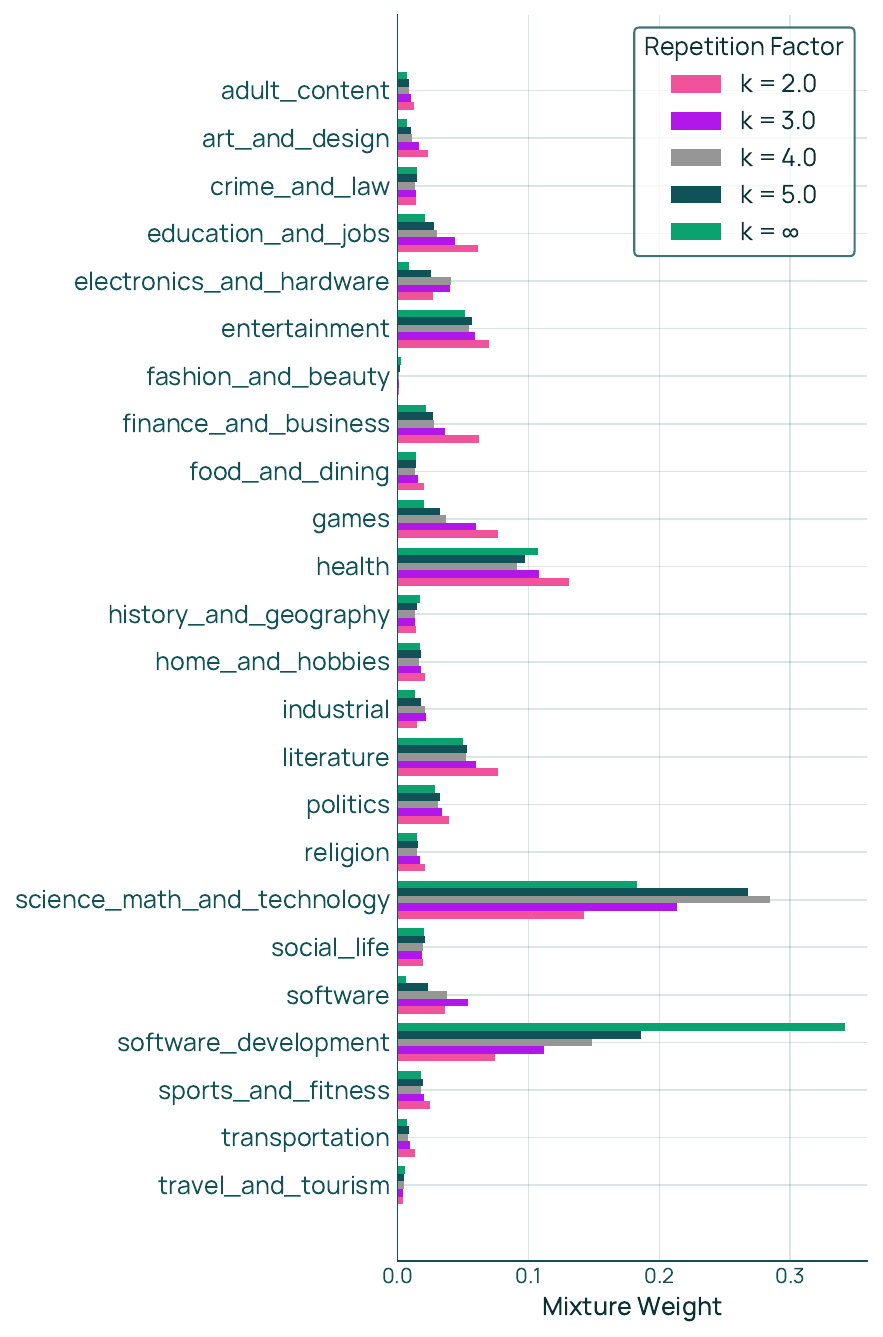}
    \caption{Proposed mixture weights versus repetition factor (all domains), full version of Figure~\ref{fig:vary_rep_top_7}.}
    \label{fig:vary_repetition_factor}
\end{figure}

\textbf{Details.} To construct the constrained swarm, we used $K = 128$ proxy runs. We set $R = 6$T requested tokens and a repetition factor of $k = 4$. The constrained swarm is constructed via rejection sampling from the Dirichlet prior.
For constrained optimization, we specified the repetition constraint in CVXPY.

\subsection{RQ7: Optimization solver}\label{supp:solver}

\textbf{Additional results.} Figure~\ref{fig:solvers_olmo3} shows the performance and predicted performance across optimization solvers when mixing at the source level. Similar to Figure~\ref{fig:solvers}, we see that Exact + KL (0.05) outperforms the other solvers in terms of downstream BPB. We also sanity check our optimizers by examining predicted performance, confirming that the exact optimizer obtains the lowest predicted performance while adding a KL term degrades performance. 

\begin{figure}
    \centering
    \includegraphics[width=0.49\linewidth]{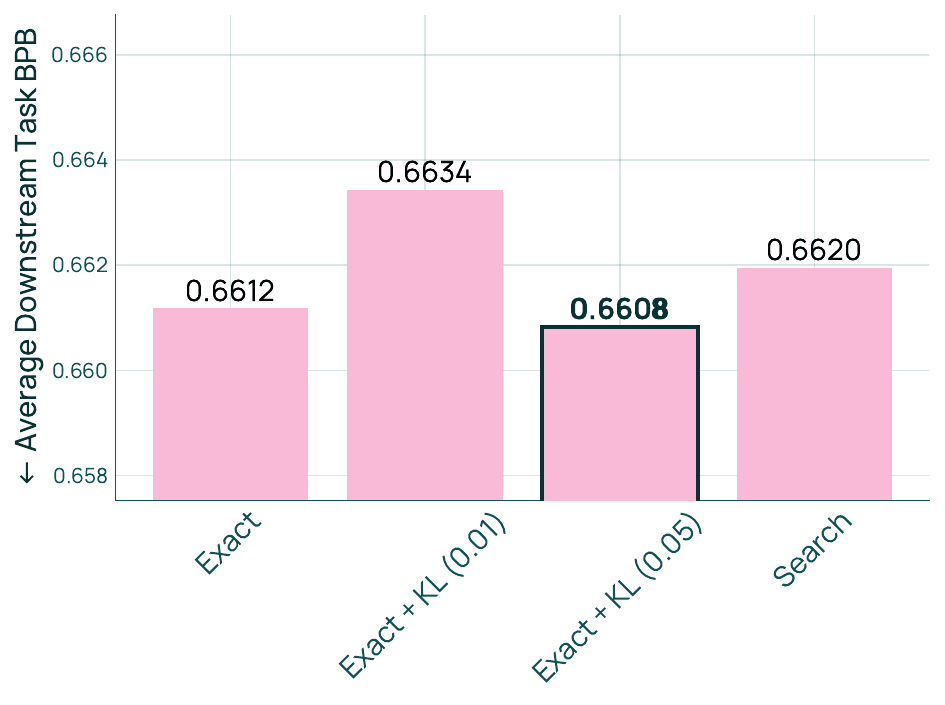}
    \includegraphics[width=0.5\linewidth]{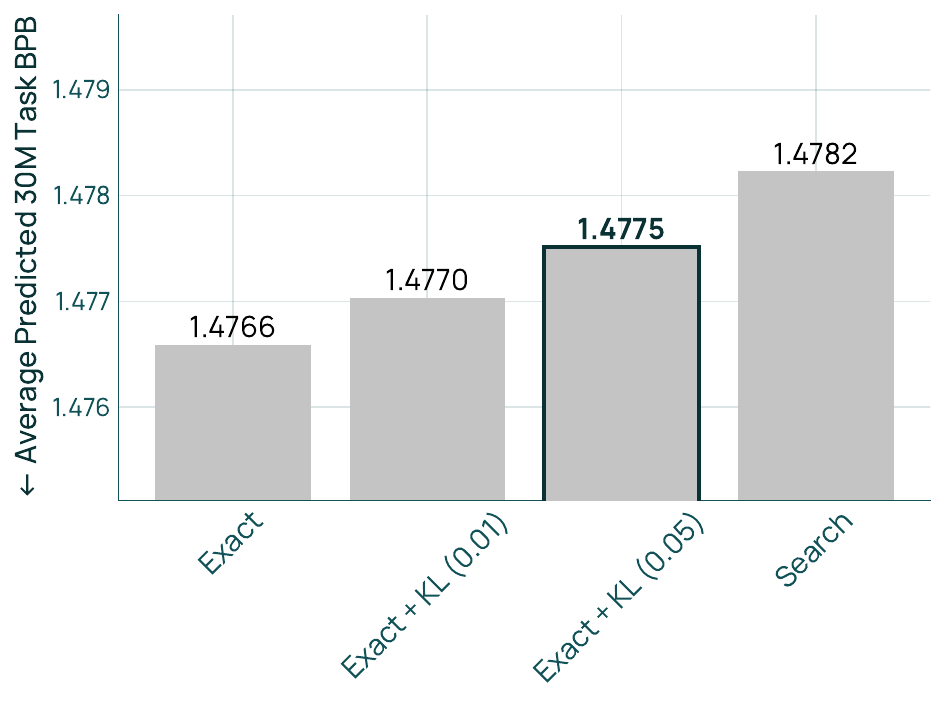}
    \caption{Performance (left) and regression fit (right) across optimization solvers when mixing across sources. The predicted performances of various solvers confirms that the exact solver minimizes the predicted BPB, as expected, followed by KL penalties of $0.01$ and $0.05$. However, the exact solver does not obtain the best downstream performance, and instead some KL regularization helps.}
    \label{fig:solvers_olmo3}
\end{figure}

\textbf{Details.} For the search approach, we used the adaptive search proposed in~\citet{wettig2025organizewebconstructingdomains}, which proceeds in multiple rounds such that the best mix from the current round is used as a Dirichlet prior to generate candidate mixes for the next round.

\section{Additional Mixture Reuse Details}

\subsection{Examples}\label{supp:examples}

Suppose our initial $\mathcal{D}$ contains $m=3$ domains with mixture $\tilde{p} = [0.25,\,0.25,\,0.5]$. We provide examples of mixture reuse for each domain update operator.

\textbf{Remove.} We remove the first domain of $\tilde{p}$. Instead of recomputing a mix over the two remaining domains, we reuse their relative ratios $[0.25,\,0.5]$, automatically producing the mix $q=[0.33,\,0.67]$ without any additional recomputation.  

\textbf{Partition.} We partition the first domain into 2 additional domains, making the total number of domains $m'=4$. Instead of recomputing over all $4$ domains, we learn a mix over a virtual domain consisting of the $2$ unaffected domains, for which $\tilde{p}_{\Dfix} = [0.33, 0.67]$, and the $2$ new domains formed from the partition. This means we only recompute over $3$ domains. If this recomputed mix is $r = [0.6, 0.1, 0.3]$, then the expanded mix is $q = 0.6 \cdot [0.33, 0.67] \cup [0.1, 0.3] = [0.198, 0.402, 0.1, 0.3]$.

\textbf{Revise.} Suppose the first domain is modified. Instead of recomputing over all $3$ domains, we learn a mix over a virtual domain consisting of the $2$ unaffected domains, for which $\tilde{p}_{\Dfix} = [0.33, 0.67]$ and the revised domain. This means we only recompute over $2$ domains. If the recomputed mix is $r=[0.4, 0.6]$, then the expanded mix is $q = 0.4 \cdot [0.33, 0.67] \cup [0.6] = [0.132, 0.268, 0.6]$.

\subsection{Solving the Mixture Reuse Problem}\label{supp:collapsed_space}

We explain how to write~\eqref{eq:conditional_mixing_problem} as a standard mixing problem (without the explicit $q_{\Dfix} = \tilde{p}_{\Dfix}$ constraint) over a \textit{collapsed} space. 

For any feasible $q = [\rho \tilde{p}_{\Dfix}, (1 - \rho) q_{\Dcomp}]$, recall that the corresponding mix in collapsed space is $r = [\rho, (1 - \rho) q_{\Dcomp}] \in \triangle^{|\Dcomp|}$. $r$ is indexed over $\{v\} \cup \Dcomp$, and $\Psi_{\tilde{p}_{\Dfix}}(r)$ is an expansion function that maps from $r$ back to $q$ on $\D'$.

\begin{lemma}[Reparametrized mixture reuse problem]\label{lem:collapsed_space}
The mixture reuse problem~\eqref{eq:conditional_mixing_problem} is equivalent to the following optimization problem over the collapsed space:
\begin{align}
    &\text{minimize}_{r \in \triangle^{|\Dcomp|}}  \frac{1}{n} \sum_{i = 1}^n g_i(r) \\
    &\text{s.t.} \quad r_v \le \min_{j \in \Dfix}\bigg\{\frac{k N_j'}{R \tilde{p}_{j}} \bigg\}, \quad r_j \le \frac{k N_j'}{R} \quad \forall j \in \Dcomp \nonumber
\end{align}
where $g_i(r) := f_i(\LM(S, R, \Phi_{\tilde{p}_{\Dfix}}(r)))$.
\end{lemma}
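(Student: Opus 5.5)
The plan is to exhibit an explicit bijection between the feasible set of~\eqref{eq:conditional_mixing_problem} and the feasible set of the collapsed problem. We then check that the objective values agree under this bijection, so optimal values coincide and optimizers correspond.

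\textbf{Step 1: the maps.} We first verify that $\Phi_{\tilde{p}_{\Dfix}}$ sends $\triangle^{|\Dcomp|}$ into $\triangle^{m'-1}$: its entries are nonnegative, and they sum to $r_v\sum_{j\in\Dfix}\tilde p_j + \sum_{j\in\Dcomp} r_j = r_v + (1-r_v) = 1$, using $\sum_{j \in \Dfix}\tilde p_j = 1$ for the normalized vector $\tilde p_{\Dfix}$.

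Next we check that the image is exactly the set of $q$ satisfying $q_{\Dfix}=\tilde p_{\Dfix}$. For $\rho>0$ this is immediate. When $r_v=0$, the normalized vector $q_{\Dfix}$ is undefined, and we adopt the convention that the constraint is vacuous (or, equivalently, that $q = [0\cdot\tilde p_{\Dfix}, q_{\Dcomp}]$ is admissible). This is the one place where care is needed.

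Conversely, we define the collapse map $\Psi(q) := [\sum_{j\in\Dfix} q_j,\ (q_j)_{j\in\Dcomp}]$. For any $q$ of the form $[\rho\tilde p_{\Dfix},(1-\rho)q_{\Dcomp}]$, we have $\Phi(\Psi(q))=q$, and $\Psi(\Phi(r))=r$ for all $r$. Hence $\Phi$ is a bijection between $\triangle^{|\Dcomp|}$ and the set of simplex points obeying the reuse constraint.

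\textbf{Step 2: constraints.} We translate the repetition constraints through $\Phi$. For $j\in\Dcomp$, $q_j=r_j$, so the constraint reads $r_j\le kN_j'/R$ unchanged. For $j\in\Dfix$, $q_j=r_v\tilde p_j\le kN_j'/R$. If $\tilde p_j>0$, this is equivalent to $r_v\le kN_j'/(R\tilde p_j)$. If $\tilde p_j=0$, the constraint holds trivially, so we interpret that term of the minimum as $+\infty$.

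Taking all $j\in\Dfix$ simultaneously gives exactly $r_v\le\min_{j\in\Dfix}\{kN_j'/(R\tilde p_j)\}$. Thus $\Phi$ restricts to a bijection between the two feasible sets.

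\textbf{Step 3: objectives.} By definition $g_i(r)=f_i(\LM(S,R,\Phi(r)))$, so the two objectives coincide along corresponding points. It follows that the optimal values are equal, and that $r^\star$ solves the collapsed problem if and only if $\Phi(r^\star)$ solves~\eqref{eq:conditional_mixing_problem}.

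The argument is essentially bookkeeping. The only delicate points are the degenerate cases $r_v=0$ and $\tilde p_j=0$, which we handle by the stated conventions.
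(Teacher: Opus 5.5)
Your proposal is correct and follows the same route as the paper, which simply notes that the change of variables $q=\Phi_{\tilde p_{\Dfix}}(r)$ absorbs the constraint $q_{\Dfix}=\tilde p_{\Dfix}$ and then rewrites $\rho\tilde p_j\le kN_j'/R$ as $\rho\le kN_j'/(R\tilde p_j)$, exactly as in your Steps 1--2. Your version is more careful than the paper's: you make the inverse map explicit and handle the degenerate cases $r_v=0$ and $\tilde p_j=0$, whereas the paper divides by $\tilde p_j$ without comment.
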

\begin{proof}
The constraint $q_{\Dfix} = \tilde{p}_{\Dfix}$ is absorbed by the change of variables, so any feasible $q$ corresponds to a unique $r$. The repetition constraints in expanded space, $\rho \tilde{p}_{j} \le \frac{k N_j'}{R}$ for $j \in \Dfix$ and $(1 - \rho) q_j \le \frac{k N_j'}{R}$ for $j \in \Dcomp$ become $\rho \le \frac{k N_j'}{R \tilde{p}_j}$ for all $j \in \Dfix$ and $r_j \le \frac{kN_j'}{R}$ for all $j \in \Dcomp$ in collapsed space.
\end{proof}

\section{Proofs for Section~\ref{sec:analysis}}\label{supp:proofs}

\subsection{Definitions and Notation}\label{supp:defs}
Define performance of $q$ as $F(q) := \frac{1}{n}\sum_{i=1}^n f_i(\LM(S,R,q))$. We also write $F(q)$ as a bivariate function, $F(r,p_1) \;:=\; F(\Phi_{p_1}(r))$. 

Recall $q^\star$ is the optimal solution to~\eqref{eq:mixing_problem} on $\D'$, and $q^\star(\tilde p_{\Dfix})$ is solution returned by \fullmixreuse when reusing $\tilde p_{\Dfix}$ to solve~\eqref{eq:conditional_mixing_problem}. We define $q^\star_{\Dfix}$ as the optimal mix on $\Dfix$ after the domain set update; that is, $q^\star = [\rho^\star q^\star_{\Dfix},\; (1-\rho^\star)q^\star_{\Dcomp}]$. We write $q^\star(\tilde p_{\Dfix}) = [\rho^\star(\tilde p_{\Dfix}) \tilde p_{\Dfix}, (1 - \rho^\star(\tilde p_{\Dfix})) q_{\Dcomp}^\star(\tilde p_{\Dfix})]$.

We can also write $\tilde{p}$ in terms of $\Dfix$ and $\D \setminus \Dfix$: any $p$ can be expressed as $p = [\pi p_{\Dfix}, (1 - \pi) p_{\D \setminus \Dfix}]$, where $\pi \in [0, 1]$ (similar to $\rho$).

The log-linear model is $f_i(q) = c_i + \sum_{i = 1}^n \exp(A_i^\top q)$, where $A_{ij}$ intuitively captures a notion of how much domain $j$ impacts task $i$. We split each $A_i \in \R^{m'}$ into $A_{i, \fix} \in \R^{|\Dfix|}$ and $A_{i,\comp} \in \R^{|\Dcomp|}$ corresponding to $\Dfix$ and $\Dcomp$. Define $\alphafix, \alphacomp \in \R^n$ where $\alpha_{i, \fix} = \|A_{i, \fix}\|, \alpha_{i, \comp} = \|A_{i, \comp} \|$. 

Define the coupling term $\kappa(\Dfix, \Dcomp) = \kappa(\alphafix, \alphacomp)$ as
\begin{align}
    \kappa(\alphafix, \alphacomp) := \|(1 + \alphafix + \alphacomp) \odot \alphafix \|,
\end{align}

where $\odot$ is the Hadamard product.

Define $\bar{F}(\cdot)$ to be the objective without the constant term of the log-linear regression model, $F(\cdot) - \frac{1}{n}\sum_{i =1 }^n c_i$.

Denote $p_1 = \tilde{p}_{\Dfix}$ and $p_1' = q^\star_{\Dfix}$ for simplicity. Let $p_2 = q_{\Dcomp}$. Recall that $r^\star(p_1)$ (and the expanded $q^\star(p_1)$) is the solution to~\eqref{eq:conditional_mixing_problem} when $q_{\Dfix}$ is constrained to be $p_1$. 
Any $r$ can be written as $r = [\rho, (1 - \rho), p_2]$, so we similarly define $\rho^\star(p_1)$ and $p_2^\star(p_1)$.

\subsection{Proof for Theorem~\ref{thm:general_gap}}\label{supp:general_gap_proof}

\begin{assumption}\label{ass:1}
    We make the following assumptions:
    \begin{enumerate}
    
    \item The log-linear model accurately captures the relationship between the mixture ratios and target model performance on each task; that is, $f_i(\LM(S, R, q)) = c_i + \exp(A_i^\top q)$ for some $c_i \in \R^+, A_i \in \R^{m'}$.
    \item Applying \base (Algorithm~\ref{alg:base_method}) exactly solves~\eqref{eq:mixing_problem}, and applying \fullmixreuse exactly solves~\eqref{eq:conditional_mixing_problem}. That is, we ignore (i) proxy-to-target transfer error and (ii) estimation error in learning $c_i$ and $A_i$ from the swarm. However, extending the analysis to include these errors is straightforward by adding corresponding approximation terms.
    \item We assume that $F(r, p_1)$ is $\mu$-strongly convex in $r$.
    \item Let $\mathcal{S}(q_{\Dfix})$ be the feasible set of~\eqref{eq:conditional_mixing_problem} defined by repetition constraints:
    \begin{align*}
        r_v \le \min_{j \in \Dfix} \bigg\{\frac{k N_j'}{R q_j}\bigg\}, \quad r_j \le \frac{k N_j'}{R} \; \forall j \in \D_2',
    \end{align*}
    together with the simplex constraints on $r$. We assume \textbf{mutual feasibility} of $\tilde{p}_{\Dfix}$ and $q^\star_{\Dfix}$: that $r^\star(\tilde{p}_{\Dfix}) \in \mathcal{S}(q^\star_{\Dfix})$ and $r^\star(q^\star_{\Dfix}) \in \mathcal{S}(\tilde{p}_{\Dfix})$. This holds in the following two cases:
    \begin{itemize}
        \item Both $\tilde{p}_{\Dfix}$ and $q^\star_{\Dfix}$ have an active repetition constraint on at least one domain. Intuitively, this means that among the unaffected domains, there exist some high-utility domains that are data-constrained.
        \item $r_v^\star(\tilde{p}_{\Dfix}) \le \min_{j \in \Dfix} \{\frac{k N_j'}{ R q_j} \}$ and $r_v^\star(\tilde{p}_{\Dfix}) \le \min_{j \in \Dfix} \{\frac{k N_j'}{ R \tilde{p}_j} \}$. Intuitively, the contents of the virtual domain are sufficiently low utility such that the proposed mix does not ``push against'' the tighter of the two $r_v$ constraints. 
    \end{itemize}
    This assumption is reasonable in our setting: when the repetition constraint is active, a domain is valuable enough that we would allocate more weight if we could. When it is slack, the domain does not warrant allocating weight up to its availability limit. Our two cases simply rule out the marginal situation where one mix is availability-limited while the other is not.
    \end{enumerate}
\end{assumption}

First, we show that the mixture reuse problem and the standard mixing problem are equivalent when the existing mix on the unaffected domains is equal to the optimal mix on the unaffected domains after the domain update. This is the ``equality'' condition of Theorem~\ref{thm:general_gap}: when $\tilde{p}_{\Dfix} = q^\star_{\Dfix}$, the performance gap is $0$.

\begin{lemma}\label{lemma:conditional_standard_equivalence}
The solution of~\eqref{eq:conditional_mixing_problem}, $r^\star(\tilde{p}_{\Dfix})$, is equivalent to the solution of~\eqref{eq:mixing_problem}, $q^\star = [\rho^\star q^\star_{\Dfix}, (1 - \rho^\star) q^\star_{\D_2'}]$, when the existing mix satisfies $\tilde{p}_{\Dfix} = q^\star_{\Dfix}$. 
\end{lemma}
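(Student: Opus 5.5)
The plan is to compare the feasible sets of the two problems, using the reparametrization of Lemma~\ref{lem:collapsed_space}. Problem~\eqref{eq:conditional_mixing_problem} is problem~\eqref{eq:mixing_problem} with one extra constraint, $q_{\Dfix} = \tilde{p}_{\Dfix}$. Its feasible set is therefore a subset of the feasible set of~\eqref{eq:mixing_problem}, and its optimal value can only be larger:
\[
F(q^\star(\tilde{p}_{\Dfix})) \ge F(q^\star).
\]
For the reverse inequality, it suffices to exhibit a feasible point of~\eqref{eq:conditional_mixing_problem} that attains $F(q^\star)$.

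The candidate point is $r^\circ := [\rho^\star, (1-\rho^\star) q^\star_{\Dcomp}]$, the collapsed form of $q^\star$. Its expansion $\Phi_{\tilde{p}_{\Dfix}}(r^\circ)$ places weight $\rho^\star \tilde{p}_j$ on each $j \in \Dfix$. Under the hypothesis $\tilde{p}_{\Dfix} = q^\star_{\Dfix}$ this equals $\rho^\star q^\star_j$, so
\[
\Phi_{\tilde{p}_{\Dfix}}(r^\circ) = q^\star .
\]
Next I check feasibility in the collapsed form of Lemma~\ref{lem:collapsed_space}. The constraints $r_j \le kN_j'/R$ for $j \in \Dcomp$ are exactly the repetition constraints that $q^\star$ satisfies on $\Dcomp$. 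The virtual-domain constraint $r_v \le \min_{j\in\Dfix} kN_j'/(R\tilde{p}_j)$ is equivalent to $\rho^\star \tilde{p}_j = q^\star_j \le kN_j'/R$ for every $j \in \Dfix$, which again holds because $q^\star$ is feasible for~\eqref{eq:mixing_problem}. Hence $r^\circ$ is feasible and its objective value is $F(q^\star)$.

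Combining the two inequalities, the optimal values coincide and $q^\star$ solves~\eqref{eq:conditional_mixing_problem}. To claim that the solutions themselves coincide, i.e.\ $r^\star(\tilde{p}_{\Dfix}) = r^\circ$ and so $q^\star(\tilde{p}_{\Dfix}) = q^\star$, I will invoke uniqueness of the minimizer. This follows from the $\mu$-strong convexity of $F(r,p_1)$ in $r$ in Assumption~\ref{ass:1}.

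The one subtlety, which I expect to be the main obstacle, is the degenerate case $\rho^\star = 0$. There $q^\star_{\Dfix}$ is not determined by $q^\star$, so the hypothesis should be read as $q^\star_{\Dfix}$ being any normalization compatible with $q^\star$. The argument is unchanged, because the $\Dfix$ coordinates of the expansion vanish regardless of $\tilde{p}_{\Dfix}$. The same convention applies to $\rho^\star = 1$ and $q^\star_{\Dcomp}$. Beyond these edge cases, the proof is pure bookkeeping.
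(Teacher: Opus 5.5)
Your proof is correct and takes the same approach as the paper: \eqref{eq:conditional_mixing_problem} is \eqref{eq:mixing_problem} with the extra constraint $q_{\Dfix}=\tilde p_{\Dfix}$, and when $\tilde p_{\Dfix}=q^\star_{\Dfix}$ the point $q^\star$ still satisfies that constraint, so the restriction leaves the optimum unchanged. The paper leaves implicit the details you add: the explicit check of the collapsed repetition constraints, the use of $\mu$-strong convexity (Assumption~\ref{ass:1}) to get uniqueness of the minimizer, and the degenerate cases $\rho^\star\in\{0,1\}$.
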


\begin{proof}
For~\eqref{eq:mixing_problem}, restricting the feasible set to the subspace $q_{\Dfix} = q^\star_{\Dfix}$ does not change the optimal value of this problem, since this subspace still contains the optimal solution to the original problem. Therefore,~\eqref{eq:mixing_problem} is equivalent to:
\begin{align*}
    &\text{minimize}_{q \in \triangle^{m'-1}}  \frac{1}{n} \sum_{i = 1}^m f_i(\LM(S, R, q)) \\
    &\text{s.t.} \quad  q_j \le \frac{k N_j'}{R} \quad \forall j \in [m'] \nonumber \\
    &  \qquad \; q_{\Dfix} = q^\star_{\Dfix} \nonumber 
\end{align*}

Comparing this formulation of~\eqref{eq:mixing_problem} to~\eqref{eq:conditional_mixing_problem}, we can see that the problems are equivalent when $\tilde{p}_{\Dfix} = q^\star_{\Dfix}$.
\end{proof}

Based on this result, we can write $q^\star = q^\star(q_{\Dfix}^\star) = q^\star(p_1')$. Similarly, $\rho^\star = \rho^\star(p_1')$, $r^\star = r^\star(p_1')$, $p_2^\star = p_2^\star(p_1')$. Furthermore, the problem of bounding the performance gap now becomes one of bounding $F(q^\star(\tilde{p}_{\Dfix})) - F(q^\star(q^\star_{\Dfix}))$ in terms of $\|q^\star(\tilde{p}_{\Dfix}) - q^\star_{\Dfix} \|$. We can apply standard tools from optimization and convexity over $F(r, p_1)$ to address this.

\subsubsection{Performance Gap Lemmas}

\begin{lemma}[FOC Inequality]\label{lem:mutual_feas_foc}
Recall that $r^\star(p_1)= \arg\min_{r \in\mathcal S(p_1)} F(r, p_1)$ and $r^\star(p_1')= \arg\min_{r \in\mathcal S(p_1')} F(r, p_1')$. 
Then,
\begin{align*}
\left\langle \triangledown_r F(r^\star(p_1),p_1)-\triangledown_r F(r^\star(p_1'),p_1'),\; r^\star(p_1')-r^\star(p_1)\right\rangle \ge 0.
\end{align*}
\end{lemma}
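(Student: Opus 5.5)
The inequality follows from adding the two first-order optimality (variational inequality) conditions, with mutual feasibility (Assumption~\ref{ass:1}, item 4) making each minimizer an admissible test point for the other problem. For a fixed $p_1$, the map $r \mapsto F(r,p_1)$ is a composition of the affine expansion $\Phi_{p_1}$ with the log-linear objective, so it is differentiable. By Assumption~\ref{ass:1} it is strongly convex, hence convex, on the feasible set $\mathcal S(p_1)$. That set is the intersection of the simplex with finitely many upper-bound half-spaces, so it is convex. Convexity is all the optimality argument needs.

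First, I will write the first-order optimality condition for a differentiable convex function minimized over a convex set. Because $r^\star(p_1)$ minimizes $F(\cdot,p_1)$ over $\mathcal S(p_1)$, we have
\begin{align*}
\left\langle \triangledown_r F(r^\star(p_1),p_1),\; r - r^\star(p_1)\right\rangle \ge 0 \quad \forall r \in \mathcal S(p_1).
\end{align*}
The same reasoning for $p_1'$ gives
\begin{align*}
\left\langle \triangledown_r F(r^\star(p_1'),p_1'),\; r - r^\star(p_1')\right\rangle \ge 0 \quad \forall r \in \mathcal S(p_1').
\end{align*}

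Second, I will apply mutual feasibility to choose the test points. Since $r^\star(p_1') \in \mathcal S(p_1)$, I substitute $r = r^\star(p_1')$ into the first inequality. Since $r^\star(p_1) \in \mathcal S(p_1')$, I substitute $r = r^\star(p_1)$ into the second. This gives
\begin{align*}
\left\langle \triangledown_r F(r^\star(p_1),p_1),\; r^\star(p_1') - r^\star(p_1)\right\rangle \ge 0,
\end{align*}
and, after flipping the sign of the direction vector,
\begin{align*}
-\left\langle \triangledown_r F(r^\star(p_1'),p_1'),\; r^\star(p_1') - r^\star(p_1)\right\rangle \ge 0.
\end{align*}
Adding these two inequalities yields exactly the claimed bound.

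The only real obstacle is the cross-substitution. The two problems have different feasible sets, because the cap on $r_v$ depends on $p_1$. Without mutual feasibility, neither minimizer would be an admissible test point in the other problem's optimality condition, and this is precisely why the paper imposes that assumption. A smaller point concerns the boundary: $r^\star(p_1)$ may sit on the boundary of the simplex or of a repetition cap, where only the one-sided variational inequality holds. That is exactly the form used above, so no interior or KKT-multiplier argument is needed.
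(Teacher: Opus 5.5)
Your proposal is correct and follows the paper's proof: write the variational-inequality first-order condition for each constrained minimizer, use mutual feasibility (Assumption~\ref{ass:1}, item 4) to substitute each minimizer into the other problem's condition, and add the two inequalities. Your remarks on the convexity of $\mathcal S(p_1)$ and the differentiability of $F(\cdot,p_1)$ only make explicit what the paper uses implicitly.
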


\begin{proof}
Since $r^\star(p_1)$ minimizes $F(\cdot,p_1)$ over the convex set $\mathcal S(p_1)$, we have the first-order condition
\begin{align*}
\langle \triangledown_r F(r^\star(p_1),p_1), r-r^\star(p_1)\rangle \ge 0 \quad \forall r\in\mathcal S(p_1).
\end{align*}

Using the mutual feasibility assumption, $r^\star(p_1')\in\mathcal S(p_1)$, so plugging $r=r^\star(p_1')$ yields
\begin{align*}
\langle \triangledown_r F(r^\star(p_1),p_1), r^\star(p_1')-r^\star(p_1)\rangle \ge 0.
\end{align*}

Similarly, since $r^\star(p_1')$ minimizes $F(\cdot,p_1')$ over $\mathcal S(p_1')$ and $r^\star(p_1)\in\mathcal S(p_1')$,
\begin{align*}
\langle \triangledown_r F(r^\star(p_1'),p_1'), r^\star(p_1)-r^\star(p_1')\rangle \ge 0.
\end{align*}
Adding the two inequalities completes the proof.
\end{proof}

\begin{lemma}[Mean value bound for $\triangledown_r F$ along a segment]\label{lem:mvt_grad_r}
Fix $r$ and consider the map
\begin{align*}
G(\cdot) := \triangledown_r F(r,\cdot).
\end{align*}
Assume $G$ is continuously differentiable on the line segment
\begin{align*}
p_1^t := t p_1' + (1-t)p_1,\qquad t\in[0,1].
\end{align*}
Then
\begin{align*}
\left\|\triangledown_r F(r,p_1')-\triangledown_r F(r,p_1)\right\|
\le
\sup_{t\in[0,1]}\left\|\triangledown^2_{r,p_1}F(r,p_1^t)\right\|\cdot \|p_1'-p_1\|.
\end{align*}
\end{lemma}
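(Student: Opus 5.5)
The approach is to reduce this vector-valued bound to the fundamental theorem of calculus along the segment $p_1^t$, followed by a norm estimate on the resulting integral. Fix $r$ and define $h(t) := G(p_1^t) = \triangledown_r F(r, p_1^t)$ for $t \in [0,1]$. The segment lies in the convex set of admissible $p_1$, since a convex combination of two simplex vectors is again in the simplex. $G$ is continuously differentiable on the segment, and $t \mapsto p_1^t$ is affine with derivative $p_1' - p_1$. So by the chain rule $h$ is $C^1$ on $[0,1]$, with
\begin{align*}
h'(t) = \triangledown^2_{r,p_1} F(r, p_1^t)\,(p_1' - p_1),
\end{align*}
where $\triangledown^2_{r,p_1}F$ is the mixed Jacobian of $\triangledown_r F$ with respect to its second argument. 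It is a matrix of size $(1+|\Dcomp|) \times |\Dfix|$, and the norm in the statement is the operator norm induced by the vector norm used throughout.

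Next, I would apply the fundamental theorem of calculus componentwise. This is legitimate because $h'$ is continuous on a compact interval, hence Riemann integrable. It gives
\begin{align*}
\triangledown_r F(r,p_1') - \triangledown_r F(r,p_1) = h(1) - h(0) = \int_0^1 \triangledown^2_{r,p_1} F(r, p_1^t)\,(p_1' - p_1)\,dt .
\end{align*}
I deliberately use this integral form rather than a scalar mean value theorem, because the MVT fails for vector-valued maps. The integral form avoids having to pick a single intermediate point.

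Finally, I would take norms. The triangle inequality for integrals gives $\|\int_0^1 v(t)\,dt\| \le \int_0^1 \|v(t)\|\,dt$, which holds for any norm: it follows from Riemann sums, or by duality. Submultiplicativity of the operator norm then bounds the integrand by $\|\triangledown^2_{r,p_1}F(r,p_1^t)\|\cdot\|p_1'-p_1\|$. Bounding the integrand by its supremum over $t\in[0,1]$ yields the claim. The supremum is finite, since it is a continuous function on a compact set.

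The only point needing care is the middle step: the vector-valued integral inequality and the matching of the matrix norm to the vector norm. Once the operator norm is fixed as the induced norm, the rest is routine. Under the log-linear assumption, $F(r,p_1)$ is smooth in $(r,p_1)$, being a sum of exponentials of bilinear expressions in $r_v p_1$ and $r_{\Dcomp}$, so the $C^1$ hypothesis is automatically satisfied.
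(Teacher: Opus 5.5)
Your proposal is correct and follows essentially the same route as the paper. You set $h(t)=\triangledown_r F(r,p_1^t)$, write the difference as $\int_0^1 \triangledown^2_{r,p_1}F(r,p_1^t)\,(p_1'-p_1)\,dt$ by the fundamental theorem of calculus, and bound it using the integral triangle inequality, submultiplicativity of the matrix norm, and the supremum over $t\in[0,1]$; your remarks on avoiding the vector-valued mean value theorem and on finiteness of the supremum are details the paper leaves implicit.
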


\begin{proof}
Define the scalar function
\begin{align*}
h(t) := G(p_1^t),\qquad t\in[0,1].
\end{align*}
Since $G$ is continuously differentiable on the segment, $h$ is differentiable and
\begin{align*}
h'(t) = \triangledown_{p_1}G(p_1^t)\,(p_1'-p_1),
\end{align*}
where $\triangledown_{p_1}G(p_1^t)$ denotes the Jacobian of $G$ at $p_1^t$.
By the fundamental theorem of calculus,
\begin{align*}
G(p_1')-G(p_1) = h(1)-h(0) = \int_0^1 h'(t)\,dt
= \int_0^1 \triangledown_{p_1}G(p_1^t)\,(p_1'-p_1)\,dt.
\end{align*}
Taking norms and applying the triangle inequality,
\begin{align*}
\|G(p_1')-G(p_1)\|
&\le \int_0^1 \|\triangledown_{p_1}G(p_1^t)\,(p_1'-p_1)\|\,dt \\
&\le \int_0^1 \|\triangledown_{p_1}G(p_1^t)\|\,\|p_1'-p_1\|\,dt \\
&\le \left(\sup_{t\in[0,1]}\|\triangledown_{p_1}G(p_1^t)\|\right)\cdot \|p_1'-p_1\|.
\end{align*}
Substituting $G(\cdot)=\triangledown_r F(r,\cdot)$ completes the proof.
\end{proof}

\begin{lemma}[Behavior of $r^\star(\cdot)$ under changes in $p_1$]\label{lem:stability}
Let $\Delta := p_1'-p_1$ and $\Delta r := r^\star(p_1')-r^\star(p_1)$. 

Then
\begin{align*}
\|\Delta r\| \;\le\; \frac{M}{\mu}\,\|\Delta\|,
\end{align*}
where
\begin{align*}
M \;:=\; \sup_{t\in[0,1]} \left\| \triangledown^2_{r,p_1} F\big(r^\star(p_1'),\, p_1^t\big)\right\|,\quad
p_1^t := t p_1' + (1-t)p_1.
\end{align*}
\end{lemma}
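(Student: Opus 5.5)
The plan is the standard stability argument: combine the FOC inequality (Lemma~\ref{lem:mutual_feas_foc}) with $\mu$-strong convexity of $F(\cdot,p_1)$ in $r$ (Assumption~\ref{ass:1}, item 3), and control the cross term by the mean value bound (Lemma~\ref{lem:mvt_grad_r}).

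First, I add and subtract $\triangledown_r F(r^\star(p_1'),p_1)$ inside the inner product of Lemma~\ref{lem:mutual_feas_foc}. This gives
\begin{align*}
0 \le \left\langle \triangledown_r F(r^\star(p_1),p_1)-\triangledown_r F(r^\star(p_1'),p_1),\, \Delta r\right\rangle + \left\langle \triangledown_r F(r^\star(p_1'),p_1)-\triangledown_r F(r^\star(p_1'),p_1'),\, \Delta r\right\rangle .
\end{align*}
The first term compares gradients at the same $p_1$ but different $r$. By strong monotonicity of the gradient of a $\mu$-strongly convex function, $\langle \triangledown_r F(r^\star(p_1'),p_1)-\triangledown_r F(r^\star(p_1),p_1),\, \Delta r\rangle \ge \mu\|\Delta r\|^2$. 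Hence the first term is at most $-\mu\|\Delta r\|^2$.

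The second term compares gradients at the same $r=r^\star(p_1')$ but different $p_1$. By Cauchy--Schwarz and Lemma~\ref{lem:mvt_grad_r} applied with $r=r^\star(p_1')$ fixed, it is at most $M\|\Delta\|\,\|\Delta r\|$, with $M$ exactly the supremum in the statement. Lemma~\ref{lem:mvt_grad_r} requires $\triangledown_r F(r,\cdot)$ to be $C^1$ along the segment. This holds because under the log-linear model $F(\Phi_{p_1}(r))$ is a finite sum of exponentials of expressions bilinear in $(r_v,p_1)$ and linear in the remaining coordinates of $r$, so it is smooth.

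Combining the two bounds gives $\mu\|\Delta r\|^2 \le M\|\Delta\|\,\|\Delta r\|$. If $\Delta r = 0$ the claim is trivial; otherwise I divide by $\|\Delta r\|$ to get $\|\Delta r\|\le \frac{M}{\mu}\|\Delta\|$.

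The main obstacle is bookkeeping rather than substance. The feasible sets $\mathcal S(p_1)$ and $\mathcal S(p_1')$ differ, so the two minimizers cannot be compared by a single first-order condition. This is already handled by the mutual feasibility assumption built into Lemma~\ref{lem:mutual_feas_foc}. It remains to make sure the strong convexity inequality is applied to $F(\cdot,p_1)$ on a convex set containing both $r^\star(p_1)$ and $r^\star(p_1')$. The simplex in $r$ suffices, since the assumption states strong convexity in $r$ globally.
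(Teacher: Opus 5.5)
Your proposal is correct and follows essentially the same route as the paper. Both combine three ingredients:
\begin{itemize}
\item strong monotonicity of $\triangledown_r F(\cdot,p_1)$, which the paper obtains by summing two strong-convexity inequalities;
\item the FOC inequality of Lemma~\ref{lem:mutual_feas_foc};
\item the mean value bound of Lemma~\ref{lem:mvt_grad_r} at fixed $r=r^\star(p_1')$.
\end{itemize}
The only difference is that you start from the FOC inequality and add and subtract $\triangledown_r F(r^\star(p_1'),p_1)$, whereas the paper starts from the strong-convexity bound and adds and subtracts $\triangledown_r F(r^\star(p_1'),p_1')$. Your remarks on the case $\Delta r = 0$ and on the smoothness of $\triangledown_r F(r,\cdot)$ along the segment are small details the paper leaves implicit.
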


\begin{proof}
By $\mu$-strong convexity of $F(\cdot,p_1)$ in $r$, for $r:=r^\star(p_1)$ and $r':=r^\star(p_1')$,
\begin{align*}
F(r',p_1)&\ge F(r,p_1)+\langle \triangledown_r F(r,p_1), r'-r\rangle +\frac{\mu}{2}\|r'-r\|^2,\\
F(r,p_1)&\ge F(r',p_1)+\langle \triangledown_r F(r',p_1), r-r'\rangle +\frac{\mu}{2}\|r-r'\|^2.
\end{align*}
Adding yields
\begin{align*}
\mu\|\Delta r\|^2 \le \left\langle \triangledown_r F(r^\star(p_1'),p_1)-\triangledown_r F(r^\star(p_1),p_1),\; \Delta r\right\rangle.
\end{align*}
Next, we add and subtract $\triangledown_r F(r^\star(p_1'),p_1')$:
\begin{align*}
\mu\|\Delta r\|^2
&\le
\left\langle \triangledown_r F(r^\star(p_1'),p_1)-\triangledown_r F(r^\star(p_1'),p_1'),\; \Delta r\right\rangle \\
&\quad+
\left\langle \triangledown_r F(r^\star(p_1'),p_1')-\triangledown_r F(r^\star(p_1),p_1),\; \Delta r\right\rangle.
\end{align*}
By Lemma~\ref{lem:mutual_feas_foc}, the second inner product is less than $0$. Thus
\begin{align*}
\mu\|\Delta r\|^2
\le
\left\|\triangledown_r F(r^\star(p_1'),p_1)-\triangledown_r F(r^\star(p_1'),p_1')\right\|\cdot \|\Delta r\|.
\end{align*}

Finally, applying Lemma~\ref{lem:mvt_grad_r}, we have
\begin{align*}
    \mu\|\Delta r\|^2 \le \sup_{t\in[0,1]}\left\|\triangledown^2_{r,p_1}F(r^\star(p_1'),p_1^t)\right\| \cdot \|\Delta\| \cdot \| \Delta r \|
\end{align*}

We divide both sides by $\|\Delta r\|$ to complete the proof.
\end{proof}

\begin{lemma}[Decomposing the performance gap]\label{lem:gap_decomp}
Define $\Delta:=p_1'-p_1$, $\Delta r:=r^\star(p_1')-r^\star(p_1)$. Then
\begin{align*}
F(r^\star(p_1),p_1)-F(r^\star(p_1'),p_1')
&\le
\|\triangledown_r F(r^\star(p_1),p_1)\|\cdot \|\Delta r\|
+
\|\Delta\|\cdot \sup_{t\in[0,1]}\|\triangledown_{p_1}F(r^\star(p_1'),p_1^t)\|.
\end{align*}
\end{lemma}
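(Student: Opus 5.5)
We split the gap by inserting the intermediate point $F(r^\star(p_1'),p_1)$, which evaluates the optimal collapsed mixture for the post-update ratios under the reused ratios $p_1$:
\begin{align*}
F(r^\star(p_1),p_1)-F(r^\star(p_1'),p_1') = \underbrace{F(r^\star(p_1),p_1)-F(r^\star(p_1'),p_1)}_{\text{(I)}} + \underbrace{F(r^\star(p_1'),p_1)-F(r^\star(p_1'),p_1')}_{\text{(II)}}.
\end{align*}
Term (I) is a change in $r$ at fixed $p_1$. Term (II) is a change in $p_1$ at fixed $r = r^\star(p_1')$.

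For term (I), we use convexity of $F(\cdot,p_1)$ in $r$ (Assumption~\ref{ass:1}, which gives strong convexity). The first-order convexity inequality $F(r',p_1)\ge F(r,p_1)+\langle \triangledown_r F(r,p_1), r'-r\rangle$, taken at $r=r^\star(p_1)$ and $r'=r^\star(p_1')$, gives
\begin{align*}
F(r^\star(p_1),p_1)-F(r^\star(p_1'),p_1) \le -\langle \triangledown_r F(r^\star(p_1),p_1), \Delta r\rangle .
\end{align*}
Cauchy--Schwarz then bounds this by $\|\triangledown_r F(r^\star(p_1),p_1)\|\,\|\Delta r\|$. We do not need the first-order optimality condition here, only the sign-agnostic Cauchy--Schwarz bound. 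The strong-convexity quadratic term is nonnegative and can simply be dropped.

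For term (II), $r=r^\star(p_1')$ is fixed and we move along the segment $p_1^t = tp_1'+(1-t)p_1$. Let $\phi(t):=F(r^\star(p_1'),p_1^t)$. By the fundamental theorem of calculus, as in Lemma~\ref{lem:mvt_grad_r} but for a scalar function,
\begin{align*}
\phi(0)-\phi(1) = -\int_0^1 \langle \triangledown_{p_1}F(r^\star(p_1'),p_1^t), \Delta\rangle\,dt \le \|\Delta\|\sup_{t\in[0,1]}\|\triangledown_{p_1}F(r^\star(p_1'),p_1^t)\|.
\end{align*}
Adding the bounds on (I) and (II) gives the claimed inequality.

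The only delicate point is regularity. We need $F(r,\cdot)$ to be differentiable along the segment, and we need $\Phi_{p_1}(r)$ to stay meaningful for $p_1^t$. Both hold: $F$ is a composition of the log-linear model (Assumption~\ref{ass:1}) with the map $\Phi_{p_1}(r)$, which is bilinear in $(r_v,p_1)$, so $F$ is smooth. Convex combinations of simplex vectors remain in the simplex, so $p_1^t$ is a valid ratio vector. No feasibility issue arises, because $r^\star(p_1')$ is merely evaluated under $p_1^t$, not required to be optimal there.
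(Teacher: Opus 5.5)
Your proof is correct and follows the same route as the paper's proof. Both insert $F(r^\star(p_1'),p_1)$ and bound the resulting $r$-difference by the first-order convexity inequality plus Cauchy--Schwarz. Both bound the $p_1$-difference by a mean-value/fundamental-theorem argument along the segment $p_1^t$.
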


\begin{proof}
Add and subtract $F(r^\star(p_1'),p_1)$ from the LHS:
\begin{align*}
F(r^\star(p_1),p_1)-F(r^\star(p_1'),p_1')
=
\big(F(r^\star(p_1),p_1)-F(r^\star(p_1'),p_1)\big)
+
\big(F(r^\star(p_1'),p_1)-F(r^\star(p_1'),p_1')\big).
\end{align*}
For the first difference, convexity of $F(\cdot,p_1)$ implies
\begin{align*}
F(r^\star(p_1'),p_1)\ge F(r^\star(p_1),p_1)+\langle \triangledown_r F(r^\star(p_1),p_1), r^\star(p_1')-r^\star(p_1)\rangle,
\end{align*}
so
\begin{align*}
F(r^\star(p_1),p_1)-F(r^\star(p_1'),p_1)
\le
\langle \triangledown_r F(r^\star(p_1),p_1), r^\star(p_1)-r^\star(p_1')\rangle
\le
\|\triangledown_r F(r^\star(p_1),p_1)\|\cdot \|\Delta r\|.
\end{align*}
For the second difference, we apply the mean value theorem along $p_1^t$:
\begin{align*}
|F(r^\star(p_1'),p_1)-F(r^\star(p_1'),p_1')|
\le
\|\Delta\|\cdot \sup_{t\in[0,1]}\|\triangledown_{p_1}F(r^\star(p_1'),p_1^t)\|.
\end{align*}
Combining the two bounds completes our proof.
\end{proof}

\subsubsection{Gradient Lemmas}

\begin{lemma} \label{lemma:grad_r}
    The gradient norm $\|\triangledown_r F(r^\star(p_1),p_1)\|$ can be bounded by 
    \begin{align*}
        \triangledown_r F(r^\star(p_1), p_1) \le \bar{F}(q^\star(p_1))  \big\| | A_1 p_1 | + \alphacomp \big\|.
    \end{align*}
\end{lemma}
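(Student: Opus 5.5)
The plan is to compute $\triangledown_r F(r,p_1)$ explicitly from the log-linear form and then bound its norm entrywise. Under Assumption~\ref{ass:1}, $F(r,p_1) = \frac{1}{n}\sum_{i=1}^n \big(c_i + \exp(A_i^\top \Phi_{p_1}(r))\big)$. The expansion is linear in $r$ for fixed $p_1$: $q_j = r_v p_{1,j}$ for $j\in\Dfix$ and $q_j = r_j$ for $j\in\Dcomp$. Hence
\[
A_i^\top \Phi_{p_1}(r) = r_v\, A_{i,\fix}^\top p_1 + A_{i,\comp}^\top r_{\Dcomp},
\]
where $r_{\Dcomp}$ denotes the coordinates of $r$ indexed by $\Dcomp$. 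Writing $e_i(r) := \exp(A_i^\top \Phi_{p_1}(r))$, the chain rule gives
\[
\triangledown_r F(r,p_1) = \frac{1}{n}\sum_{i=1}^n e_i(r)\,\big[A_{i,\fix}^\top p_1,\; A_{i,\comp}\big].
\]
The first coordinate corresponds to the virtual domain $v$, and the remaining block is indexed by $\Dcomp$. I read $A_1 p_1$ in the statement as the vector in $\R^n$ with entries $A_{i,\fix}^\top p_1$.

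Next I bound the norm by the triangle inequality:
\[
\|\triangledown_r F\| \le \frac{1}{n}\sum_i e_i(r)\,\big\|[A_{i,\fix}^\top p_1, A_{i,\comp}]\big\| \le \frac{1}{n}\sum_i e_i(r)\big(|A_{i,\fix}^\top p_1| + \alpha_{i,\comp}\big).
\]
Evaluated at $r = r^\star(p_1)$, the weights $e_i$ are exactly the summands of $\bar F(q^\star(p_1)) = \frac1n\sum_i e_i(r^\star(p_1))$. The weights $w_i := e_i/\sum_k e_k$ are nonnegative and sum to one, so the bound becomes
\[
\bar F(q^\star(p_1)) \sum_i w_i\big(|A_{i,\fix}^\top p_1| + \alpha_{i,\comp}\big).
\]

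It remains to bound the convex combination $\sum_i w_i x_i$, with $x_i := |A_{i,\fix}^\top p_1| + \alpha_{i,\comp} \ge 0$, by $\|x\|$. Since $\sum_i w_i x_i \le \max_i x_i = \|x\|_\infty \le \|x\|_2$, this holds for the Euclidean norm on $\R^n$, and it gives exactly $\bar F(q^\star(p_1))\,\big\|\,|A_1p_1| + \alphacomp\big\|$.

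This last step is the main obstacle, because it depends on getting the normalization right. If the $\frac1n$ is instead kept outside and one uses Cauchy--Schwarz $\sum_i e_i x_i \le \|e\|\,\|x\|$, the prefactor becomes $\frac1n\|e\| \le \bar F$, which also works since $\|e\|_2 \le \|e\|_1$. Either route gives the claimed bound, so I would present the convex-combination argument and note the alternative.
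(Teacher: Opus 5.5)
Your proof is correct and follows the paper's argument. The paper likewise writes $\triangledown_r F = \frac{1}{n}\sum_i \exp(s_i)\, a_i$ with $a_i = [\langle A_{i,\fix}, p_1\rangle;\, A_{i,\comp}]$; its $(\rho, b_2)$ coordinates are your $(r_v, r_{\Dcomp})$. It then applies the triangle inequality and uses $\|a_i\| \le |\langle A_{i,\fix}, p_1\rangle| + \alpha_{i,\comp}$.

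The only difference is the aggregation over tasks. The paper uses Cauchy--Schwarz followed by $\|e\|_2 \le \|e\|_1$, which is exactly your noted alternative. Your convex-combination step instead gives the slightly sharper intermediate bound $\bar{F}(q^\star(p_1)) \max_i x_i$ before relaxing to the Euclidean norm.
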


\begin{proof}

First, we write $r$ as $[\rho, b_2]$, where $b_2 = (1 - \rho)p_2$. Define $s_i = A_{i, 1}^\top \rho p_1 + A_{i, 2}^\top b_2$. We compute the gradient of $F(r, p_1)$ with respect to $\rho$ and $b_2$:
\begin{align*}
    \triangledown_{\rho} F(r, p_1) &= \frac{1}{n} \sum_{i = 1}^n \exp(s_i) A_{i, 1}^\top p_1 \\
    \triangledown_{b_2} F(r, p_1) &= \frac{1}{n} \sum_{i = 1}^n \exp(s_i) A_{i, 2}
\end{align*}

We can consolidate this into a single expression if we define $a_i = \SmallMatrix{ \langle A_{i, 1}, p_1 \rangle \\ A_{i, 2}} \in \R^{|\D_2'| + 1}$:
\begin{align*}
    \triangledown_{r} F(r, p_1) &= \frac{1}{n} \sum_{i = 1}^n \exp(s_i) a_i.
\end{align*}

Let us write $r^\star(p_1)$ as $[\rho^\star(p_1), (1 - \rho^\star(p_1)) p_2^\star(p_1)]$. The expression for the gradient at $r^\star(p_1)$ is
\begin{align*}
    \triangledown_r F(r^\star(p_1), p_1) &= \frac{1}{n}\sum_{i = 1}^n \exp(A_{i, 1}^\top \rho^\star(p_1) p_1 + A_{i, 2}^\top (1 - \rho^\star(p_1)) p_2^\star(p_1)) a_i.
\end{align*}

We now bound the norm. First, applying the Cauchy-Schwarz inequality twice, we have:
\begin{align*}
    \|\triangledown_r F(r^\star(p_1), p_1)\| &\le \frac{1}{n} \sum_{i = 1}^n \exp(A_{i, 1}^\top \rho^\star(p_1) p_1 + A_{i, 2}^\top (1 - \rho^\star(p_1)) p_2^\star(p_1)) \|a_i\| \\
    &\le \frac{1}{n} \sqrt{\sum_{i = 1}^n \exp(2(A_{i, 1}^\top \rho^\star(p_1) p_1 + A_{i, 2}^\top (1 - \rho^\star(p_1)) p_2^\star(p_1)))} \sqrt{\sum_{i = 1}^n \| a_i\|^2} \\
    &\le \frac{1}{n} \sum_{i = 1}^n \exp(A_{i, 1}^\top \rho^\star(p_1) p_1 + A_{i, 2}^\top (1 - \rho^\star(p_1)) p_2^\star(p_1)) \sqrt{\sum_{i = 1}^n \| a_i\|^2}.
\end{align*}

Observe that the first summation is equivalent to $\bar{F}(q^\star(p_1))$. Then, to bound the remaining term $\sqrt{\sum_{i = 1}^n \| a_i\|^2}$, we have
\begin{align*}
\sqrt{\sum_{i = 1}^n \| a_i\|^2} &= \sqrt{\sum_{i = 1}^n \langle A_{i, 1}, p_1 \rangle^2 + \| A_{i, 2}\|^2 } \\
&\le \sqrt{\sum_{i = 1}^n (|\langle A_{i, 1}, p_1 \rangle| + \| A_{i, 2}\|)^2 } \\
&= \big\| | A_1 p_1 | + \alphacomp \big\|.
\end{align*}

where $| \cdot |$ is an elementwise absolute value, and $\alphacomp \in \R^n$ is constructed where $\alpha_{i, 2} = \|A_{i, 2}\|$. 

Therefore,
\begin{align*}
    \triangledown_r F(r^\star(p_1), p_1) \le \bar{F}(q^\star(p_1))  \big\| | A_1 p_1 | + \alphacomp \big\|.
\end{align*}

\end{proof}

\begin{lemma}\label{lemma:grad_p_1}
    The gradient norm $\triangledown_{p_1} F(r^\star(p_1'), p_1^t)$ can be bounded by 
    \begin{align*}
        &\|\triangledown_{p_1} F(r^\star(p_1'), p_1^t)\| \le  \bar{F}(q^\star(p_1'))  \big\|  \exp( \alphafix \|\triangle \|) \odot \alphafix   \big\|,
    \end{align*}
    where $\odot$ is the Hadamard product, and $\exp$ is elementwise.
\end{lemma}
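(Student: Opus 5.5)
We follow the template of Lemma~\ref{lemma:grad_r}: compute the gradient in closed form, bound its norm by Cauchy--Schwarz, and identify the exponential factor with $\bar{F}$ at the reference point $q^\star(p_1')$. Throughout, $\triangle$ denotes $\Delta = p_1' - p_1$.

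\textbf{Step 1 (closed-form gradient).} Fix $r^\star(p_1') = [\rho', (1-\rho')p_2']$, where $\rho' = \rho^\star(p_1')$ and $p_2' = p_2^\star(p_1')$. Write $s_i(p_1) = \rho' A_{i,\fix}^\top p_1 + (1-\rho')A_{i,\comp}^\top p_2'$, so that $F(r^\star(p_1'),p_1) = \frac1n\sum_i (c_i + \exp(s_i(p_1)))$. Differentiating in $p_1$ gives
\begin{align*}
\triangledown_{p_1}F(r^\star(p_1'),p_1^t) = \frac1n\sum_{i=1}^n \rho' \exp(s_i(p_1^t))\, A_{i,\fix}.
\end{align*}

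\textbf{Step 2 (bounding the norm).} By the triangle inequality and $\rho'\le 1$, the norm is at most $\frac1n\sum_i \exp(s_i(p_1^t))\,\alpha_{i,\fix}$.

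\textbf{Step 3 (comparison with the reference point).} Since $p_1^t = p_1' - (1-t)\Delta$,
\begin{align*}
s_i(p_1^t) = s_i(p_1') - (1-t)\rho' A_{i,\fix}^\top \Delta \le s_i(p_1') + \alpha_{i,\fix}\|\Delta\|,
\end{align*}
using Cauchy--Schwarz together with $\rho', 1-t \in [0,1]$. Therefore
\begin{align*}
\exp(s_i(p_1^t)) \le \exp(s_i(p_1'))\exp(\alpha_{i,\fix}\|\Delta\|).
\end{align*}
Moreover, $s_i(p_1')$ is exactly the exponent evaluated at $q^\star(p_1')$, because $\Phi_{p_1'}(r^\star(p_1')) = q^\star(p_1')$.

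\textbf{Step 4 (Cauchy--Schwarz over tasks).} Apply Cauchy--Schwarz over $i$, as in Lemma~\ref{lemma:grad_r}:
\begin{align*}
\frac1n\sum_i e^{s_i(p_1')}\, e^{\alpha_{i,\fix}\|\Delta\|}\alpha_{i,\fix} \le \frac1n\Big(\sum_i e^{2s_i(p_1')}\Big)^{1/2}\big\|\exp(\alphafix\|\Delta\|)\odot\alphafix\big\|.
\end{align*}
Next use $\big(\sum_i e^{2s_i}\big)^{1/2}\le \sum_i e^{s_i}$, which holds because every term is nonnegative. This turns the first factor into $n\,\bar F(q^\star(p_1'))$, and the factor $n$ cancels the leading $\frac1n$, giving the claimed bound.

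\textbf{Main obstacle.} The only delicate point is Step 3, controlling the exponent uniformly along the segment. The key observations are that the perturbation enters only through $\rho' A_{i,\fix}^\top(p_1^t - p_1')$, and that both $\rho'$ and $1-t$ lie in $[0,1]$. This is what makes the factor $\exp(\alpha_{i,\fix}\|\Delta\|)$, and no worse, sufficient. Everything else is bookkeeping parallel to Lemma~\ref{lemma:grad_r}.
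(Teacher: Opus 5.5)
Your proof is correct and follows essentially the same route as the paper. Both arguments compute the closed-form gradient and apply the triangle inequality with $\rho^\star(p_1')\le 1$. Both split each exponential into its value at $q^\star(p_1')$ times a perturbation factor, bounded via Cauchy--Schwarz and $\|p_1^t-p_1'\|\le\|\Delta\|$, and finish with Cauchy--Schwarz over tasks plus $\big(\sum_i e^{2s_i}\big)^{1/2}\le\sum_i e^{s_i}$. The only difference is cosmetic: you bound the perturbation factor pointwise before the Cauchy--Schwarz over tasks, while the paper does it afterward.
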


\begin{proof}

Define $s_i = A_{i, 1}^\top \rho p_1 + A_{i, 2}^\top (1 - \rho) p_2$. We compute the gradient of $F(r, p_1)$ with respect to $p_1$:
\begin{align*}
    \triangledown_{p_1} F(r, p_1) = \frac{1}{n}\sum_{i = 1}^n \exp(s_i) \rho A_{i, 1}.
\end{align*}

Let us write $r^\star(p_1')$ as $[\rho^\star(p_1'), (1 - \rho^\star(p_1')) p_2^\star(p_1')]$. Then, the gradient of $F(r, p_1)$ at $r^\star(p_1'), p_1^t$ is
\begin{align*}
    \triangledown_{p_1} F(r^\star(p_1'), p_1^t) &= \frac{1}{n} \sum_{i = 1}^n \exp(A_{i, 1}^\top \rho^\star(p_1') p_1^t + A_{i, 2}^\top (1 - \rho^\star(p_1')) p_2^\star(p_1')) \rho^\star(p_1') A_{i, 1}.
\end{align*}

We now bound the norm of the gradient over $t \in [0, 1]$. First, we add and subtract $p_1'$ to create an $\exp$ term defined over $p_1'$ and a difference term with $p_1^t - p_1'$:
\begin{align*}
    &\|\triangledown_{p_1} F(r^\star(p_1'), p_1^t)\| \le \frac{1}{n} \sum_{i = 1}^n \exp(A_{i, 1}^\top \rho^\star(p_1') (p_1' + p_1^t - p_1') + A_{i, 2}^\top (1 - \rho^\star(p_1')) p_2^\star(p_1')) \rho^\star(p_1')  \| A_{i, 1} \| \\
    &\le \frac{1}{n} \sum_{i = 1}^n \exp(A_{i, 1}^\top \rho^\star(p_1') p_1' + A_{i, 2}^\top (1 - \rho^\star(p_1')) p_2^\star(p_1')) \exp(A_{i, 1}^\top \rho^\star(p_1') (p_1^t - p_1'))  \| A_{i, 1} \|.
\end{align*}

Then, we apply Cauchy-Schwarz inequality to get
\begin{align*}
    &\|\triangledown_{p_1} F(r^\star(p_1'), p_1^t)\| \\
    &\le \frac{1}{n} \sqrt{\sum_{i = 1}^n \exp(2 A_{i, 1}^\top \rho^\star(p_1') (p_1^t - p_1'))  \| A_{i, 1} \|^2} \sqrt{\sum_{i = 1}^n \exp(2(A_{i, 1}^\top \rho^\star(p_1') p_1' + A_{i, 2}^\top (1 - \rho^\star(p_1')) p_2^\star(p_1')))} \\
    &\le \frac{1}{n} \sqrt{\sum_{i = 1}^n \exp(2 A_{i, 1}^\top \rho^\star(p_1') (p_1^t - p_1'))  \| A_{i, 1} \|^2} \cdot \sum_{i = 1}^n \exp(A_{i, 1}^\top \rho^\star(p_1') p_1' + A_{i, 2}^\top (1 - \rho^\star(p_1')) p_2^\star(p_1')).
\end{align*}

We observe that we can replace the second summation with $\bar{F}(q^\star(p_1'))$, so we only need to further simplify $\sqrt{\sum_{i = 1}^n \exp(2 A_{i, 1}^\top \rho^\star(p_1') (p_1^t - p_1'))  \| A_{i, 1} \|^2}$. We observe that $\|p_1^t - p_1'\|$ is always bounded by $\|\Delta\|$, and use the fact that $\rho^\star(p_1') \le 1$:
\begin{align*}
    & \sqrt{\sum_{i = 1}^n \exp(2 A_{i, 1}^\top \rho^\star(p_1') (p_1^t - p_1'))  \| A_{i, 1} \|^2} \le \sqrt{\sum_{i = 1}^n \exp(2 \|A_{i, 1} \| \|p_1^t - p_1' \|)  \| A_{i, 1} \|^2} \\
    &\le \sqrt{\sum_{i = 1}^n \exp(2 \|A_{i, 1} \| \|\triangle \|)  \| A_{i, 1} \|^2} \le \big\|  \exp( \alphafix \|\triangle \|) \odot \alphafix   \big\|.
\end{align*}

Therefore,
\begin{align*}
    &\|\triangledown_{p_1} F(r^\star(p_1'), p_1^t)\| \le  \bar{F}(q^\star(p_1'))  \big\|  \exp( \alphafix \|\triangle \|) \odot \alphafix  \big\|.
\end{align*}
\end{proof}

\begin{lemma}\label{lemma:grad_r_p_1}
Let $a_{\max} = \max_i \{ \|A_{i, 1}\|\}$, the largest norm of $A_{i, 1}$ across all tasks. Define $b_{\max} \in \R^n$ where $b_{i, \max} = \max\{ |\langle A_{i, 1}, p_1 \rangle|, |\langle A_{i, 1}, p_1' \rangle| \}$ as the larger of the two dot products per task.  The gradient norm $\|\triangledown^2_{r,p_1}F(r^\star(p_1'),p_1^t)\|$ can be bounded by 
    \begin{align*}
\|\triangledown^2_{r,p_1}F(r^\star(p_1'),p_1^t)\| &\le\exp( a_{\max} \| \Delta \| ) \cdot  \bar{F}(q^\star(p_1')) \cdot \big\| (\mathbf{1} + b_{\max} + \alphacomp) \odot \alphafix \big\|,
    \end{align*}
    
where $\odot$ is the Hadamard product.
\end{lemma}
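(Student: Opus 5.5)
The plan is to compute the mixed second derivative $\triangledown^2_{r,p_1}F(r,p_1)$ explicitly from the parametrization used in Lemmas~\ref{lemma:grad_r} and~\ref{lemma:grad_p_1}. We write $r=[\rho,b_2]$ and $s_i = \rho A_{i,1}^\top p_1 + A_{i,2}^\top b_2$, and recall that $\triangledown_r F(r,p_1) = \frac1n\sum_i \exp(s_i) a_i$ with $a_i = [\langle A_{i,1},p_1\rangle;\, A_{i,2}]$. Differentiating in $p_1$, two terms appear. The first comes from the exponential factor: $\frac1n\sum_i \exp(s_i)\, a_i (\rho A_{i,1})^\top$. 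The second comes from the first coordinate of $a_i$ depending on $p_1$: $\frac1n\sum_i \exp(s_i)\, e_v A_{i,1}^\top$. So the Jacobian is $\frac1n\sum_i \exp(s_i)\,(\rho a_i + e_v) A_{i,1}^\top$, a sum of rank-one matrices. Each term has operator norm at most $\exp(s_i)\,(1 + \|a_i\|)\,\|A_{i,1}\|$, using $\rho\le 1$. We then bound $\|a_i\| \le |\langle A_{i,1},p_1^t\rangle| + \alpha_{i,\comp}$.

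Next we evaluate this at $r = r^\star(p_1')$ and $p_1 = p_1^t$. As in Lemma~\ref{lemma:grad_p_1}, we split $s_i$ as the value at $p_1'$ plus $\rho^\star(p_1') A_{i,1}^\top (p_1^t - p_1')$. The second piece is bounded by $\|A_{i,1}\|\,\|p_1^t-p_1'\| \le a_{\max}\|\Delta\|$, since $\|p_1^t - p_1'\| = (1-t)\|\Delta\|\le\|\Delta\|$. This pulls out the uniform factor $\exp(a_{\max}\|\Delta\|)$; using a single $a_{\max}$ rather than per-task norms is what lets the factor leave the sum. What remains inside is $\exp(s_i(p_1'))$, whose sum over $i$, divided by $n$, is $\bar F(q^\star(p_1'))$.

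For the dot product, $\langle A_{i,1},p_1^t\rangle$ is a convex combination of $\langle A_{i,1},p_1\rangle$ and $\langle A_{i,1},p_1'\rangle$, so its absolute value is at most $b_{i,\max}$. Each summand is therefore at most $\exp(a_{\max}\|\Delta\|)\exp(s_i(p_1'))\,(1+b_{i,\max}+\alpha_{i,\comp})\,\alpha_{i,\fix}$. We finish with the same Cauchy–Schwarz step as in Lemma~\ref{lemma:grad_r}: bound the $\ell_2$ norm of the vector $(\exp(s_i))_i$ by its $\ell_1$ norm, which is $n\bar F(q^\star(p_1'))$. The $1/n$ cancels against this, giving $\bar F(q^\star(p_1'))\,\|(\mathbf 1 + b_{\max}+\alphacomp)\odot\alphafix\|$.

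The main obstacle is getting the Jacobian right, in particular not dropping the $e_v A_{i,1}^\top$ term that arises because $a_i$ itself depends on $p_1$. That term is the source of the $\mathbf 1$ in the bound. Everything after that is bookkeeping parallel to the two preceding gradient lemmas.
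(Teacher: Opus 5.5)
Your proposal is correct and follows essentially the same route as the paper. Both use the same Jacobian $\frac{1}{n}\sum_i \exp(s_i)(\rho a_i + e_1)A_{i,1}^\top$, split the exponent around $p_1'$ to extract $\exp(a_{\max}\|\Delta\|)$, get $b_{\max}$ from the convex-combination bound, and finish with Cauchy--Schwarz plus $\ell_2\le\ell_1$ to produce $\bar F(q^\star(p_1'))$. The only differences are cosmetic and give the same bound: you pull the exponential factor out per summand before Cauchy--Schwarz, and you use $\|\rho a_i + e_1\|\le 1+\|a_i\|$ instead of expanding that norm exactly.
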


\begin{proof}
We derive an expression for $\triangledown_{r, p_1}^2 F(r, p_1)$. 
We write $r$ as $[\rho, b_2]$, where $b_2 := (1 - \rho) p_2$, and define $s_i = A_{i, 1}^\top \rho p_1 + A_{i, 2}^\top b_2$ to be the expression inside the $\exp$. First, we have that $\triangledown_{p_1} F(r, p_1)$ is
\begin{align*}
    \triangledown_{p_1} F(r, p_1) = \frac{1}{n} \sum_{i = 1}^n \exp(s_i) \rho A_{i, 1}.  \nonumber
\end{align*}

Then, we can compute $\triangledown_{\rho, p_1} F(r, p_1)$ and $\triangledown_{b_2, p_1} F(r, p_1)$:
\begin{align*}
    \triangledown_{\rho, p_1} F(r, p_1) &= \triangledown_{\rho} \bigg(\frac{1}{n}\sum_{i = 1}^n \exp(s_i) \rho A_{i, 1} \bigg) = \frac{1}{n} \sum_{i= 1}^n (\rho \exp(s_i) \langle A_{i, 1}, p_1 \rangle + \exp(s_i)) A_{i, 1}^\top \nonumber \\
    \triangledown_{b_2, p_1} F(r, p_1) &= \triangledown_{b_2} \bigg(\frac{1}{n}\sum_{i = 1}^n \exp(s_i) \rho A_{i, 1} \bigg) = \frac{1}{n} \sum_{i = 1}^n \rho \exp(s_i) A_{i, 2} A_{i, 1}^\top \nonumber
\end{align*}

We can consolidate these into
\begin{align*}
    \triangledown_{r, p_1}^2 F(r, p_1) = \frac{1}{n}\sum_{i = 1}^n \exp(s_i) (\rho a_i + e_1) A_{i, 1}^\top, \nonumber
\end{align*}

where $a_i = \SmallMatrix{\langle A_{i, 1}, p_1 \rangle \\ A_{i, 2}}  \in \R^{|\D_2'| + 1}$ and $e_1$ is the standard basis vector $\SmallMatrix{1 \\ 0 \\ \vdots \\ 0} \in \R^{|\D_2'| + 1}$. Therefore, the gradient $\triangledown_{r, p_1}^2 F(r, p_1)$ at $r^\star(p_1'), p_1^t$ is
\begin{align*}
    \triangledown_{r, p_1}^2 F(r^\star(p_1'), p_1^t) = \frac{1}{n}\sum_{i = 1}^n \exp(A_{i, 1}^\top \rho^\star(p_1') p_1^t + A_{i, 2}^\top (1 - \rho^\star(p_1')) p_2^\star(p_1')) (\rho^\star(p_1') a_i + e_1) A_{i, 1}^\top,
\end{align*}

where $a_i$ is now $\SmallMatrix{\langle A_{i, 1}, p_1^t \rangle \\ A_{i, 2}}$. Now, we would like to upper bound $ \sup _{t \in [0, 1]} \|\triangledown_{r, p_1}^2 F(r^\star(p_1'), p_1^t) \|$. Using Cauchy-Schwarz inequality, we have
\begin{align}
    &\|\triangledown_{r, p_1}^2 F(r^\star(p_1'), p_1^t) \| \le \frac{1}{n}\sum_{i = 1}^n \exp(A_{i, 1}^\top \rho^\star(p_1') p_1^t + A_{i, 2}^\top (1 - \rho^\star(p_1')) p_2^\star(p_1')) \|(\rho^\star(p_1') a_i + e_1) A_{i, 1}^\top \| \nonumber \\
    &\le \frac{1}{n} \sqrt{\sum_{i = 1}^n \exp\big(2 (A_{i, 1}^\top \rho^\star(p_1') p_1^t + A_{i, 2}^\top (1 - \rho^\star(p_1')) p_2^\star(p_1'))\big)} \sqrt{\sum_{i=1}^n \|(\rho^\star(p_1') a_i + e_1) A_{i, 1}^\top \|^2}. \label{eq:m_bound}
\end{align}

We simplify the first summation in~\eqref{eq:m_bound} by adding and subtracting $p_1'$ to $p_1^t$:
\begin{align*}
    \sum_{i = 1}^n &\exp\big(2 (A_{i, 1}^\top \rho^\star(p_1') p_1^t + A_{i, 2}^\top (1 - \rho^\star(p_1')) p_2^\star(p_1'))\big) \\
    &= \sum_{i = 1}^n \exp\big(2 A_{i, 1}^\top \rho^\star(p_1') (p_1' + p_1^t - p_1') +  2 A_{i, 2}^\top (1 - \rho^\star(p_1')) p_2^\star(p_1')\big) \\
    &= \sum_{i = 1}^n \exp\big(2 (A_{i, 1}^\top \rho^\star(p_1') p_1' + A_{i, 2}^\top (1 - \rho^\star(p_1')) p_2^\star(p_1')) + 2 A_{i,1}^\top \rho^\star(p_1') ( p_1^t - p_1')\big) \\
    &\le \max_i \{\exp(2 A_{i,1}^\top \rho^\star(p_1') ( p_1^t - p_1')) \} \bigg(\sum_{i = 1}^n \exp\big(A_{i, 1}^\top \rho^\star(p_1') p_1' + A_{i, 2}^\top (1 - \rho^\star(p_1')) p_2^\star(p_1')\big) \bigg)^2.
\end{align*}

We note that $\sum_{i = 1}^n \exp\big(A_{i, 1}^\top \rho^\star(p_1') p_1' + A_{i, 2}^\top (1 - \rho^\star(p_1')) p_2^\star(p_1')\big)$ is $n \cdot \bar{F}(q^\star(p_1'))$.  Moreover, using the fact that $\rho^\star(p_1') \le 1$ and $\|p_1^t - p_1' \|\le \| \Delta\|$, we can bound $\max_i \{\exp(2 A_{i,1}^\top \rho^\star(p_1') ( p_1^t - p_1')) \} \le \max_i \{\exp( 2 \|A_{i, 1}\| \| p_1^t - p_1' \|) \} \le \exp(2  \| \Delta \| \max_i \{ \|A_{i, 1}\| \})$. Let $a_{\max} = \max_i \{ \|A_{i, 1}\|\}$, the largest norm of $A_{i, 1}$ across all tasks. Therefore, we have:
\begin{align*}
    &\frac{1}{n}\sqrt{\sum_{i = 1}^n \exp\big(2 (A_{i, 1}^\top \rho^\star(p_1') p_1^t + A_{i, 2}^\top (1 - \rho^\star(p_1')) p_2^\star(p_1'))\big)} \le \frac{1}{n} \sqrt{\exp(2a_{\max}  \| \Delta \| ) \cdot  n^2 \cdot F(q^\star(p_1'))^2} \\
    &= \exp( a_{\max} \| \Delta \| ) \cdot  F(q^\star(p_1')).
\end{align*}

Next, we simplify the second summation in~\eqref{eq:m_bound}, $\sum_{i=1}^n \|(\rho^\star(p_1') a_i + e_1) A_{i, 1}^\top \|^2$, using Cauchy-Schwarz inequality, the definition of the norm, and the fact that $\rho^\star(p_1') \le 1$:
\begin{align*}
    &\sum_{i=1}^n \|(\rho^\star(p_1') a_i + e_1) A_{i, 1}^\top \|^2 \le \sum_{i=1}^n \|\rho^\star(p_1') a_i + e_1 \|^2 \| A_{i, 1} \|^2 \\
    &= \sum_{i=1}^n ((1 + \rho^\star(p_1') \langle A_{i, 1}, p_1^t \rangle)^2 + \rho^\star(p_1')^2 \| A_{i, 2}\|^2)  \| A_{i, 1} \|^2 \\
    &\le \sum_{i=1}^n ((1 + \rho^\star(p_1') \langle A_{i, 1}, p_1^t \rangle)^2 + \| A_{i, 2}\|^2)  \| A_{i, 1} \|^2.
\end{align*}

Note that $\langle A_{i, 1}, p_1^t \rangle$ is bound by $\langle A_{i, 1}, p_1' \rangle$ and $\langle A_{i, 1}, p_1 \rangle$. Therefore, it holds that $(1 + \rho^\star(p_1') \langle A_{i, 1}, p_1^t \rangle)^2 \le (1 + \max\{ |\langle A_{i, 1}, p_1 \rangle|, |\langle A_{i, 1}, p_1' \rangle| \})^2$. We denote $b_{i, \max} = \max\{ |\langle A_{i, 1}, p_1 \rangle|, |\langle A_{i, 1}, p_1' \rangle| \}$ as the larger of the two dot products per task, and then we can write:
\begin{align*}
    &\sqrt{\sum_{i=1}^n \|(\rho^\star(p_1') a_i + e_1) A_{i, 1}^\top \|^2} \le \sqrt{\sum_{i = 1}^n ((1 + b_{i, \max})^2 + \|A_{i, 2} \|^2) \| A_{i, 1}\|^2} \\
    &\le \sqrt{\sum_{i = 1}^n (1 + b_{i, \max} + \|A_{i, 2} \|)^2 \| A_{i, 1}\|^2}  \\
    &= \big\| (\mathbf{1} + b_{\max} + \alphacomp) \odot \alphafix \big\|.
\end{align*}

where addition is elementwise, $\odot$ is the Hadamard product, and $\alphafix, \alphacomp \in \R^n$ are constructed where $\alpha_{i, 1} = \|A_{i, 1}\|$ and $\alpha_{i, 2} = \|A_{i, 2}\|$. Putting everything together, our gradient norm bound in~\eqref{eq:m_bound} becomes
\begin{align*}
    \|\triangledown^2_{r,p_1}F(r^\star(p_1'),p_1^t)\| &\le\exp( a_{\max} \| \Delta \| ) \cdot  \bar{F}(q^\star(p_1')) \cdot \big\| (\mathbf{1} + b_{\max} + \alphacomp) \odot \alphafix \big\|.
\end{align*}
\end{proof}

\subsubsection{Theorem~\ref{thm:general_gap} (Formal)}

\begin{theorem}\label{thm:gap_lipschitz}
Define $a_{\max} = \max_i \{\|A_{i, 1} \|\}$. The performance gap between using $q^\star$ and $q^\star(\tilde{p}_{\Dfix})$ is bounded by:
\begin{align*}
F(q^\star(\tilde p_{\Dfix}))-F(q^\star) \le C \|\tilde{p}_{\Dfix}-q^\star_{\Dfix}\|,
\end{align*}
where 
\begin{align*}
    C = \bar{F}(q^\star) \exp(a_{\max} \|\tilde{p}_{\Dfix}-q^\star_{\Dfix}\|) \left( \frac{\bar{F}(q^\star(\tilde{p}_{\Dfix}))}{\mu} \cdot \|\alphafix + \alphacomp \| \cdot \kappa(\alphafix, \alphacomp)   + \|\alphafix \| \right).
\end{align*}
\end{theorem}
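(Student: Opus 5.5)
The proof assembles the lemmas of this subsection. By Lemma~\ref{lemma:conditional_standard_equivalence}, $q^\star = q^\star(p_1')$ with $p_1' = q^\star_{\Dfix}$. Writing $p_1 = \tilde p_{\Dfix}$ and $F(r,p_1) = F(\Phi_{p_1}(r))$, the performance gap becomes $F(r^\star(p_1),p_1) - F(r^\star(p_1'),p_1')$. The first step applies Lemma~\ref{lem:gap_decomp}, which splits the gap into two pieces:
\begin{align*}
\|\triangledown_r F(r^\star(p_1),p_1)\|\,\|\Delta r\| \quad\text{and}\quad \|\Delta\| \sup_{t\in[0,1]} \|\triangledown_{p_1} F(r^\star(p_1'),p_1^t)\|,
\end{align*}
where $\Delta = p_1' - p_1$ and $\Delta r = r^\star(p_1') - r^\star(p_1)$. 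Both pieces must be reduced to a multiple of $\|\Delta\|$.

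For the first piece, Lemma~\ref{lem:stability} gives $\|\Delta r\| \le (M/\mu)\|\Delta\|$. Lemma~\ref{lemma:grad_r_p_1} then bounds $M$ by
\begin{align*}
\exp(a_{\max}\|\Delta\|)\,\bar F(q^\star(p_1'))\,\|(\mathbf 1 + b_{\max} + \alphacomp)\odot \alphafix\|.
\end{align*}
Lemma~\ref{lemma:grad_r} bounds the gradient factor by $\bar F(q^\star(p_1))\,\||A_1 p_1| + \alphacomp\|$. Since $p_1$ and $p_1'$ lie on the simplex, $|\langle A_{i,1}, p_1\rangle| \le \|A_{i,1}\|\|p_1\| \le \alpha_{i,\fix}$, and the same holds for $p_1'$. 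Hence $b_{\max} \le \alphafix$ elementwise and $\||A_1p_1| + \alphacomp\| \le \|\alphafix + \alphacomp\|$. This turns the Hadamard factor into $\|(\mathbf 1 + \alphafix + \alphacomp)\odot\alphafix\| = \kappa(\alphafix,\alphacomp)$. The first piece is therefore at most
\begin{align*}
\frac{1}{\mu}\,\bar F(q^\star)\,\bar F(q^\star(\tilde p_{\Dfix}))\,\exp(a_{\max}\|\Delta\|)\,\|\alphafix+\alphacomp\|\,\kappa\,\|\Delta\|.
\end{align*}

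For the second piece, Lemma~\ref{lemma:grad_p_1} gives the bound $\bar F(q^\star)\,\|\exp(\alphafix\|\Delta\|)\odot\alphafix\|$. Since every entry satisfies $\alpha_{i,\fix} \le a_{\max}$, this is at most $\bar F(q^\star)\exp(a_{\max}\|\Delta\|)\|\alphafix\|$. Adding the two pieces and factoring out $\bar F(q^\star)\exp(a_{\max}\|\Delta\|)\|\Delta\|$ gives exactly the stated constant $C$.

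The main obstacle is the bookkeeping around which $\bar F$ appears where. Lemma~\ref{lemma:grad_r} evaluates at $q^\star(p_1) = q^\star(\tilde p_{\Dfix})$, while the other two lemmas evaluate at $q^\star(p_1') = q^\star$; this matches the product $\bar F(q^\star)\bar F(q^\star(\tilde p_{\Dfix}))$ in $C$. A second point is the $\frac1n$ normalization in Lemma~\ref{lemma:grad_r_p_1}: its proof writes $F$ where $\bar F$ is meant, so I would re-derive that constant carefully to confirm it is $\bar F$.

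A third point is that Lemma~\ref{lem:stability} and Lemma~\ref{lem:mutual_feas_foc} rely on the mutual-feasibility and strong-convexity assumptions (Assumption~\ref{ass:1}), which should be invoked explicitly. Finally, bounding $|\langle A_{i,1},p\rangle|$ by $\|A_{i,1}\|$ uses $\|p\|_2 \le 1$ on the simplex, which I would state. Theorem~\ref{thm:general_gap} then follows by taking $C_1 = C/\kappa$, which is finite because all quantities in $C$ are bounded on the compact simplex.
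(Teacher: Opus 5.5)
Your proposal is correct and follows the paper's proof essentially step for step. It combines Lemma~\ref{lem:gap_decomp} with Lemma~\ref{lem:stability}, plugs in Lemmas~\ref{lemma:grad_r}, \ref{lemma:grad_r_p_1}, and \ref{lemma:grad_p_1}, and then uses $|A_1 p_1| \preceq \alphafix$, $b_{\max} \preceq \alphafix$, and $\alpha_{i,\fix} \le a_{\max}$ to arrive at $\kappa(\alphafix,\alphacomp)$ and the stated $C$. Two small notes on your side remarks: you are right that the $F(q^\star(p_1'))$ in the proof of Lemma~\ref{lemma:grad_r_p_1} should be $\bar F(q^\star(p_1'))$. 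In your closing remark on $C_1 = C/\kappa$, the $\|\alphafix\|/\kappa$ term is finite because $\kappa \ge \|\alphafix\|$, which holds entrywise since $(1+\alpha_{i,\fix}+\alpha_{i,\comp})\alpha_{i,\fix} \ge \alpha_{i,\fix} \ge 0$, not because of compactness of the simplex.
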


\begin{proof}

Recall our simplified notation $p_1 = \tilde{p}_{\Dfix}$, $p_1' = q^\star_{\Dfix}$.
Applying Lemma~\ref{lem:stability} to Lemma~\ref{lem:gap_decomp}, we can bound the performance gap to be linear in $\| p_1 - p_1' \|$:
\begin{align*}
    F(r^\star(p_1),p_1)-F(r^\star(p_1'),p_1') \le C \|\Delta\|,
\end{align*}

where $C$ is a term that consists of three gradient norms:
\begin{align*}
        C = \frac{1}{\mu} \sup_{t\in[0,1]} \left( \|\triangledown^2_{r,p_1}F(r^\star(p_1'),p_1^t)\| \right) \,\|\triangledown_r F(r^\star(p_1),p_1)\| + \sup_{t\in[0,1]}\|\triangledown_{p_1}F(r^\star(p_1'),p_1^t)\|,
\end{align*}
and $p_1^t=t p_1'+(1-t)p_1$. We can apply the gradient norm bounds from Lemmas~\ref{lemma:grad_r}, \ref{lemma:grad_p_1}, \ref{lemma:grad_r_p_1}:

\begin{align*}
    C &\le \frac{1}{\mu} \exp( a_{\max} \| \Delta \| ) \cdot  \bar{F}(q^\star(p_1')) \cdot \big\| (\mathbf{1} + b_{\max} + \alphacomp) \odot \alphafix \big\| \cdot \bar{F}(q^\star(p_1))  \big\| | A_1 p_1 | + \alphacomp \big\| \\
    &+ \bar{F}(q^\star(p_1'))  \big\|  \exp( \alphafix \|\triangle \|) \odot \alphafix   \big\|.
\end{align*}

We simplify this bound a bit further. First, note that the elements of $|A_1 p_1|$ are $| \langle A_{i, 1}, p_1 \rangle| \le \|A_{i, 1}\| \| p_1\| \le \|A_{i, 1}\|$. Therefore, $|A_1 p_1| \preceq \alphafix$.

Second, note that $b_{i, \max} = \max\{ |\langle A_{i, 1}, p_1 \rangle|, |\langle A_{i, 1}, p_1' \rangle| \} \le \|A_{i, 1}\|$, so $b_{\max} \preceq \alphafix$ as well.

Lastly, $\big\|  \exp( \alphafix \|\triangle \|) \odot \alphafix   \big\| $ can be bounded by $\exp(a_{\max} \|\Delta\|) \cdot \big\|  \alphafix \big\| $.

Then, we can bound $C$ as
\begin{align*}
    C \le \bar{F}(q^\star) \exp(a_{\max} \| \Delta \|) \left( \frac{\bar{F}(q^\star(\tilde{p}_{\Dfix}))}{\mu} \|(\mathbf{1} + \alphafix + \alphacomp) \odot \alphafix \| \cdot \|\alphafix + \alphacomp \|  + \|\alphafix \| \right).
\end{align*}

Replacing $\|(\mathbf{1} + \alphafix + \alphacomp) \odot \alphafix \|$ with $\kappa(\alphafix, \alphacomp)$ completes our proof.

\end{proof}

\subsection{Proof for Theorem~\ref{thm:add_domains}}\label{supp:add_domains_proof}

When domains are being added and $\tilde{p}$ is the optimal mix on $\D$, $\tilde{p}$ is equal to $\tilde{p}_{\Dfix}$ and is the solution to the following optimization problem:
\begin{align}
    &\text{minimize}_{p \in \triangle^{m-1}}  \frac{1}{n} \sum_{i = 1}^m f_i(\LM(S, R, p))
    \label{eq:p_mixing_problem} \\
    &\text{s.t.} \quad  p_j \le \frac{k N_j}{R} \quad \forall j \in [m] \nonumber
\end{align}

\begin{assumption}\label{ass:2}
We make the following assumptions beyond Assumption~\ref{ass:1}.
\begin{enumerate}
    \item We assume that $F(r, p_1)$ is $\mu_1$-strongly convex in $p_1$.
    \item Let $\mathcal{S}(r)$ be the feasible set of~\eqref{eq:mixing_problem} defined by the repetition constraints on $p_1$:
    \begin{align*}
        p_j \le \frac{k N_j}{R \rho} \; \forall j \in \Dfix
    \end{align*}
    We assume \textbf{mutual feasibility} of $\tilde{p}_{\Dfix}$ and $q^\star_{\Dfix}$:
    that $\tilde{p}_{\Dfix} \in \mathcal{S}(r^\star)$, and $q^\star_{\Dfix} \in \mathcal{S}(r_0)$, where $r_0 = [1, \mathbf{0}]$ corresponds to the mix before new domains are added. That is, we assume that $\tilde{p}_j \le \frac{k N_j}{R \rho^\star}$ and $q^\star_j \le \frac{k N_j}{R}$ for $j \in \Dfix$, meaning that the existing mix cannot have active repetition constraints.
\end{enumerate}
    
\end{assumption}

First, we present a general lemma that allows us to bound any $\|p_1 - p_1'\|$.

\begin{lemma}[Argmin stability with mutual feasibility]\label{lem:argmin_stability_mutual}
Let $\mathcal{S}, \mathcal{S}' \subset \triangle^{\|D_1\| - 1}$ be convex sets. Let $h: \mathcal{S} \rightarrow \R$ be differentiable and $\mu_1$-strongly convex on $\mathcal{S}$. Let $h': \mathcal{S}' \rightarrow \R$ be a differentiable function. Define
\begin{align*}
    p_1 = \arg\min_{p_1 \in \mathcal{S}} h(p_1), \qquad p_1' = \arg\min_{p_1 \in \mathcal{S}'} h'(p_1)
\end{align*}

We assume mutual feasibility; that is, $p_1 \in \mathcal{S}'$ and $p_1' \in \mathcal{S}$. Then,
\begin{align*}
    \|p_1 - p_1'\| \le \frac{1}{\mu_1} \|\triangledown h(p_1') - \triangledown h'(p_1') \|.
\end{align*}
\end{lemma}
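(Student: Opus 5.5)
The plan follows the same template as Lemma~\ref{lem:stability}: pair the first-order optimality conditions for the two minimizers with strong convexity of $h$, then isolate the gradient mismatch $\triangledown h - \triangledown h'$ evaluated at $p_1'$. We assume $h$ is differentiable on a neighborhood of $\mathcal{S}\cup\mathcal{S}'$, so that $\triangledown h(p_1')$ makes sense; this holds in the intended application, where $h$ and $h'$ are restrictions of smooth log-linear objectives.

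First, record the two variational inequalities. Since $p_1$ minimizes the convex, differentiable $h$ over the convex set $\mathcal{S}$, we have
\begin{align*}
\langle \triangledown h(p_1), x - p_1\rangle \ge 0 \quad \forall x \in \mathcal{S}.
\end{align*}
By mutual feasibility $p_1' \in \mathcal{S}$, so we may take $x = p_1'$. Similarly, $p_1'$ minimizes $h'$ over $\mathcal{S}'$; this is a first-order necessary condition, so no convexity of $h'$ is needed. Taking $x = p_1 \in \mathcal{S}'$ gives
\begin{align*}
\langle \triangledown h'(p_1'), p_1 - p_1'\rangle \ge 0.
\end{align*}

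Second, use $\mu_1$-strong convexity of $h$. Write the strong-convexity inequality at both points and add the two, as in the proof of Lemma~\ref{lem:stability}. This yields gradient monotonicity:
\begin{align*}
\mu_1\|p_1 - p_1'\|^2 \le \langle \triangledown h(p_1) - \triangledown h(p_1'), p_1 - p_1'\rangle.
\end{align*}
Split the right-hand side by adding and subtracting $\triangledown h'(p_1')$:
\begin{align*}
\langle \triangledown h(p_1), p_1 - p_1'\rangle + \langle \triangledown h'(p_1') - \triangledown h(p_1'), p_1 - p_1'\rangle - \langle \triangledown h'(p_1'), p_1 - p_1'\rangle.
\end{align*}
The first term is $\le 0$ by the first variational inequality, and the third term is $\le 0$ by the second. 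Only the middle term remains. Bounding it by Cauchy--Schwarz gives $\mu_1\|p_1-p_1'\|^2 \le \|\triangledown h(p_1') - \triangledown h'(p_1')\|\,\|p_1-p_1'\|$. If $p_1 = p_1'$ the claim is trivial; otherwise divide by $\|p_1-p_1'\|$ and by $\mu_1$ to conclude.

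The main obstacle is the domain issue: strong convexity is stated only on $\mathcal{S}$, while $\triangledown h$ must be evaluated at $p_1'$. Mutual feasibility places $p_1'$ in $\mathcal{S}$, so both the strong-convexity step and the gradient evaluation are legitimate, and the segment between the two points lies in $\mathcal{S}$ by convexity. One subtlety to check is that the first-order condition for $h'$ is used only at $p_1'$, which lies in $\mathcal{S}'$, so $h'$ needs only to be differentiable there and need not be convex.
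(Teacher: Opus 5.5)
Your proof is correct and takes essentially the same route as the paper: you use mutual feasibility to instantiate both first-order conditions, and you add the two strong-convexity inequalities to get $\mu_1\|p_1-p_1'\|^2 \le \langle \triangledown h(p_1)-\triangledown h(p_1'), p_1-p_1'\rangle$. You then isolate the gradient mismatch at $p_1'$ and apply Cauchy--Schwarz. The only difference is bookkeeping: the paper sums the two variational inequalities before inserting $\triangledown h(p_1')$, whereas you split the monotonicity term directly. Your remarks on where convexity and differentiability are actually needed (none for $h'$, and $\triangledown h$ evaluated at $p_1'\in\mathcal{S}$) are accurate.
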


\begin{proof}
Applying our mutual feasibility assumption, the first order condition at $p_1$ for $h$ and $p_1'$ for $h$ yields
\begin{align*}
    &\langle \triangledown h(p_1), p_1' - p_1) \ge 0 \\
    &\langle \triangledown h'(p_1'), p_1 - p_1') \ge 0
\end{align*}

Adding these together and adding and subtracting $\triangledown h(p_1')$, we have
\begin{align*}
    &\langle \triangledown h(p_1) - \triangledown h'(p_1'), p_1' - p_1 \rangle \ge 0 \\
    \Rightarrow &\langle \triangledown h(p_1) - \triangledown h(p_1') + \triangledown h(p_1') - \triangledown h'(p_1'), p_1' - p_1 \rangle \ge 0 \\
    \Rightarrow &\langle \triangledown h(p_1) - \triangledown h(p_1'), p_1 - p_1' \rangle \le \langle \triangledown h(p_1') - \triangledown h'(p_1'), p_1' - p_1 \rangle
\end{align*}

Next, due to $\mu_1$ strong convexity of $h$, we have
\begin{align*}
    h(p_1) &\ge h(p_1') + \langle \triangledown h(p_1'), p_1 - p_1' \rangle + \frac{\mu_1}{2} \| p_1 - p_1'\|^2 \\
    h(p_1') &\ge h(p_1) + \langle \triangledown h(p_1), p_1' - p_1 \rangle + \frac{\mu_1}{2} \| p_1 - p_1'\|^2 \\
\end{align*}

Adding these together and applying the bound from the first order conditions, we get
\begin{align*}
    \mu \| p_1 - p_1'\|^2 &\le \langle \triangledown h(p_1) - \triangledown h(p_1'), p_1 - p_1' \rangle  \\
    &\le \langle \triangledown h(p_1') - \triangledown h'(p_1'), p_1' - p_1 \rangle \\
    &\le \|\triangledown h(p_1') - \triangledown h'(p_1') \| \cdot \| p_1' - p_1\|
\end{align*}

Dividing both sides by $\mu \|p_1' - p_1\|$ completes our proof.
\end{proof}

\begin{lemma}[When $\tilde{p}_{\Dfix} = q^\star_{\Dfix}$ for adding]\label{lem:add_exactness}
Consider the \Add update, where $\D_2=\emptyset$ and $\D'=[\D_1\D_2']$ with $\D_2'\neq\emptyset$.
Let $\tilde p\in\triangle^{m-1}$ be an optimal solution to the mixing problem on the pre-update domain set $\D=\D_1$ (equivalently, $\tilde p_{\D_1}=\tilde p$), and let $q^\star$ be the optimal solution to the post-update mixing problem on $\D'$.
If $\rho^\star=1$, then
\[
\tilde p_{\Dfix} = q^\star_{\Dfix}.
\]
\end{lemma}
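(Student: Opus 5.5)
The plan is a direct argument: when $\rho^\star=1$, the post-update optimum $q^\star$ puts all its mass on $\D_1$. We then show that its restriction is a feasible and optimal point of the pre-update problem~\eqref{eq:p_mixing_problem}, and conclude by uniqueness. Concretely, $\rho^\star=1$ forces $q^\star = [q^\star_{\Dfix}, \mathbf{0}]$, since the $\D_2'$ block carries weight $1-\rho^\star = 0$. In the \Add setting the unaffected domains are unchanged, so $N_j' = N_j$ for $j\in\D_1=\Dfix$. Hence the constraints $q^\star_j \le kN_j'/R$ of~\eqref{eq:mixing_problem} become exactly the constraints $p_j\le kN_j/R$ of~\eqref{eq:p_mixing_problem}, and $q^\star_{\Dfix}$ is feasible for the pre-update problem.

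Next I would relate the two objectives. Under the log-linear model (Assumption~\ref{ass:1}), $F$ evaluated at a zero-padded mix $[p,\mathbf{0}]$ satisfies
\[
F([p,\mathbf{0}]) = \frac{1}{n}\sum_{i=1}^n \big(c_i + \exp(A_{i,\fix}^\top p)\big),
\]
which is the pre-update objective on $\D$. This holds because the pre-update performance of a mix on $\D_1$ is the same as training on $\D'$ with zero weight on the added domains: the trained model is literally identical.

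I then compare optima in both directions. Since $\tilde p$ is feasible for~\eqref{eq:p_mixing_problem}, its padding $[\tilde p,\mathbf{0}]$ is feasible for~\eqref{eq:mixing_problem}, because the new domains get zero weight and satisfy their caps trivially. Optimality of $q^\star$ therefore gives $F(q^\star)\le F([\tilde p,\mathbf{0}]) = F_{\D}(\tilde p)$. Conversely, $q^\star_{\Dfix}$ is feasible for the pre-update problem, so optimality of $\tilde p$ gives $F_{\D}(\tilde p)\le F_{\D}(q^\star_{\Dfix}) = F(q^\star)$. Hence $q^\star_{\Dfix}$ attains the pre-update optimum.

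The main obstacle is the last step: concluding equality of minimizers rather than only of optimal values. This needs uniqueness of the pre-update minimizer. I would obtain it from the strong convexity assumed in Assumption~\ref{ass:2}, namely $F(r,p_1)$ being $\mu_1$-strongly convex in $p_1$, evaluated at $r=r_0=[1,\mathbf{0}]$, which is exactly the pre-update objective. Strong convexity over the convex feasible set then forces the two minimizers to coincide, giving $\tilde p = \tilde p_{\Dfix} = q^\star_{\Dfix}$. If one wanted to avoid that assumption, the fallback is to interpret the statement as saying that $q^\star_{\Dfix}$ is \emph{an} optimal pre-update mix, so that $\tilde p$ may be taken to equal it.
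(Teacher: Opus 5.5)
Your proposal is correct and follows the same route as the paper. Both arguments identify the pre-update problem on $\D=\D_1$ with the post-update problem restricted to zero weight on $\D_2'$, observe that $\rho^\star=1$ puts $q^\star$ in that restricted feasible set so the optimal values agree, and conclude that the minimizers agree on $\D_1$. The one difference is that you make the uniqueness step explicit via the $\mu_1$-strong convexity of $F(r_0,\cdot)$ from Assumption~\ref{ass:2}, whereas the paper's proof uses uniqueness implicitly when it asserts that the optimizers coincide.
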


\begin{proof}
Since $\D_2=\emptyset$, the pre-update problem over $\D=\D_1$ is equivalent to the post-update problem restricted to mixtures that assign zero mass to all added domains:
\begin{align*}
    &\text{minimize}_{q\in\triangle^{m'-1}} \; \frac{1}{n} \sum_{i = 1}^n f_i(\LM(S, R, q)) \\
    &\text{s.t.}\quad q_j \le \frac{kN_j'}{R}\quad \forall j\in\D_1\cup\D_2' \\
    &\qquad\;\; q_j = 0 \quad \forall j\in\D_2'.
\end{align*}
If $\rho^\star=1$, then $q_j^\star=0$ for all $j\in\D_2'$. This means that the additional constraints $q_j=0$ on the post-update problem do not change the optimal value, since the restricted feasible set still contains the optimal solution of $q_j^\star = 0$. Therefore, if $\rho^\star = 1$, the restricted problem above and the unrestricted post-update problem have the same optimum, and their optimizers coincide on $\D_1$; in particular, $q^\star_{\D_1}=\tilde p_{\D_1}$. Replacing $\D_1$ with $\Dfix$ completes our proof.
\end{proof}

\begin{lemma}\label{lemma:add_bound}
Assume mutual feasibility between $p_1$ and $p_1'$. Then, when the domain update involves adding domains,
\begin{align*}
     \| p_1 - p_1' \| \le \frac{2(1 - \rho^\star)}{\mu_1} \| \sup_{t \in [0, 1]}  \| \triangledown_{p_1, r}^2 F(r^t, p_1') \| \qquad r^t = t r_0+(1 - t)r^\star
\end{align*}

\end{lemma}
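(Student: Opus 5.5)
We will obtain the bound by applying Lemma~\ref{lem:argmin_stability_mutual} with $h(\cdot) := F(r_0, \cdot)$ and $h'(\cdot) := F(r^\star, \cdot)$, both viewed as functions of $p_1$.

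\textbf{Step 1: identify $p_1$ and $p_1'$ as minimizers of these two functions.} Since $r_0 = [1,\mathbf{0}]$, the expansion $\Phi_{p_1}(r_0)$ places all mass on $\D_1 = \Dfix$. Hence $F(r_0, p_1)$ is exactly the pre-update objective of~\eqref{eq:p_mixing_problem}. Its feasible set $\mathcal{S}(r_0)$ is cut out by the constraints $p_j \le kN_j/R$. So $p_1 = \tilde p_{\Dfix}$ minimizes $h$ over $\mathcal{S}(r_0)$.

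For $p_1'$, we argue as in Lemma~\ref{lemma:conditional_standard_equivalence}. Fixing $r = r^\star$ in the post-update problem~\eqref{eq:mixing_problem} leaves $q^\star$ optimal. Therefore $p_1' = q^\star_{\Dfix}$ minimizes $h'$ over $\mathcal{S}(r^\star)$, where that set is defined by $p_j \le kN_j/(R\rho^\star)$.

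\textbf{Step 2: check the hypotheses of Lemma~\ref{lem:argmin_stability_mutual}.} $h$ is $\mu_1$-strongly convex by Assumption~\ref{ass:2}.1. Mutual feasibility is exactly Assumption~\ref{ass:2}.2. Both feasible sets are convex, being boxes intersected with the simplex. The lemma then gives
\[
\|p_1 - p_1'\| \le \frac{1}{\mu_1}\|\nabla_{p_1}F(r_0,p_1') - \nabla_{p_1}F(r^\star,p_1')\|.
\]

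\textbf{Step 3: bound the gradient difference by a mean value argument in $r$.} This mirrors Lemma~\ref{lem:mvt_grad_r}, with the roles of $r$ and $p_1$ swapped. Set $r^t = t r_0 + (1-t) r^\star$ and integrate along this segment:
\[
\|\nabla_{p_1}F(r_0,p_1') - \nabla_{p_1}F(r^\star,p_1')\| \le \sup_{t\in[0,1]}\|\nabla^2_{p_1,r}F(r^t,p_1')\|\,\|r_0 - r^\star\|.
\]

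\textbf{Step 4: compute $\|r_0 - r^\star\|$.} Write $r^\star = [\rho^\star, (1-\rho^\star)p_2^\star]$. Then
\[
r_0 - r^\star = [1-\rho^\star,\; -(1-\rho^\star)p_2^\star].
\]
Its norm is at most $(1-\rho^\star)(1 + \|p_2^\star\|)$. Since $p_2^\star$ lies in the simplex, $\|p_2^\star\| \le 1$, so $\|r_0 - r^\star\| \le 2(1-\rho^\star)$. This is the source of the factor $2$ in the statement, and combining with Step 3 gives the claimed bound.

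The main obstacle is Step 1, specifically the identification of $p_1'$. Two issues need care. First, the constraint set $\mathcal{S}(r)$ depends on $r$ through $\rho$, so $h$ and $h'$ are minimized over different sets. This is precisely why the lemma must be the mutual-feasibility version rather than a common-domain one. Second, we need $h$ to be differentiable at $p_1'$ and strongly convex on the relevant set. I would handle this by extending $F(r,\cdot)$ to a neighborhood of the simplex, which is unproblematic for the log-linear form.
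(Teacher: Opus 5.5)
Your proposal is correct and matches the paper's proof step for step. The paper likewise applies Lemma~\ref{lem:argmin_stability_mutual} with $h = F(r_0,\cdot)$ and $h' = F(r^\star,\cdot)$, bounds the gradient difference by integrating $\nabla^2_{p_1,r}F(r^t,p_1')$ along $r^t = t r_0 + (1-t) r^\star$, and uses $\|r_0 - r^\star\| = (1-\rho^\star)\|[1, -p_2^\star]\| \le 2(1-\rho^\star)$. Your added justification that $q^\star_{\Dfix}$ minimizes $F(r^\star,\cdot)$ over $\mathcal{S}(r^\star)$, obtained by restricting the post-update problem to the slice $r = r^\star$, fills in a step the paper only asserts.
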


\begin{proof}
    Define $r_0 = [1, \mathbf{0}]$ where $\rho_0 = 1$. Then, $p_1 = \arg \min_{p_1 \in \mathcal{S}(r_0)} F(r_0, p_1)$. We can write $p_1'$ similarly as $p_1' = \arg \min_{p_1 \in \mathcal{S}(r^\star)} F(r^\star, p_1)$. 

    If we assume mutual feasibility and define $h(p_1) = F(r_0, p_1), h'(p_1) = F(r^\star, p_1)$, then applying Lemma~\ref{lem:argmin_stability_mutual} gives us
    \begin{align*}
        \| p_1 - p_1' \| \le \frac{1}{\mu_1} \|\triangledown_{p_1} F(r_0, p_1') - F(r^\star, p_1') \|.
    \end{align*}

    Define $r^t = t r_0 + (1 - t) r^\star$, where $t \in [0, 1]$. By the fundamental theorem of calculus, it holds that 
    \begin{align*}
        \triangledown_{p_1} F(r_0, p_1') - \triangledown_{p_1} F(r^\star, p_1') = \int_0^1 \triangledown_{p_1, r}^2 F(r^t, p_1') (r_0 - r^\star) dt.
    \end{align*}

    Therefore, 
    \begin{align*}
        \| p_1 - p_1' \| \le \frac{1}{\mu_1} \|r_0 - r^\star \| \sup_{t \in [0, 1]}  \| \triangledown_{p_1, r}^2 F(r^t, p_1') \|.
    \end{align*}

    Lastly, we bound $\|r_0 - r^\star\|$:
    \begin{align*}
        \|r_0 - r^\star\| = \|[1, \mathbf{0}] - [\rho^\star, (1 - \rho^\star) q_{\D_2'}^\star] \| = \| (1 - \rho^\star) [1, -q_{\D_2'}^\star]\| \le 2(1 - \rho^\star).
    \end{align*}

    This completes our proof.
\end{proof}

\begin{lemma}\label{lemma:grad_r_p1_add}
    Define $r^t = t r_0 + (1 - t) r^\star(p_1')$, where $r_0 = [1, \mathbf{0}]$.
    The gradient norm $\|\triangledown^2_{r,p_1}F(r^t,p_1')\|$ can be bounded by 
    \begin{align*}
       \|\triangledown^2_{r,p_1}F(r^t,p_1')\| &\le \bar{F}(q^\star) \exp( c_{\max}(1 - \rho^\star)) \|(1 + \alphafix + \alphacomp) \odot \alphafix \|
    \end{align*}
    
    where $c_{\max} = max_i \{\|A_{i, 1}\| + \|A_{i, 2}\| \}$ and $\odot$ is the Hadamard product. 
\end{lemma}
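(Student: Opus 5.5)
The plan is to reuse the mixed-partial computation already carried out in the proof of Lemma~\ref{lemma:grad_r_p_1}, but now to evaluate it at the points $(r^t, p_1')$ instead of $(r^\star(p_1'), p_1^t)$. That proof gives the closed form
\begin{align*}
\triangledown^2_{r,p_1}F(r,p_1) = \frac{1}{n}\sum_{i=1}^n \exp(s_i)\,(\rho a_i + e_1) A_{i,1}^\top, \qquad s_i = A_{i,1}^\top \rho p_1 + A_{i,2}^\top b_2,
\end{align*}
with $a_i = [\langle A_{i,1},p_1\rangle; A_{i,2}]$ and $b_2 = (1-\rho)p_2$. We write $r^t = [\rho^t, b_2^t]$, where $\rho^t = t + (1-t)\rho^\star$ and $b_2^t = (1-t)(1-\rho^\star)p_2^\star$. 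Then we apply Cauchy--Schwarz exactly as in \eqref{eq:m_bound}. This splits the norm into an exponential factor $\frac1n\sqrt{\sum_i \exp(2 s_i^t)}$ and a coefficient factor $\sqrt{\sum_i \|\rho^t a_i + e_1\|^2\|A_{i,1}\|^2}$.

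For the coefficient factor, we use $\rho^t\le 1$ and $|\langle A_{i,1},p_1'\rangle|\le \|A_{i,1}\|$, since $\|p_1'\|\le 1$. This gives $\|\rho^t a_i + e_1\|^2 \le (1+\alpha_{i,\fix})^2 + \alpha_{i,\comp}^2 \le (1+\alpha_{i,\fix}+\alpha_{i,\comp})^2$. Hence the factor is at most $\|(\mathbf 1+\alphafix+\alphacomp)\odot\alphafix\| = \kappa$, following the last step of Lemma~\ref{lemma:grad_r_p_1} verbatim.

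The exponential factor is the main step. We compare $s_i^t$ with $s_i^\star := A_{i,1}^\top\rho^\star p_1' + A_{i,2}^\top(1-\rho^\star)p_2^\star$, which is the exponent at $q^\star$ (recall $q^\star = q^\star(p_1')$ by Lemma~\ref{lemma:conditional_standard_equivalence}). The difference is
\begin{align*}
s_i^t - s_i^\star = t(1-\rho^\star)\big(A_{i,1}^\top p_1' - A_{i,2}^\top p_2^\star\big),
\end{align*}
because $\rho^t-\rho^\star = t(1-\rho^\star)$ and $b_2^t - (1-\rho^\star)p_2^\star = -t(1-\rho^\star)p_2^\star$. Since $\|p_1'\|,\|p_2^\star\|\le 1$ and $t\le 1$, we get $|s_i^t - s_i^\star| \le (1-\rho^\star)(\|A_{i,1}\|+\|A_{i,2}\|) \le c_{\max}(1-\rho^\star)$.

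We then factor $\exp(2c_{\max}(1-\rho^\star))$ out of the sum. The remaining sum $\sum_i \exp(2s_i^\star)$ is bounded by $(\sum_i \exp(s_i^\star))^2 = n^2\bar F(q^\star)^2$, since every term is positive. Taking the square root and dividing by $n$ gives the factor $\exp(c_{\max}(1-\rho^\star))\,\bar F(q^\star)$.

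Multiplying the exponential factor and the coefficient factor yields the stated bound. The only delicate points are getting the sign and size of the shift $s_i^t - s_i^\star$ right, and making sure the constant $\bar F(q^\star)$, rather than $F$, appears in the final bound. That second point follows because only the exponential part of the log-linear model enters the derivatives.
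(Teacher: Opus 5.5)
Your proposal is correct and matches the paper's proof essentially step for step. Both use the same closed-form mixed Hessian from Lemma~\ref{lemma:grad_r_p_1} and the same Cauchy--Schwarz split, and both control the exponent shift $s_i^t - s_i^\star = t(1-\rho^\star)(A_{i,1}^\top p_1' - A_{i,2}^\top p_2^\star)$ by $c_{\max}(1-\rho^\star)$ before bounding the coefficient factor as in Lemma~\ref{lemma:grad_r_p_1}. The only cosmetic difference is that you bound $|\langle A_{i,1},p_1'\rangle|\le\|A_{i,1}\|$ directly instead of passing through $b_{\max}$.
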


\begin{proof}

First, we can write $r^t$ as
\begin{align*}
    r^t &= t[1, \mathbf{0}] + (1 - t) [\rho^\star(p_1'), (1 - \rho^\star(p_1')) p_2^\star(p_1')] \\
    &= [t + (1 - t) \rho^\star(p_1'), (1 - t) (1 - \rho^\star(p_1')) p_2^\star(p_1')] \\
    &= [t + (1 - t) \rho^\star(p_1'), (1 - (t + (1 - t) \rho^\star(p_1'))) p_2^\star(p_1')].
\end{align*}

Therefore, we can express $r^t = [\rho^t, (1 - \rho^t) p_2^t]$ where $\rho^t := t + (1 - t) \rho^\star(p_1')$, and $p_2^t = p_2^\star(p_1')$. 

Applying Lemma~\ref{lemma:grad_r_p_1}, we can write the gradient as
\begin{align*}
    \triangledown^2_{r,p_1} F(r^t,p_1') &= \frac{1}{n}\sum_{i = 1}^n \exp(A_{i, 1}^\top \rho^t p_1' + A_{i, 2}^\top (1 - \rho^t) p_2^t) (\rho^t a_i + e_1) A_{i, 1}^\top,
\end{align*}

where $a_i = \SmallMatrix{\langle A_{i, 1}, p_1' \rangle \\ A_{i, 2}}  \in \R^{|\D_2'| + 1}$ and $e_1$ is the standard basis vector $\SmallMatrix{1 \\ 0 \\ \vdots \\ 0} \in \R^{|\D_2'| + 1}$. Then, using Cauchy-Schwarz inequality, the gradient norm can be bounded as
\begin{align}
&\|\triangledown^2_{r,p_1}F(r^t,p_1')\| \le \frac{1}{n}\sum_{i = 1}^n \exp(A_{i, 1}^\top \rho^t p_1' + A_{i, 2}^\top (1 - \rho^t) p_2^t) \|(\rho^t a_i + e_1) A_{i, 1}^\top\| \label{eq:add_grad_bound} \\
&\le \frac{1}{n}\sqrt{\sum_{i = 1}^n \exp(2(A_{i, 1}^\top \rho^t p_1' + A_{i, 2}^\top (1 - \rho^t) p_2^t))} \sqrt{\sum_{i = 1}^n \|(\rho^t a_i + e_1) A_{i, 1}^\top\|^2}. \nonumber 
\end{align}

We bound the first summation of~\eqref{eq:add_grad_bound} by adding and subtracting $\rho^\star(p_1')$:
\begin{align*}
\sum_{i = 1}^n &\exp(2(A_{i, 1}^\top \rho^t p_1' + A_{i, 2}^\top (1 - \rho^t) p_2^t)) \\
&= \sum_{i = 1}^n \exp(2(A_{i, 1}^\top (\rho^t + \rho^\star(p_1') - \rho^\star(p_1')) p_1' + A_{i, 2}^\top ((1 - \rho^t) + (1 - \rho^\star(p_1')) - (1 - \rho^\star(p_1'))) p_2^t)) \\
&= \sum_{i = 1}^n \exp(2(A_{i, 1}^\top \rho^\star(p_1') p_1' + A_{i, 2}^\top (1 - \rho^\star(p_1')) p_2^t)) \exp(2(A_{i, 1}^\top (\rho^t - \rho^\star(p_1')) p_1' + A_{i,2}^\top (\rho^\star(p_1') - \rho^t) p_2^\star(p_1') )) \\
&= \sum_{i = 1}^n \exp(2(A_{i, 1}^\top \rho^\star(p_1') p_1' + A_{i, 2}^\top (1 - \rho^\star(p_1')) p_2^t)) \exp(2t (1 - \rho^\star(p_1'))(A_{i, 1}^\top p_1' - A_{i, 2}^\top p_2^\star(p_1'))).
\end{align*}

We can then replace some terms in~\eqref{eq:add_grad_bound} with $\bar{F}(r^\star(p_1'), p_1')$ 
\begin{align*}
    \frac{1}{n}&\sqrt{\sum_{i = 1}^n \exp(2(A_{i, 1}^\top \rho^t p_1' + A_{i, 2}^\top (1 - \rho^t) p_2^t))} \\
    &\le \frac{1}{n} \sum_{i = 1}^n \exp(A_{i, 1}^\top \rho^\star(p_1') p_1' + A_{i, 2}^\top (1 - \rho^\star(p_1')) p_2^t) \cdot \max_i \{\exp(t (1 - \rho^\star(p_1'))(A_{i, 1}^\top p_1' - A_{i, 2}^\top p_2^\star(p_1'))) \}\\
    &= \bar{F}(r^\star(p_1'), p_1') \max_i \{\exp(t (1 - \rho^\star(p_1'))(A_{i, 1}^\top p_1' - A_{i, 2}^\top p_2^\star(p_1'))) \}.
\end{align*}

We can bound $A_{i, 1}^\top p_1' - A_{i, 2}^\top p_2^\star(p_1') \le \|A_{i, 1}\| + \|A_{i, 2}\|$ using Cauchy-Schwarz. Let $c_{\max} = \max_i \{ \|A_{i, 1}\| + \|A_{i, 2}\|\}$.
Using the fact that $t\in[0, 1]$, the first part of~\eqref{eq:add_grad_bound} can be bounded by
\begin{align*}
    \frac{1}{n}&\sqrt{\sum_{i = 1}^n \exp(2(A_{i, 1}^\top \rho^t p_1' + A_{i, 2}^\top (1 - \rho^t) p_2^t))} \le \bar{F}(r^\star(p_1'), p_1') \exp( c_{\max}(1 - \rho^\star(p_1'))).
\end{align*}

The second summation of~\eqref{eq:add_grad_bound} can be bounded in the exact same way as the second summation of~\eqref{eq:m_bound} in Lemma~\ref{lemma:grad_r_p_1}. Therefore, we have that
\begin{align*}
    \sqrt{\sum_{i = 1}^n \|(\rho^t a_i + e_1) A_{i, 1}^\top\|^2} \le  \|(1 + \alphafix + \alphacomp) \odot \alphafix \|.
\end{align*}

This completes our proof.
\end{proof}

\subsubsection{Theorem~\ref{thm:add_domains} (Formal)}

\begin{theorem}
    Define $\tilde{p}$ is the solution to~\eqref{eq:mixing_problem} on $\D$ and suppose that new domains are added. Let $c_{\max} = \max_i \{\|A_{i, 1}\| + \|A_{i, 2}\|\}$. Then, the difference between $\tilde{p}_{\Dfix}$ and $q^\star_{\Dfix}$ is bounded by
    \begin{align*}
        \|\tilde{p}_{\Dfix} - q^\star_{\Dfix}\| \le \frac{2\bar{F}(q^\star) \exp( c_{\max}(1 - \rho^\star))}{\mu_1}   \cdot \kappa(\alphafix, \alphacomp) \cdot (1 - \rho^\star).
    \end{align*}
\end{theorem}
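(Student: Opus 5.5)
The proof chains Lemma~\ref{lemma:add_bound} with Lemma~\ref{lemma:grad_r_p1_add}. Write $p_1=\tilde p_{\Dfix}$ and $p_1'=q^\star_{\Dfix}$. Under an \Add update, $\D_2=\emptyset$ and $\Dfix=\D_1=\D$, so $\tilde p=\tilde p_{\Dfix}$.

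\textbf{Step 1: both mixes are minimizers of the same function of $p_1$.} We set $r_0=[1,\mathbf{0}]$, which corresponds to putting no mass on the added domains.
\begin{itemize}
\item Because $\tilde p$ is optimal on $\D$, we have $p_1=\arg\min_{p_1\in\mathcal S(r_0)}F(r_0,p_1)$.
\item Because $q^\star=\Phi_{p_1'}(r^\star)$ is optimal on $\D'$, fixing $r=r^\star$ gives $p_1'=\arg\min_{p_1\in\mathcal S(r^\star)}F(r^\star,p_1)$.
\end{itemize}
The second claim holds because any deviation in $p_1$ at fixed $r^\star$ that respects the repetition constraints is feasible for~\eqref{eq:mixing_problem}. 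Lemma~\ref{lem:add_exactness} serves only as a sanity check here: when $\rho^\star=1$ the two problems coincide and the bound is zero.

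\textbf{Step 2: apply the stability lemmas.} We take $h=F(r_0,\cdot)$ and $h'=F(r^\star,\cdot)$. The function $h$ is $\mu_1$-strongly convex by Assumption~\ref{ass:2}, and mutual feasibility is exactly item 2 of that assumption. Lemma~\ref{lem:argmin_stability_mutual}, in the form packaged as Lemma~\ref{lemma:add_bound}, then gives
\[
\|p_1-p_1'\|\le\frac{2(1-\rho^\star)}{\mu_1}\sup_{t\in[0,1]}\|\triangledown^2_{p_1,r}F(r^t,p_1')\|,\qquad r^t=tr_0+(1-t)r^\star .
\]
This step uses the fundamental theorem of calculus along the segment from $r^\star$ to $r_0$, together with the estimate $\|r_0-r^\star\|\le 2(1-\rho^\star)$.

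\textbf{Step 3: bound the cross-derivative.} The mixed Hessian $\triangledown^2_{p_1,r}$ is the transpose of $\triangledown^2_{r,p_1}$, so the two have the same operator norm. Lemma~\ref{lemma:grad_r_p1_add} bounds this norm uniformly in $t$ by
\[
\bar F(q^\star)\exp\bigl(c_{\max}(1-\rho^\star)\bigr)\,\|(\mathbf 1+\alphafix+\alphacomp)\odot\alphafix\| .
\]
By definition, $\|(\mathbf 1+\alphafix+\alphacomp)\odot\alphafix\|=\kappa(\alphafix,\alphacomp)$. Here we use $\bar F(r^\star(p_1'),p_1')=\bar F(q^\star)$, which follows from Lemma~\ref{lemma:conditional_standard_equivalence}. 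Substituting this bound into Step 2 yields the stated inequality.

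\textbf{Main obstacle.} The delicate point is the $b_{\max}$-type term inside Lemma~\ref{lemma:grad_r_p1_add}. Along the path $r^t$, the factor $|1+\rho^t\langle A_{i,1},p_1'\rangle|$ must be bounded by $1+\|A_{i,1}\|$, which uses $\rho^t\le1$ and $\|p_1'\|\le1$. The exponential factor must also be controlled uniformly in $t$. If either bound needs justifying, I would redo the Cauchy–Schwarz split from Lemma~\ref{lemma:grad_r_p_1} explicitly, with $p_1^t$ replaced by the fixed point $p_1'$ and $\rho^\star$ replaced by $\rho^t$.
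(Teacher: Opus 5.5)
Your proposal is correct and follows the paper's proof. The paper's proof is exactly the chain Lemma~\ref{lemma:add_bound} followed by Lemma~\ref{lemma:grad_r_p1_add}, with $\|(\mathbf{1}+\alphafix+\alphacomp)\odot\alphafix\|$ identified as $\kappa(\alphafix,\alphacomp)$. Your Steps 1--2 unpack the argument inside Lemma~\ref{lemma:add_bound}, and your remarks on the transpose of the mixed Hessian and on $\bar F(r^\star(p_1'),p_1')=\bar F(q^\star)$ make explicit details the paper leaves implicit.
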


\begin{proof}
    Applying Lemma~\ref{lemma:grad_r_p1_add} and Lemma~\ref{lemma:add_bound}, we have
    \begin{align*}
        \| p_1 - p_1' \| &\le \frac{2(1 - \rho^\star)}{\mu_1} \bar{F}(q^\star) \exp( c_{\max}(1 - \rho^\star)) \|(1 + \alphafix + \alphacomp) \odot \alphafix \| \\
        &= \frac{2\bar{F}(q^\star) \exp( c_{\max}(1 - \rho^\star))}{\mu_1}   \cdot \kappa(\alphafix, \alphacomp) \cdot (1 - \rho^\star).
    \end{align*}
\end{proof}

\subsection{Additional Theoretical Results}

\subsubsection{Removing domains}

Our results for are symmetric to the addition update, but we state the full derivation for completeness.

\begin{lemma}[When $\tilde{p}_{\Dfix} = q^\star_{\Dfix}$ for removing]\label{lem:rem_exactness}
Consider the \emph{removing} update, where $\D_2\neq\emptyset$ and $\D'=\D_1$ (equivalently, $\D_2'=\emptyset$).
Let $\tilde p\in\triangle^{m-1}$ be an optimal solution to the mixing problem on the pre-update domain set $\D=[\D_1,\D_2]$, and let $q^\star$ be the optimal solution to the post-update mixing problem on $\D'=\D_1$ (equivalently, $q^\star_{\D_1}=q^\star$).
If $\pi^\star=1$, then
\[
\tilde p_{\Dfix} = q^\star_{\Dfix}.
\]
\end{lemma}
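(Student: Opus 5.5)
The plan mirrors the proof of Lemma~\ref{lem:add_exactness}, with the roles of the pre- and post-update problems swapped. In the \Remove setting, $\Dfix=\D_1$ and the post-update domain set is $\D'=\D_1$. So $\tilde p_{\Dfix}$ is the renormalization of $\tilde p$ onto $\D_1$, and $q^\star_{\Dfix}=q^\star$ itself. Write $\tilde p=[\pi^\star \tilde p_{\D_1},(1-\pi^\star)\tilde p_{\D_2}]$, where $\pi^\star$ is the total weight the pre-update optimum places on $\D_1$. The hypothesis $\pi^\star=1$ then says that $\tilde p$ puts zero mass on every removed domain, and hence $\tilde p_{\D_1}$ is just $\tilde p$ restricted to $\D_1$, with no rescaling.

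The first step is to observe that the post-update problem~\eqref{eq:mixing_problem} on $\D'=\D_1$ is equivalent to the pre-update problem on $\D=[\D_1,\D_2]$ with the extra constraints $p_j=0$ for all $j\in\D_2$. Under these constraints the simplex $\triangle^{m-1}$ restricted to $p_{\D_2}=0$ is identified with $\triangle^{m'-1}$. The objective $\frac1n\sum_i f_i(\LM(S,R,\cdot))$ agrees on this face, since training with zero weight on $\D_2$ is the same as training on $\D_1$ alone. The repetition constraints on $\D_1$ coincide as well, because $N_j'=N_j$ for unaffected domains. The constraints on $\D_2$ become $0\le kN_j/R$, which are trivially satisfied.

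The second step uses $\pi^\star=1$. The pre-update optimizer $\tilde p$ lies on the face $p_{\D_2}=0$, so adding these equality constraints does not change the optimal value of the pre-update problem. Therefore $\tilde p$, viewed as a point of $\triangle^{m'-1}$, attains the optimal value of the post-update problem. Invoking uniqueness of the optimizer, which follows from the strong convexity in Assumption~\ref{ass:1} (or, as in Lemma~\ref{lem:add_exactness}, identifying optimizers of equivalent problems), gives $q^\star=\tilde p_{\D_1}$. Since $\Dfix=\D_1$, this is $\tilde p_{\Dfix}=q^\star_{\Dfix}$.

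The main subtlety is the direction of the restriction. In Lemma~\ref{lem:add_exactness} the smaller problem was the pre-update one, whereas here the post-update problem is the restricted one. The argument therefore needs the optimizer of the larger (pre-update) problem to lie in the restricted face, which is exactly what $\pi^\star=1$ provides. I would also check that the objective really depends only on the nonzero coordinates, so that the identification of the face with $\triangle^{m'-1}$ is legitimate. Under the log-linear model this is immediate: $A_i^\top p$ ignores the zero entries.
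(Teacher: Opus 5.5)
Your proposal is correct and is essentially the paper's proof. Both realize the post-update problem on $\D'=\D_1$ as the pre-update problem restricted to the face $p_{\D_2}=0$, note that $\pi^\star=1$ puts $\tilde p$ on that face so the restriction does not change the optimum, and conclude that the optimizers agree on $\D_1$.

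One small correction concerns where uniqueness comes from. It comes from $q^\star$ being \emph{the} optimal solution of the post-update problem, which the statement and the paper implicitly assume. It does not come from the strong convexity in $r$ of Assumption~\ref{ass:1}, which is vacuous here because $\Dcomp=\emptyset$ makes the collapsed space a single point.
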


\begin{proof}
Since $\D_2'=\emptyset$, the post-update problem over $\D'=\D_1$ is equivalent to the pre-update problem restricted to mixtures that assign zero mass to all removed domains:
\begin{align*}
    &\text{minimize}_{p\in\triangle^{m-1}} \; \frac{1}{n} \sum_{i = 1}^n f_i(\LM(S, R, p)) \\
    &\text{s.t.}\quad p_j \le \frac{kN_j}{R}\quad \forall j\in\D_1\cup\D_2 \\
    &\qquad\;\; p_j = 0 \quad \forall j\in\D_2'.
\end{align*}
If $\pi^\star=1$, then $\tilde p_j=0$ for all $j\in\D_2$. This means that the additional constraints $p_j=0$ on the pre-update problem do not change the optimal value, since the restricted feasible set still contains the optimal solution of $\tilde p_j = 0$. Therefore, if $\pi^\star = 1$, the restricted problem above and the unrestricted pre-update problem have the same optimum, and their optimizers coincide on $\D_1$; in particular, $q^\star_{\D_1}=\tilde p_{\D_1}$. Replacing $\D_1$ with $\Dfix$ completes our proof.
\end{proof}

\subsubsection{Partitioning a domain}

\begin{lemma}[When $\tilde{p}_{\Dfix} = q^\star_{\Dfix}$ for partitioning]\label{lem:part_exactness}
Consider the \emph{partitioning} update operator, where $\D \setminus \D_1 =\{D\}$ consists of a single domain and
$\D_2'=\{D_1',\dots,D_\ell'\}$ are subdomains with $D=\bigcup_{j=1}^\ell D_j'$.

Define the natural distribution $p_{\mathrm{nat}}\in\triangle^{\ell-1}$ by
\[
[p_{\mathrm{nat}}]_j = \frac{N_j'}{\sum_{t=1}^\ell N_t'} \qquad \forall j\in[\ell].
\]
If $q^\star_{\D_2'} = p_{\mathrm{nat}}$, then
\[
\tilde p_{\Dfix} = q^\star_{\Dfix}.
\]
\end{lemma}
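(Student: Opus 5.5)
The plan mirrors the proofs of Lemmas~\ref{lem:add_exactness} and~\ref{lem:rem_exactness}: identify the pre-update problem with a restricted version of the post-update problem, and observe that the restriction is inactive when $q^\star_{\D_2'} = p_{\mathrm{nat}}$.

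\textbf{Step 1: embed pre-update mixes into the post-update space.} Training on the single domain $D$ with weight $w$ means sampling uniformly from $D=\bigcup_j D_j'$. This is the same as giving each subdomain $D_j'$ the weight $w\,[p_{\mathrm{nat}}]_j$. Consequently, every mix $p=[\pi p_{\D_1}, (1-\pi)]$ on $\D$ corresponds to the mix $q=[\pi p_{\D_1}, (1-\pi)p_{\mathrm{nat}}]$ on $\D'$, and $f_i(\LM(S,R,p)) = f_i(\LM(S,R,q))$. I will state this identification explicitly as the modeling convention that the training distributions coincide.

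\textbf{Step 2: match the repetition constraints.} For $j\in\D_1$ the constraints are identical, since $N_j'=N_j$. For the subdomains, $(1-\pi)[p_{\mathrm{nat}}]_j \le kN_j'/R$ is equivalent to $(1-\pi)\le k\sum_t N_t'/R = kN_D/R$, which is exactly the constraint on $D$. Hence the pre-update problem is the post-update problem with the extra constraint $q_{\D_2'} = p_{\mathrm{nat}}$. Here $q_{\D_2'}$ denotes the normalized within-$\D_2'$ ratios; if $\rho=1$, the constraint is vacuous.

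\textbf{Step 3: the restriction is inactive.} As in Lemma~\ref{lemma:conditional_standard_equivalence}, if $q^\star_{\D_2'} = p_{\mathrm{nat}}$, then $q^\star$ lies in the restricted feasible set. The restricted problem therefore has the same optimal value, and $q^\star$ is an optimizer of it. Mapping back via Step 1, $q^\star$ corresponds to an optimal pre-update mix $[\rho^\star q^\star_{\D_1}, 1-\rho^\star]$.

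I must still conclude that this mix equals $\tilde p$ on $\D_1$. I will invoke uniqueness of the optimizer from strong convexity (Assumption~\ref{ass:1}); otherwise I will phrase the result as $\tilde p$ being chosen as this optimizer. Normalizing over $\Dfix=\D_1$ then gives $\tilde p_{\Dfix}=q^\star_{\Dfix}$.

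The main obstacle is Step 1. It requires justifying that training on $D$ is equivalent to training on its subdomains in natural proportions, and that the log-linear parametrizations on $\D$ and $\D'$ are consistent, with the coefficient for $D$ equal to $\sum_j [p_{\mathrm{nat}}]_j A_{ij}$. Since $f_i$ is defined through the trained LM, this equivalence holds directly at the level of data distributions, so I would argue it there rather than through the parametric model.
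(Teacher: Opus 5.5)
Your proposal is correct and is essentially the paper's proof: both write the pre-update problem as the post-update problem with the added constraint $q_{\D_2'} = p_{\mathrm{nat}}$, since uniform sampling from $D$ induces natural proportions on the subdomains, and both conclude because $q^\star$ already satisfies that constraint. Two points you add are left implicit in the paper: the check that the subdomain repetition constraints collapse to $(1-\pi) \le k\sum_t N_t'/R$, and the uniqueness issue. On uniqueness, note that Assumption~\ref{ass:1} only gives strong convexity in $r$ for fixed $p_1$, which does not by itself make the $\D_1$-ratios of the pre-update optimizer unique. Your fallback of taking $\tilde p$ to be that optimizer is therefore the right way to close the argument, and it is what the paper tacitly does.
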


\begin{proof}
Since $\D_2=\{D\}$ is a single domain, sampling $(1-\pi)R$ tokens from $D$ induces a distribution over the subdomains
$\{D_1',\dots,D_\ell'\}$ that is proportional to token counts, i.e., $p_{\mathrm{nat}}$.
Therefore, the pre-update optimization problem can be written as the post-update optimization problem restricted to mixtures that preserve this
conditional distribution within the partition:
\begin{align*}
    &\text{minimize}_{q\in\triangle^{m'-1}} \; \frac{1}{n} \sum_{i = 1}^n f_i(\LM(S, R, q)) \\
    &\text{s.t.}\quad q_j \le \frac{kN_j'}{R}\quad \forall j\in\D_1 \cup\D_2' \\
    &\qquad\;\; q_{\D_2'} = p_{\mathrm{nat}},
\end{align*}

If $q^\star_{\D_2'}=p_{\mathrm{nat}}$, then $q^\star$ lies in the restricted feasible set above. The additional constraint
$q_{\D_2'}=p_{\mathrm{nat}}$ does not change the optimal value of the post-update problem, since the restricted feasible set still contains
the optimal solution. Therefore, the optimizer on the unrestricted post-update problem must coincide with the optimizer of the restricted problem on the
unaffected domains, implying $q^\star_{\D_1}=\tilde p_{\D_1}$. Replacing $\D_1$ with $\Dfix$ completes our proof.
\end{proof}

\subsubsection{Revising a domain}

\begin{lemma}[Exactness for revising]\label{lem:rev_exactness}
Consider the \emph{revising} update operator, where $\D_2=\{D\}$ and $\D_2'=\{D'\}$ consists of a single domain whose contents are modified.
Assume the log-linear model holds both pre- and post-update:
\[
F^{\mathrm{old}}(q) = \frac{1}{n}\sum_{i=1}^n \exp\!\left((A_{i,1})^\top q_{\Dfix} + (A_{i,2}^{\mathrm{old}})^\top q_{\D_2}\right),
\qquad
F^{\mathrm{new}}(q) = \frac{1}{n}\sum_{i=1}^n \exp\!\left((A_{i,1})^\top q_{\Dfix} + (A_{i,2}^{\mathrm{new}})^\top q_{\D_2}\right),
\]
where $\D=[\D_1,\D_2]$ and $\D'=[\D_1,\D_2']$ have the same dimensionality, and the repetition constraints are unchanged.

Let $\tilde p$ be an optimal solution to the pre-update mixing problem with objective $F^{\mathrm{old}}$, and let $q^\star$ be an optimal solution to the post-update mixing problem with objective $F^{\mathrm{new}}$.
If $A_{i,2}^{\mathrm{new}} = A_{i,2}^{\mathrm{old}}$ for all $i\in[n]$, then $\tilde p = q^\star$, and in particular $\tilde p_{\Dfix}=q^\star_{\Dfix}$.
\end{lemma}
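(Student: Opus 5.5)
The plan is to observe that when $A_{i,2}^{\mathrm{new}} = A_{i,2}^{\mathrm{old}}$ for every task $i$, the two objectives $F^{\mathrm{old}}$ and $F^{\mathrm{new}}$ coincide as functions of $q$. Under the log-linear assumption (Assumption~\ref{ass:1}, item 1), each is determined entirely by the coefficient vectors $(A_{i,1}, A_{i,2})$. The constants $c_i$ only shift the objective and do not affect the argmin; if the $c_i$ differ pre- and post-update, I would note this explicitly. The first step is therefore to write $F^{\mathrm{new}}(q) = F^{\mathrm{old}}(q)$ for all $q \in \triangle^{m-1}$. This uses the fact that $\D$ and $\D'$ have the same dimensionality, with the revised domain $D'$ occupying the same coordinate as $D$, so that both objectives live on the same simplex.

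The second step is to compare the feasible sets. The lemma stipulates that the repetition constraints are unchanged, i.e.\ $kN_j/R = kN_j'/R$ coordinatewise. Hence the pre-update problem~\eqref{eq:p_mixing_problem} and the post-update problem~\eqref{eq:mixing_problem} have identical feasible sets and identical objectives. They are therefore the same optimization problem, and their solution sets coincide.

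The third step is to pass from equal solution sets to $\tilde p = q^\star$. This is the only potential obstacle, since two optimal solutions of the same problem need not be equal without uniqueness. I would resolve it by invoking strong convexity, the standing assumption used in Assumption~\ref{ass:1} (item 3) and Assumption~\ref{ass:2}, to conclude that the minimizer is unique. Failing that, I would read the statement as asserting that $\tilde p$ is itself an optimal solution of the post-update problem, so that one may take $q^\star = \tilde p$.

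Finally, restricting to $\Dfix = \D_1$ and renormalizing gives $\tilde p_{\Dfix} = q^\star_{\Dfix}$. Combined with Lemma~\ref{lemma:conditional_standard_equivalence}, this shows the performance gap in Theorem~\ref{thm:general_gap} vanishes for this case.
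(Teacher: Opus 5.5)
Your proposal is correct and matches the paper's proof: equal coefficients make $F^{\mathrm{new}} \equiv F^{\mathrm{old}}$ and the feasible sets are unchanged, so the two problems are identical and their optimizers coincide. Your attention to uniqueness goes beyond the paper, which skips the point; however, the paper's strong-convexity assumptions are blockwise (in $r$ for fixed $p_1$, or in $p_1$ for fixed $r$), not on the full $q$-simplex, so they do not give a unique minimizer, and your fallback reading (that $\tilde p$ is itself a post-update optimizer, so one may take $q^\star = \tilde p$) is the sound justification.
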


\begin{proof}
If $A_{i,2}^{\mathrm{new}} = A_{i,2}^{\mathrm{old}}$ for all $i\in[n]$, then $F^{\mathrm{new}}(q)=F^{\mathrm{old}}(q)$ for all $q$.
Since the feasible set (simplex and repetition constraints) is unchanged under revising, the pre-update and post-update optimization problems are identical.
Therefore their optimizers coincide, so $\tilde p=q^\star$.
\end{proof}

\section{Experiment Details}\label{supp:details}

\subsection{Experimental Setup}\label{supp:exp_details}

\textbf{Domains.} We provide a breakdown of the number of tokens per domain in Table~\ref{tab:domain_sizes}.

\begin{table}[h]
\centering
\caption{Token counts for all domains in the final domain set. Sources with topic-level partitions (DCLM, Stack-Edu, olmOCR Science PDFs) are shown with indented topics. Single-source domains are shown without subdivision.}
\small
\begin{tabular}{l r | l r}
\toprule
\textbf{Domain} & \textbf{Token Count} & \textbf{Domain} & \textbf{Token Count} \\
\midrule
\multicolumn{2}{l|}{\textbf{DCLM}~\citep{dclm}} & \multicolumn{2}{l}{\textbf{Stack-Edu}~\citep{allal2025smollm2smolgoesbig}} \\
\quad Adult Content & 67,760,078,203 & \quad C & 4,735,074,247 \\
\quad Art and Design & 70,659,711,995 & \quad C\# & 7,204,525,343 \\
\quad Crime and Law & 170,130,914,779 & \quad C++ & 12,530,445,761 \\
\quad Education and Jobs & 184,690,792,861 & \quad Go & 1,398,595,118 \\
\quad Electronics and Hardware & 80,168,541,745 & \quad Java & 31,347,725,888 \\
\quad Entertainment & 441,768,061,760 & \quad JavaScript & 8,886,972,357 \\
\quad Fashion and Beauty & 37,256,539,512 & \quad Markdown & 28,916,320,218 \\
\quad Finance and Business & 310,313,927,581 & \quad PHP & 7,395,033,318 \\
\quad Food and Dining & 105,937,299,687 & \quad Python & 18,017,832,560 \\
\quad Games & 229,992,491,702 & \quad Ruby & 1,386,775,805 \\
\quad Health & 393,496,227,836 & \quad Rust & 1,418,447,132 \\
\quad History and Geography & 161,049,719,459 & \quad SQL & 7,063,472,860 \\
\quad Home and Hobbies & 126,910,777,314 & \quad Shell & 2,542,637,875 \\
\quad Industrial & 43,572,140,450 & \quad Swift & 1,510,019,025 \\
\quad Literature & 364,834,344,848 & \quad TypeScript & 2,495,753,789 \\
\quad Politics & 611,198,130,192 & \multicolumn{2}{l}{} \\
\quad Religion & 277,776,929,208 & \multicolumn{2}{l}{\textbf{olmOCR Science PDFs}~\citep{olmo2025olmo3}} \\
\quad Science, Math and Technology & 427,131,054,341 & \quad Adult & 303,073,226 \\
\quad Social Life & 218,731,841,124 & \quad Art and Design & 6,833,185,034 \\
\quad Software & 108,039,380,021 & \quad Crime and Law & 42,538,674,743 \\
\quad Software Development & 223,384,974,282 & \quad Education and Jobs & 138,127,926,093 \\
\quad Sports and Fitness & 196,759,999,355 & \quad Entertainment & 6,069,602,783 \\
\quad Transportation & 90,793,306,202 & \quad Fashion and Beauty & 557,917,820 \\
\quad Travel and Tourism & 57,642,815,530 & \quad Finance and Business & 61,150,044,703 \\
 & & \quad Food and Dining & 2,322,982,204 \\
\multicolumn{2}{l|}{\textbf{Single-Source Domains}} & \quad Games & 2,486,095,532 \\
\textbf{AlgebraicStack}~\citep{azerbayev2023llemma} & 11,818,955,329 & \quad Health & 108,215,933,374 \\
\textbf{ArXiv}~\citep{azerbayev2023llemma} & 20,773,846,846 & \quad Home and Hobbies & 3,924,579,643 \\
\textbf{FineMath-3+}~\citep{allal2025smollm2smolgoesbig} & 34,057,973,953 & \quad Industrial & 29,389,278,657 \\
\textbf{Pes2o}~\citep{peS2o} & 58,552,461,187 & \quad Literature & 31,886,391,090 \\
\textbf{Wikipedia}~\citep{soldaini2024dolma} & 10,067,758,073 & \quad Politics & 39,234,116,889 \\
 & & \quad Religion & 24,729,732,953 \\
 & & \quad Science and Technology & 424,245,385,160 \\
 & & \quad Software & 9,146,853,216 \\
 & & \quad Software Development & 41,841,278,724 \\
 & & \quad Sports and Fitness & 5,450,913,796 \\
 & & \quad Transportation & 17,149,342,957 \\
 & & \quad Travel & 2,102,425,717 \\
\bottomrule
\end{tabular}
\label{tab:domain_sizes}
\end{table}

\textbf{Model.} We train 1B parameter decoder-only transformer models using Olmo 2 architecture~\citep{olmo20242olmo2furious}. In particular, we set n\_layers=16, n\_head=16, d\_model=2048, head\_dim=128.

We use a batch size of 512, a learning rate of $0.0018$ with a cosine scheduler with warmup and linear decay, and a sequence length of 4096. We use the Dolma 2 tokenizer.

All 1B models were trained on 32 NVIDIA H100s (80 GB), while proxy models were trained on one NVIDIA H100.

\textbf{Evaluation.} Table~\ref{tab:eval_suite} lists all the downstream tasks. 

\begin{table}[t]
  \centering
  \caption{
  \textbf{Details of the BPB evaluation suite}. 
  We use the \textsc{OlmoBaseEval} easy evaluation suite \citep{olmo2025olmo3}, formatted as bits-per-byte (BPB) over gold continuations. $^\dagger$ = few-shot examples are built-in the task; $^\alpha$ = human-written few-shot examples. We treat subtasks as standalone tasks, except for MMLU, where we use averages over the four MMLU categories. 
  }
\begin{scriptsize}
\begin{tabular}{lllccc}
\toprule
& \bf{Task} & \bf{Capability} & \bf{\# ICL} & \bf{Metric} & \bf{\# Subtasks} \\
\midrule

\rowcolor{ai2offwhite} & Minerva MATH (\citeyear{lewkowycz2022solving}) & Math Gen & 4$^\alpha$ & BPB & 7 \\
\rowcolor{ai2offwhite} & \multicolumn{5}{l}{\hspace{6pt}Subtasks: \textit{Algebra, Counting and Probability, Geometry, Intermediate Algebra} } \\
\rowcolor{ai2offwhite}\rowcolor{ai2offwhite} \multirow{-3}{*}{\rotatebox[origin=c]{90}{\textit{Math}}} & \multicolumn{5}{l}{\hspace{6pt}\textit{Number Theory, Prealgebra, Precalculus} } \\
\rowcolor{lightgrey} & HumanEval (\citeyear{chen2021codex}) & Code Gen & 3 & BPB & - \\
\rowcolor{lightgrey} & MBPP (\citeyear{austin2021program}) & Code Gen & 3 & BPB & - \\
\rowcolor{lightgrey} & MT MBPP (\citeyear{cassano2022multipl, olmo2025olmo3}) & Code Gen & 3 & BPB & 17 \\
\rowcolor{lightgrey} & \multicolumn{5}{l}{\hspace{6pt}Subtasks: \textit{Bash, C, C++, C\#, Go, Haskell, Java, JavaScript, MatLab, PHP} } \\
\rowcolor{lightgrey}\multirow{-5}{*}{\rotatebox[origin=c]{90}{\textit{Code}}} & \multicolumn{5}{l}{\hspace{6pt}\textit{Python, R, Ruby, Rust, Scala, Swift, TypeScript} } \\
\rowcolor{ai2offwhite} & ARC (\citeyear{clark2018think}) & Science QA & 5 & BPB & 2 \\
\rowcolor{ai2offwhite} & \multicolumn{5}{l}{\hspace{6pt}Subtasks: \textit{ARC-Easy, ARC-Challenge} } \\
\rowcolor{ai2offwhite} & MMLU STEM (\citeyear{hendryckstest2021}) & General QA & 5 & BPB & 19 \\
\rowcolor{ai2offwhite} & MMLU Humanities (\citeyear{hendryckstest2021}) & General QA & 5 & BPB & 13 \\
\rowcolor{ai2offwhite} & MMLU Social Sci. (\citeyear{hendryckstest2021}) & General QA & 5 & BPB & 12 \\
\rowcolor{ai2offwhite} & MMLU Other (\citeyear{hendryckstest2021}) & General QA & 5 & BPB & 14 \\
\rowcolor{ai2offwhite} & CSQA (\citeyear{talmor-etal-2019-commonsenseqa}) & Commonsense QA & 5 & BPB & - \\
\rowcolor{ai2offwhite} & HellaSwag (\citeyear{zellers-etal-2019-hellaswag}) & Language Modeling & 5 & BPB & - \\
\rowcolor{ai2offwhite} & WinoGrande (\citeyear{Sakaguchi_Le_Bras_Bhagavatula_Choi_2020}) & Language Modeling & 5 & BPB & - \\
\rowcolor{ai2offwhite} & SocialIQA (\citeyear{sap-etal-2019-social}) & Social QA & 5 & BPB & - \\
\rowcolor{ai2offwhite} & PiQA (\citeyear{Bisk_Zellers_Le_bras_Gao_Choi_2020}) & Physical QA & 5 & BPB & - \\
\rowcolor{ai2offwhite} & CoQA (\citeyear{reddy-etal-2019-coqa}) & Conversation QA & 0$^\dagger$ & BPB & - \\
\rowcolor{ai2offwhite} & DROP (\citeyear{dua-etal-2019-drop}) & Passage QA & 5 & BPB & - \\
\rowcolor{ai2offwhite} & Jeopardy (\citeyear{mosaic-jeopardy}) & Trivia QA & 5 & BPB & - \\
\rowcolor{ai2offwhite} & NaturalQs (\citeyear{kwiatkowski-etal-2019-natural}) & General QA & 5 & BPB & - \\
\rowcolor{ai2offwhite} & SQuAD (\citeyear{rajpurkar-etal-2016-squad}) & General QA & 5 & BPB & - \\
\rowcolor{ai2offwhite} & SciQ (\citeyear{welbl-etal-2017-crowdsourcing}) & Science QA & 5 & BPB & - \\
\rowcolor{ai2offwhite} & Basic Skills (\citeyear{olmo2025olmo3}) & Basic QA & 5 & BPB & 6 \\
\rowcolor{ai2offwhite} & \multicolumn{5}{l}{\hspace{6pt}Subtasks: \textit{Basic Arithmetic, String Manipulation, Simple Coding} } \\
\rowcolor{ai2offwhite} & \multicolumn{5}{l}{\hspace{6pt}\textit{Elementary Logical Reasoning, Basic Common Sense, Simple Pattern Recognition} } \\
\rowcolor{ai2offwhite} & Lambada (\citeyear{paperno2016lambada}) & Language Modeling & 0 & BPB & - \\
\rowcolor{ai2offwhite} \multirow{-23}{*}{\rotatebox[origin=c]{90}{\textit{QA}}} & MedMCQA (\citeyear{pmlr-v174-pal22a}) & Medical QA & 5 & BPB & - \\
\bottomrule
\end{tabular}
\end{scriptsize}
  \vspace{6pt}
  \label{tab:eval_suite}
\end{table}

\subsection{Implementation Details}\label{supp:imp_details}

For \fullmixreuse and \partialmixreuse, the first $\tilde{p}$ is from applying \base on the initial web corpus. After each update, the newly recomputed mix becomes the $\tilde{p}$ of the next one.

Across all swarm-based methods we evaluate on (\base, Swarm Reuse, \fullmixreuse, \partialmixreuse), we use $\Ssmall=30$M parameters (architecture in Table~\ref{tab:model_architectures}) and train for 5x Chinchilla, with a batch size of 64, a learning rate of $0.007$, and a sequence length of 2048. 

For any $m$ domains we want to recompute, we set $K$ to approximate $c(m + 1)$ for linear multiplier $c = 1, 2, 3$. In particular, for $c = 3$, we set $K = 2^x$ to be the power of $2$ that is closest to $3(m+1)$. For $c = 2$, we set $K = 2^{x-1}$. For $c = 1$, we set $K = m+1$. When we recompute over $m=2$ domains, we use search instead of regression, as the search space is one-dimensional. See Tables~\ref{tab:cost_fullrecomputation}, \ref{tab:cost_swarmreuse}, \ref{tab:cost_fullreuse}, and~\ref{tab:cost_partialreuse} for exact swarm sizes.
We solve all of our mixture optimization problems using CVXPY and a KL regularization term with $\lambda=0.05$. We set $k=4$ and $R = 1$T for the repetition constraint of the experiments in \S\ref{sec:superswarm}.

\begin{table}[!h]
\centering
\caption{Swarm details per stage for \textbf{Full Recomputation}. We list the number of domains recomputed at each stage, and the columns with $c$ contain the number of proxy runs $K \approx c(m+1)$.}
\footnotesize
\setlength{\tabcolsep}{6pt}
\begin{tabular}{lcccc}
\toprule
\textbf{Stage} & \textbf{\# domains recomputed} & \textbf{$c=1$} & \textbf{$c=2$} & \textbf{$c=3$} \\
\midrule
Initial      & 24 & 25 & 32 & 64 \\
\Add (Stack-Edu languages)  & 39 & 40 & 64 & 128 \\
\Add (ArXiv, FineMath 3+, PDFs, Wiki, AlgebraicStack, pes2o) & 45 & 46 & 64 & 128 \\
\Revise (PDFs) & 45 & 46 & 64 & 128 \\
\Remove (AlgebraicStack) & 44 & 45 & 64 & 128 \\
\Partition (PDFs) & 64 & 65 & 128 & 256 \\
\midrule
\textbf{Total \# proxy runs} & & 267 & 416 & 832 \\
\bottomrule
\end{tabular}
\label{tab:cost_fullrecomputation}
\end{table}

\begin{table}[!h]
\centering
\caption{Swarm details per stage for \textbf{Swarm Reuse}. We list the number of domains recomputed at each stage, and the columns with $c$ contain the number of additional proxy runs needed, $K \approx c(m+1)$. $c=3$ corresponds to Figure~\ref{fig:main_results}.}
\footnotesize
\setlength{\tabcolsep}{6pt}
\begin{tabular}{lcccc}
\toprule
\textbf{Stage} & \textbf{\# domains recomputed} & \textbf{$c=1$} & \textbf{$c=2$} & \textbf{$c=3$} \\
\midrule
Initial      & 24 & 25 & 32 & 64 \\
\Add (Stack-Edu languages)  & 39 & 15 & 32 & 64 \\
\Add (ArXiv, FineMath 3+, PDFs, Wiki, AlgebraicStack, pes2o) & 45 & 6 & 6 & 6 \\
\Revise (PDFs) & 45 & 6 & 6 & 6 \\
\Remove (AlgebraicStack) & 44 & 5 & 5 & 5 \\
\Partition (PDFs) & 64 & 20 & 59 & 123 \\
\midrule
\textbf{Total \# proxy runs} & & 77 & 140 & 268 \\
\bottomrule
\end{tabular}
\label{tab:cost_swarmreuse}
\end{table}

\begin{table}[!h]
\centering
\caption{Swarm details per stage for \textbf{Full Mixture Reuse}. We list the number of domains recomputed at each stage, and the columns with $c$ contain the number of proxy runs $K \approx c(m+1)$. $c=3$ corresponds to Figure~\ref{fig:main_results}.}
\footnotesize
\setlength{\tabcolsep}{6pt}
\begin{tabular}{lcccc}
\toprule
\textbf{Stage} & \textbf{\# domains recomputed} & \textbf{$c=1$} & \textbf{$c=2$} & \textbf{$c=3$} \\
\midrule
Initial      & 24 & 25 & 32 & 64 \\
\Add (Stack-Edu languages)  & 16 & 17 & 32 & 64 \\
\Add (ArXiv, FineMath 3+, PDFs, Wiki, AlgebraicStack, pes2o) & 7 & 8 & 8 & 16 \\
\Revise (PDFs) & 2 & 3 & 4 & 8 \\
\Remove (AlgebraicStack) & 0 & 0 & 0 & 0 \\
\Partition (PDFs) & 22 & 23 & 32 & 64 \\
\midrule
\textbf{Total \# proxy runs} &  & 76 & 108 & 216 \\
\bottomrule
\end{tabular}
\label{tab:cost_fullreuse}
\end{table}

\begin{table}[!h]
\centering
\caption{Swarm details per stage for \textbf{Partial Mixture Reuse}. We list the number of domains recomputed at each stage, and the columns with $c$ contain the number of proxy runs $K \approx c(m+1)$. $c=3$ corresponds to Figure~\ref{fig:main_results}.}
\footnotesize
\setlength{\tabcolsep}{6pt}
\begin{tabular}{lcc}
\toprule
\textbf{Stage} & \textbf{\# domains ($m$)} & \textbf{Proxy Runs ($c=3$)} \\
\midrule
Initial      & 24 & 64 \\
\Add (Stack-Edu languages)  & 17 & 64 \\
\Add (ArXiv, FineMath 3+, PDFs, Wiki, AlgebraicStack, pes2o) & 8 & 32 \\
\Revise (PDFs) & 8 & 32 \\
\Remove (AlgebraicStack) & 7 & 16 \\
\Partition (PDFs) & 27 & 64 \\
\midrule
\textbf{Total \# proxy runs} &  & 272 \\
\bottomrule
\end{tabular}
\label{tab:cost_partialreuse}
\end{table}

Algorithm~\ref{alg:swarm_reuse_method} corresponds to the Swarm Reuse approach in \S\ref{sec:superswarm}, presenting another way to reuse prior information from mixing. 
The blue coloring indicates aspects of this algorithm that are different from \base.

\begin{algorithm}[htb]
   \caption{Swarm Reuse Mixing}
   \label{alg:swarm_reuse_method}
\begin{algorithmic}[1]
   \STATE {\bfseries Input:} \textcolor{olmoBlue}{Old domain set $\D$, new domain set $\D'$, old swarm $\{(p^j_{\text{old}}, \{y_{ij}^{\text{old}}\}_{i=1}^n)\}_{j=1}^{K_{\text{old}}}$}, new swarm size $K_{\text{new}} = \mathcal{O}(|\D'|)$, Dirichlet distribution $\mathcal{P}'$, repetition factor $k$, requested tokens $R$.
   \STATE Sample new mixes $p^1_{\text{new}}, \dots, p^{K_{\text{new}}}_{\text{new}} \sim \mathcal{P}'$ on $\D'$.
   \STATE Train proxy models on new mixes and evaluate on downstream tasks to get new swarm data $\{(p^j_{\text{new}}, \{y_{ij}^{\text{new}}\}_{i = 1}^n \}_{j =1}^{K_{\text{new}}}$.
   \STATE \textcolor{olmoBlue}{Map old swarm mixes to new domain set: for $j \in [K_{\text{old}}]$, compute $\tilde{p}^j_{\text{old}} \in \triangle^{m'-1}$ from $p^j_{\text{old}} \in \triangle^{m-1}$ where:}
   \begin{itemize}
       \item \textcolor{olmoBlue}{If \textbf{Add}: $\D = [\D_1, \emptyset]$ and $\D' = [\D_1, \D_2']$. Then $\tilde{p}^j_{\text{old}}(D_i') = p^j_{\text{old}}(D_i)$ for $D_i \in \D_1$, and $\tilde{p}^j_{\text{old}}(D_i') = 0$ for $D_i' \in \D_2'$.}
       \item \textcolor{olmoBlue}{If \textbf{Partition}: $\D = [\D_1, \{D_{\text{par}}\}]$ and $\D' = [\D_1, \{D_{\text{par}}^1, \dots, D_{\text{par}}^{\ell}\}]$ where $D_{\text{par}} = \bigcup_{k=1}^{\ell} D_{\text{par}}^k$. Then $\tilde{p}^j_{\text{old}}(D_i') = p^j_{\text{old}}(D_i)$ for $D_i \in \D_1$, and $\tilde{p}^j_{\text{old}}(D_{\text{par}}^k) = p^j_{\text{old}}(D_{\text{par}}) \cdot \frac{N_{D_{\text{par}}^k}}{\sum_{k'=1}^{\ell} N_{D_{\text{par}}^{k'}}}$ for the partitioned domains.}
   \end{itemize}
   \STATE \textcolor{olmoBlue}{Construct combined swarm: $\mathcal{S} = \{(\tilde{p}^j_{\text{old}}, \{y_{ij}^{\text{old}}\})\}_{j=1}^{K_{\text{old}}} \cup \{(p^j_{\text{new}}, \{y_{ij}^{\text{new}}\})\}_{j=1}^{K_{\text{new}}}$}.
   \FOR{$i \in [n]$}
       \STATE Use combined swarm $\mathcal{S}$ to fit the log-linear model $\hat{f}_i(p) \texttt{=} c_i + \exp(A_i^\top p)$, where $c_i \in \R^+$ and $A_i \in \R^{|\D'|}$.
   \ENDFOR
   \STATE Solve the optimization problem:
   \begin{align}
       &\text{minimize}_{p \in \triangle^{|\D'|-1}} \frac{1}{n} \sum_{i=1}^n \hat{f}_i(p) \\ 
       &\text{subject to} \quad p_j \leq \frac{k N_j'}{R} \; \forall j \in |\D'| \nonumber
   \end{align}
   \STATE \textbf{Return} $p^\star$, the solution on new domain set $\D'$.
\end{algorithmic}
\end{algorithm}

\subsection{Additional Results}\label{sec:additional_exp_results}

\textbf{6T results.} In Figure~\ref{fig:superswarm_6T}, we compare the performance versus cost of mixing strategies when $R=6$T, a more data-constrained setting. We exclude \partialmixreuse since \fullmixreuse performs well at $R=6$T and only falls short at $R=1$T (Figure~\ref{fig:theorem2_validation}).

We find that \fullmixreuse achieves 99\% of full recomputation's ($c=3$) performance improvement (+6.94\% versus +6.97\%) while using 74\% fewer total proxy runs (216 versus 832 runs). Swarm reuse achieves $93\%$ of full recomputation's performance with 52 more runs than \fullmixreuse. 

\begin{figure}
    \centering
    \includegraphics[width=0.5\linewidth]{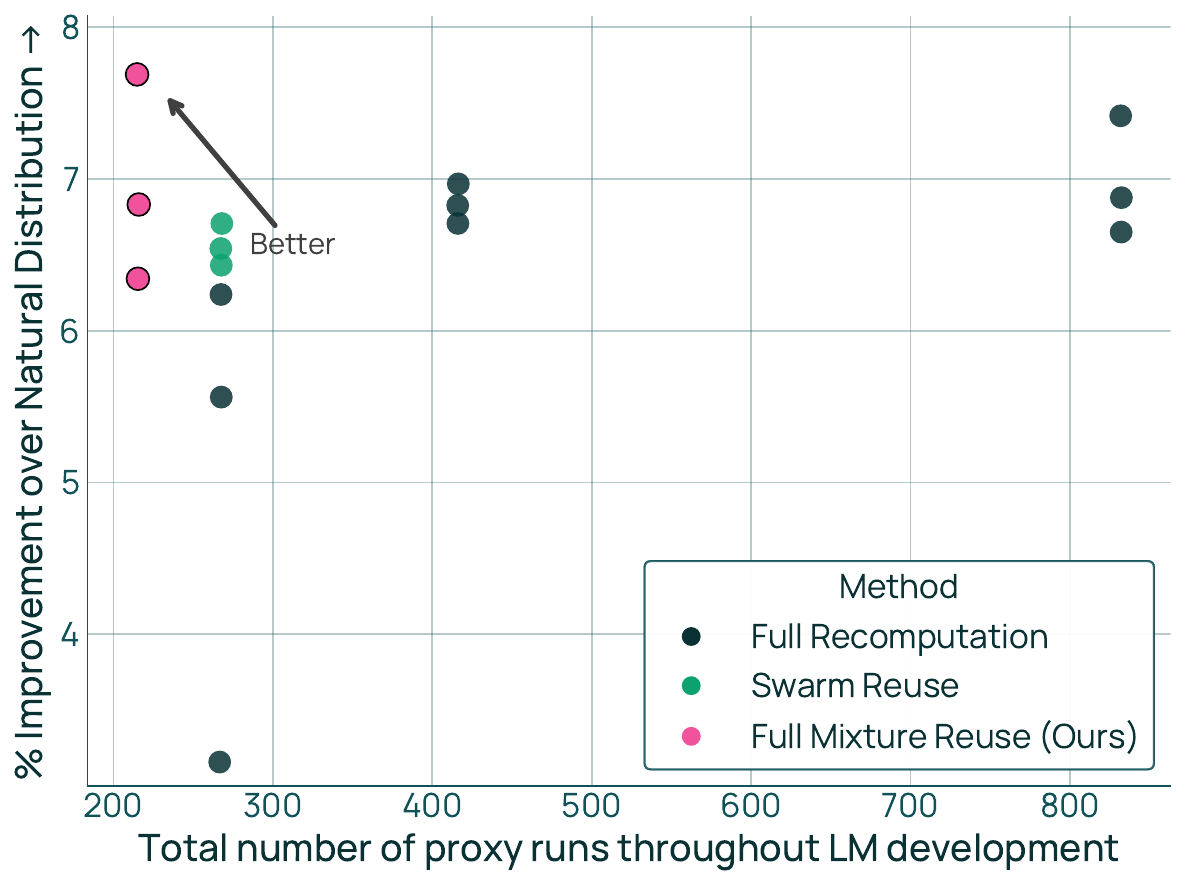}
    \caption{Performance improvement versus cost of mixing under evolving domains. \fullmixreuse achieves $99\%$ of the improvement of full recomputation while using $74\%$ fewer proxy runs.}
    \label{fig:superswarm_6T}
\end{figure}

\textbf{Low proxy run budget regime.} While Figure~\ref{fig:main_results} assesses performance versus number of proxy runs for cumulative costs ranging from 216 to 832 runs, it is unclear how recomputation strategies behave in a lower budget regime. We evaluate full recomputation with $c = 1$ (i.e., the minimum swarm size possible) and all other methods with $K \approx c(m+1)$ for $c = 1, 2, 3$, such that the range of total proxy runs considered is from 76 to 272.

Figure~\ref{fig:low_budget} shows performance versus number of proxy runs in a low budget regime. We make three observations:
\begin{itemize}
    \item The performance of swarm reuse versus \fullmixreuse at low budgets is similar.
    \item While the minimum number of proxy runs needed for full recomputation is 267, mixture reuse and swarm reuse unlock a lower budget regime, requiring as little as 76 runs. These reuse strategies can be attractive to practitioners with strict compute budgets and large domain sets that undergo many updates.
    \item With only 76 runs, mixture reuse still achieves a $+9.6\%$ and $+5\%$ improvement over the natural distribution when $R=1$T and $6$T, respectively.
\end{itemize}

\begin{figure}
    \centering
    \includegraphics[width=0.49\linewidth]{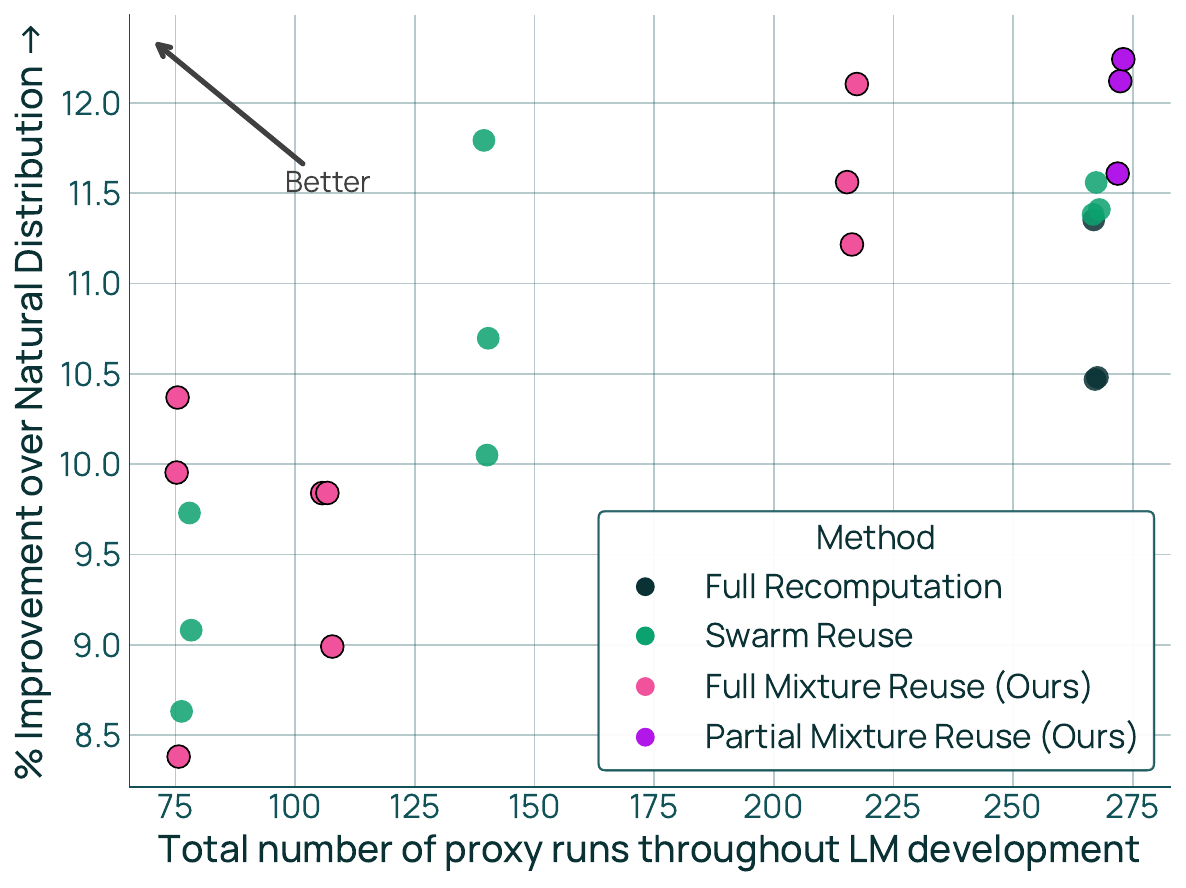}
    \includegraphics[width=0.49\linewidth]{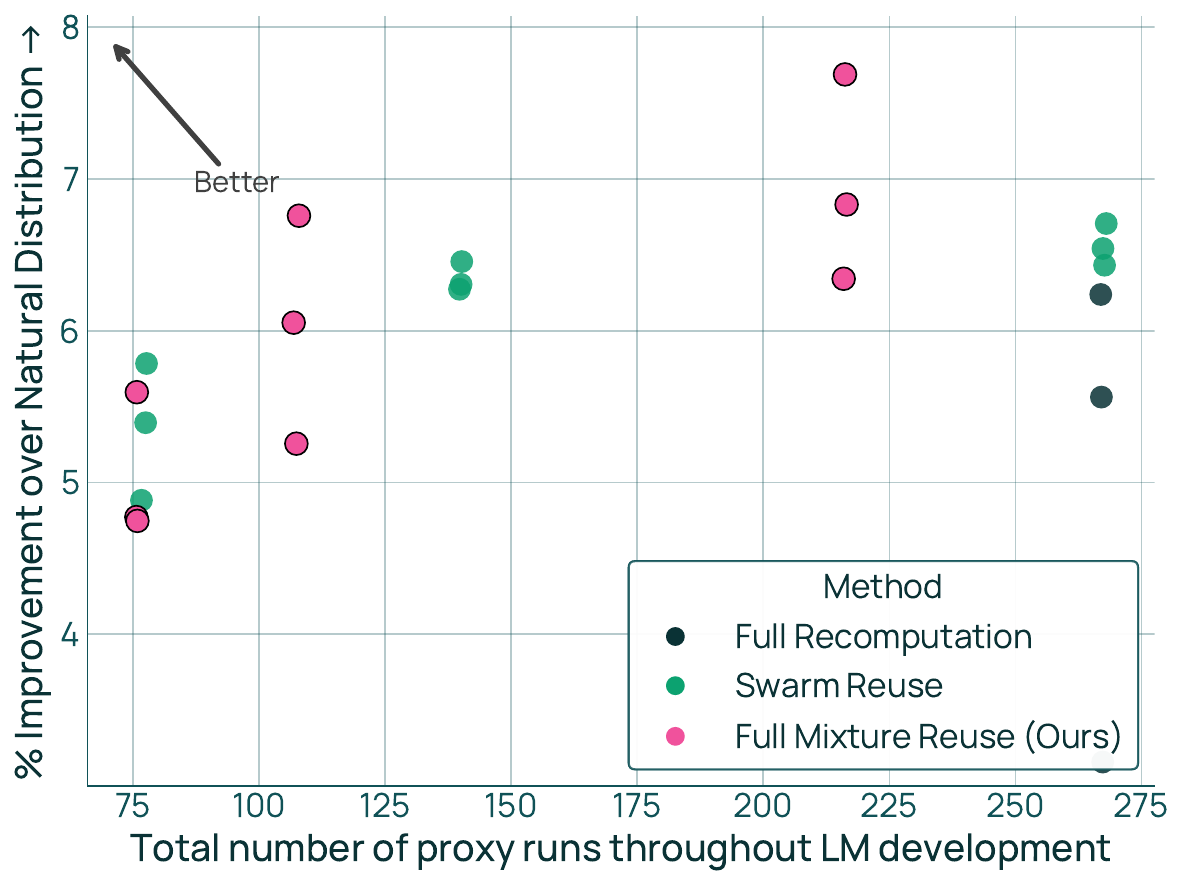}
    \caption{Performance improvement versus cost of mixing under evolving domains for $R=1$T (left) and $R=6$T (right). Total number of proxy runs vary from 76 to 272.}
    \label{fig:low_budget}
\end{figure}

\textbf{Proposed mixes.} Table~\ref{tab:full_domain_mixtures} shows the full mixtures corresponding to the natural distribution, full recomputation (best seed), and \partialmixreuse (best seed).

\begin{table}[t]
\centering
\caption{Mixture weights (full version of Figure~\ref{fig:superswarm_proposed_mixes}) for the natural distribution, full recomputation, and \partialmixreuse.}
\scriptsize
\setlength{\tabcolsep}{4pt}
\begin{tabular}{lccc | lccc}
\toprule
\textbf{Domain} & \textbf{Nat.} & \textbf{Full} & \textbf{Partial} & \textbf{Domain} & \textbf{Nat.} & \textbf{Full} & \textbf{Partial} \\
\midrule
arxiv & 0.003320 & 0.036555 & 0.022549 & pdf:fashion\_beauty & 0.000089 & 0.000003 & 0.000043 \\
dclm:adult\_content & 0.010828 & 0.006309 & 0.004050 & pdf:finance\_business & 0.009771 & 0.004230 & 0.001101 \\
dclm:art\_and\_design & 0.011291 & 0.006265 & 0.008097 & pdf:food\_dining & 0.000371 & 0.000113 & 0.009292 \\
dclm:crime\_and\_law & 0.027186 & 0.009991 & 0.005497 & pdf:games & 0.000397 & 0.000074 & 0.009944 \\
dclm:education\_and\_jobs & 0.029513 & 0.019003 & 0.008874 & pdf:health & 0.017292 & 0.013871 & 0.006251 \\
dclm:electronics\_and\_hardware & 0.012811 & 0.003289 & 0.003360 & pdf:home\_hobbies & 0.000627 & 0.000396 & 0.015698 \\
dclm:entertainment & 0.070592 & 0.055739 & 0.021609 & pdf:industrial & 0.004696 & 0.002512 & 0.000749 \\
dclm:fashion\_and\_beauty & 0.005953 & 0.003134 & 0.000769 & pdf:literature & 0.005095 & 0.005713 & 0.007648 \\
dclm:finance\_and\_business & 0.049587 & 0.022324 & 0.009896 & pdf:politics & 0.006269 & 0.001782 & 0.038680 \\
dclm:food\_and\_dining & 0.016928 & 0.015615 & 0.003785 & pdf:religion & 0.003952 & 0.001056 & 0.000088 \\
dclm:games & 0.036752 & 0.018811 & 0.013172 & pdf:science\_tech & 0.067792 & 0.029994 & 0.014598 \\
dclm:health & 0.062879 & 0.071471 & 0.050413 & pdf:software & 0.001462 & 0.001752 & 0.000000 \\
dclm:history\_and\_geography & 0.025735 & 0.028328 & 0.008994 & pdf:software\_dev & 0.006686 & 0.001929 & 0.015815 \\
dclm:home\_and\_hobbies & 0.020280 & 0.017861 & 0.005614 & pdf:sports\_fitness & 0.000871 & 0.000105 & 0.000060 \\
dclm:industrial & 0.006963 & 0.004541 & 0.007756 & pdf:transportation & 0.002740 & 0.001798 & 0.068597 \\
dclm:literature & 0.058299 & 0.048124 & 0.026746 & pdf:travel & 0.000336 & 0.000043 & 0.007881 \\
dclm:politics & 0.097667 & 0.034541 & 0.011652 & stack-edu:C & 0.000757 & 0.004088 & 0.001655 \\
dclm:religion & 0.044387 & 0.018381 & 0.006006 & stack-edu:CSharp & 0.001151 & 0.014910 & 0.015118 \\
dclm:science\_math\_and\_technology & 0.068254 & 0.077043 & 0.088744 & stack-edu:Cpp & 0.002002 & 0.012993 & 0.036761 \\
dclm:social\_life & 0.034952 & 0.018543 & 0.011095 & stack-edu:Go & 0.000223 & 0.003558 & 0.004103 \\
dclm:software & 0.017264 & 0.003887 & 0.002666 & stack-edu:Java & 0.005009 & 0.015251 & 0.025259 \\
dclm:software\_development & 0.035696 & 0.018979 & 0.025232 & stack-edu:JavaScript & 0.001420 & 0.004582 & 0.026072 \\
dclm:sports\_and\_fitness & 0.031441 & 0.017050 & 0.006516 & stack-edu:Markdown & 0.004621 & 0.028525 & 0.084832 \\
dclm:transportation & 0.014508 & 0.006398 & 0.003515 & stack-edu:PHP & 0.001182 & 0.003491 & 0.007554 \\
dclm:travel\_and\_tourism & 0.009211 & 0.005433 & 0.001804 & stack-edu:Python & 0.002879 & 0.072071 & 0.052859 \\
finemath-3plus & 0.005442 & 0.136232 & 0.136232 & stack-edu:Ruby & 0.000222 & 0.000868 & 0.004068 \\
pes2o & 0.009356 & 0.012147 & 0.006478 & stack-edu:Rust & 0.000227 & 0.000008 & 0.004161 \\
pdf:adult & 0.000048 & 0.000907 & 0.000016 & stack-edu:SQL & 0.001129 & 0.001833 & 0.002143 \\
pdf:art\_design & 0.001092 & 0.000231 & 0.004289 & stack-edu:Shell & 0.000406 & 0.008734 & 0.007459 \\
pdf:crime\_law & 0.006797 & 0.002672 & 0.000038 & stack-edu:Swift & 0.000241 & 0.006040 & 0.004430 \\
pdf:education\_jobs & 0.022072 & 0.010150 & 0.005608 & stack-edu:TypeScript & 0.000399 & 0.009983 & 0.001015 \\
pdf:entertainment & 0.000970 & 0.000087 & 0.003318 & wikipedia & 0.001609 & 0.017654 & 0.011671 \\
\bottomrule
\end{tabular}
\label{tab:full_domain_mixtures}
\end{table}

\textbf{Ablations for each domain update operator.} We demonstrate that \fullmixreuse attains performance similar to full recomputation for each type of domain update operator. We evaluate the following domain updates:
\begin{itemize}
    \item \Add: we start with 24 DCLM topic-based domains. Then, we add 15 Stack-Edu programming language domains.
    \item \Remove: we start with 39 domains consisting of 24 DCLM topics and 15 Stack-Edu programming languages. Then, we remove the 15 Stack-Edu domains.
    \item \Partition: we start with 24 DCLM topics and a single Stack-Edu source domain. Then, we partition the Stack-Edu source into 15 programming language domains. 
    \item \Revise: we start with 24 DCLM topics and a single StackV2~\citep{lozhkov2024starcoder} source domain. Then, the StackV2 domain is replaced with the Stack-Edu domain.
\end{itemize}

For each operator, we compare \fullmixreuse and the natural distribution over $\D'$ (as a control) against full recomputation. We measure the total variation distance of the proposed mixtures with respect to full recomputation's mix, defined as $\tv(q, q^\star) = \frac{1}{2} \sum_{i=1}^{m'} |q_i - q_i^\star| \in [0, 1]$. We also measure performance gap relative to full recomputation's downstream performance (average task BPB) for \Add, \Remove, and \Partition. For our setup, we used $k=4$, $R=6$T.

Figure~\ref{fig:operators} shows that \fullmixreuse achieves substantially smaller performance gaps and TV distances compared to the natural distribution across \Add, \Remove, and \Partition. For \Revise, the TV distance between \fullmixreuse and full recomputation was only 0.21\%---orders of magnitude smaller than other operators---indicating nearly identical mixes. We therefore did not evaluate downstream performance for \Revise, as variation would be dominated by noise. These results demonstrate that mixture reuse works effectively across all domain update operators.

\begin{figure}
    \centering
    \includegraphics[width=\linewidth]{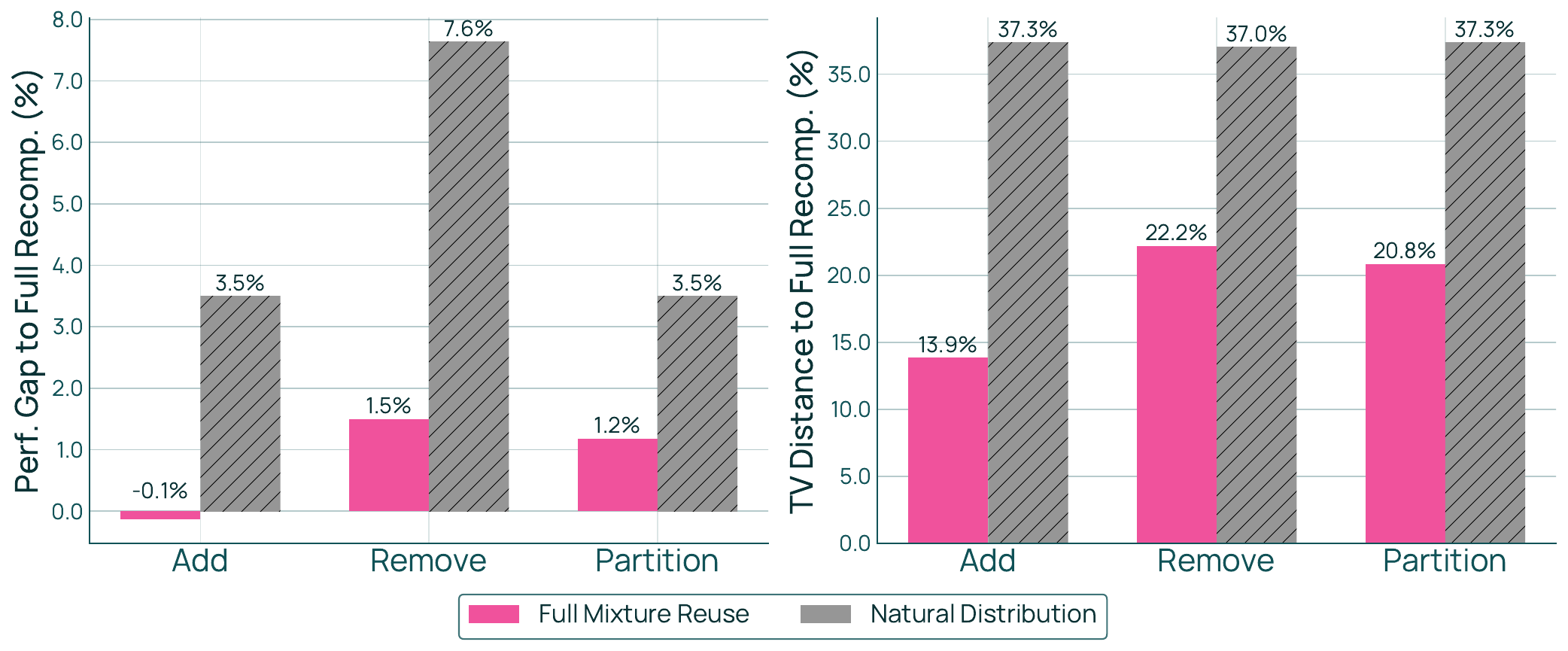}
    \caption{Relative performance gap (left) and total variation distance (right) w.r.t. full recomputation across domain update operators. \fullmixreuse is substantially closer to full recomputation than the natural distribution across all three operators, in both performance and mixes.}
    \label{fig:operators}
\end{figure}

\end{document}